\newtheorem{problem}{Problem}
\newtheorem{theorem}{Theorem}[section]
\newtheorem{definition}{Definition}[section]
\newtheorem{proposition}{Proposition}[section]
\newtheorem{lemma}{Lemma}
\begin{document}

\title{Generalized Semantic Contrastive Learning via Embedding Side Information for Few-Shot Object Detection}

\author{Ruoyu Chen,
    Hua Zhang*,
    Jingzhi Li,
    Li Liu*,~\IEEEmembership{Senior Member,~IEEE},
    Zhen Huang, \\
    and Xiaochun Cao*,~\IEEEmembership{Senior Member,~IEEE}
    \thanks{This work was supported by the National Key R\&D Program of China (Grant No.2022ZD0119200), National Natural Science Foundation of China (No. 62372448, 62306308, 62132006, 62025604), and Shenzhen Science and Technology Program (No. KQTD20221101093559018).}
    \IEEEcompsocitemizethanks{
        \IEEEcompsocthanksitem Ruoyu Chen, Hua Zhang, and Jingzhi Li are with the Institute of Information Engineering, Chinese Academy of Sciences, Beijing 100093, China, and also with the School of Cyber Security, University of Chinese Academy of Sciences, Beijing 100049, China \\(Email: \href{mailto:chenruoyu@iie.ac.cn}{chenruoyu@iie.ac.cn}, \href{mailto:zhanghua@iie.ac.cn}{zhanghua@iie.ac.cn}, \href{mailto:lijingzhi@iie.ac.cn}{lijingzhi@iie.ac.cn}).
        \IEEEcompsocthanksitem Li Liu is with the College of Electronic Science and Technology, National University of Defense Technology, Changsha 430074, China \\(Email: \href{mailto:liuli_nudt@nudt.edu.cn}{liuli\_nudt@nudt.edu.cn}).
        \IEEEcompsocthanksitem Zhen Huang is with the College of Computer, National University of Defense Technology, Changsha 430074, China \\(Email: \href{mailto:huangzhen@nudt.edu.cn}{huangzhen@nudt.edu.cn}).
        \IEEEcompsocthanksitem Xiaochun Cao is with the School of Cyber Science and Technology, Shenzhen Campus of Sun Yat-sen University, Shenzhen 518107, China \\(Email: \href{mailto:caoxiaochun@mail.sysu.edu.cn}{caoxiaochun@mail.sysu.edu.cn}). 
        \IEEEcompsocthanksitem * Corresponding authors
    }
}

\IEEEtitleabstractindextext{%
\begin{abstract}
    \justifying
    The objective of few-shot object detection (FSOD) is to detect novel objects with few training samples. The core challenge of this task is how to construct a generalized feature space for novel categories with limited data on the basis of the base category space, which could adapt the learned detection model to unknown scenarios. Most existing fine-tuning-based approaches tackle the challenge via pre-training a feature extractor based on the base categories and then fine-tuning the detector through the novel categories. However, limited by insufficient samples for novel categories, two issues still exist: (1) the features of the novel category are easily implicitly represented by the features of the base category, leading to inseparable classifier boundaries, (2) novel categories with fewer data are not enough to fully represent the distribution, where the model fine-tuning is prone to overfitting. To address these issues, we introduce the side information to alleviate the negative influences derived from the feature space and sample viewpoints and formulate a novel generalized feature representation learning method for FSOD. Specifically, we first utilize embedding side information to construct a knowledge matrix to quantify the semantic relationship between the base and novel categories. Then, to strengthen the discrimination between semantically similar categories, we further develop contextual semantic supervised contrastive learning which embeds side information. Furthermore, to prevent overfitting problems caused by sparse samples, a side-information guided region-aware masked module is introduced to augment the diversity of samples, which finds and abandons biased information that discriminates between similar categories via counterfactual explanation, and refines the discriminative representation space further. Finally, we theoretically analyze the generalization bound for introducing our proposed module and demonstrate that our proposed model can effectively reduce the upper bound of the generalization error. Extensive experiments using ResNet and ViT backbones on PASCAL VOC, MS COCO, LVIS V1, FSOD-1K, and FSVOD-500 benchmarks demonstrate that our model outperforms the previous state-of-the-art methods, significantly improving the ability of FSOD in most shots/splits. The code is released at \url{https://github.com/RuoyuChen10/CCL-FSOD}.
\end{abstract}

\begin{IEEEkeywords}
Few-shot object detection, semantic supervised contrastive learning, counterfactual explanation, side information
\end{IEEEkeywords}}

\maketitle

\IEEEdisplaynontitleabstractindextext

%
\IEEEpeerreviewmaketitle

\IEEEraisesectionheading{\section{Introduction}\label{sec:introduction}}

%
%
%
%
\IEEEPARstart{O}{bject} detection methods \cite{Li_2022_CVPR, ren2016faster,liu2020deep,han2021context} have achieved substantial progress in recent years. However, their impressive performance is heavily dependent on a large scale of annotated training instances, which limits the scalability of the object detector in data scarcity scenarios. To solve this limitation, few-shot object detection (FSOD) \cite{vu2025multi,zhang2025learning,majee2024smile,han2024few,fan2024fsodv2,wang2024snida} is proposed to detect the novel categories with few labeled instances.

There exist various types of approaches to tackle the problem, including meta-learning \cite{han2024few,fan2024fsodv2,xiao2022few, han2022meta, zhang2022meta}, and transfer learning \cite{zhang2025learning,majee2024smile,wang2020frustratingly, sun2021fsce, qiao2021defrcn, lu2022decoupled}. One of the dominating paradigms for FSOD is the fine-tuning-based method \cite{zhang2025learning,majee2024smile,zhang2022kernelized, qiao2021defrcn}, where the model is first trained on all base categories with abundant instances to learn a feature extractor, and then the novel categories are fine-tuned via freezing the feature extractor.

\begin{figure*}[!t]
    \centering
    \setlength{\abovecaptionskip}{0.cm}
    \includegraphics[width = \textwidth]{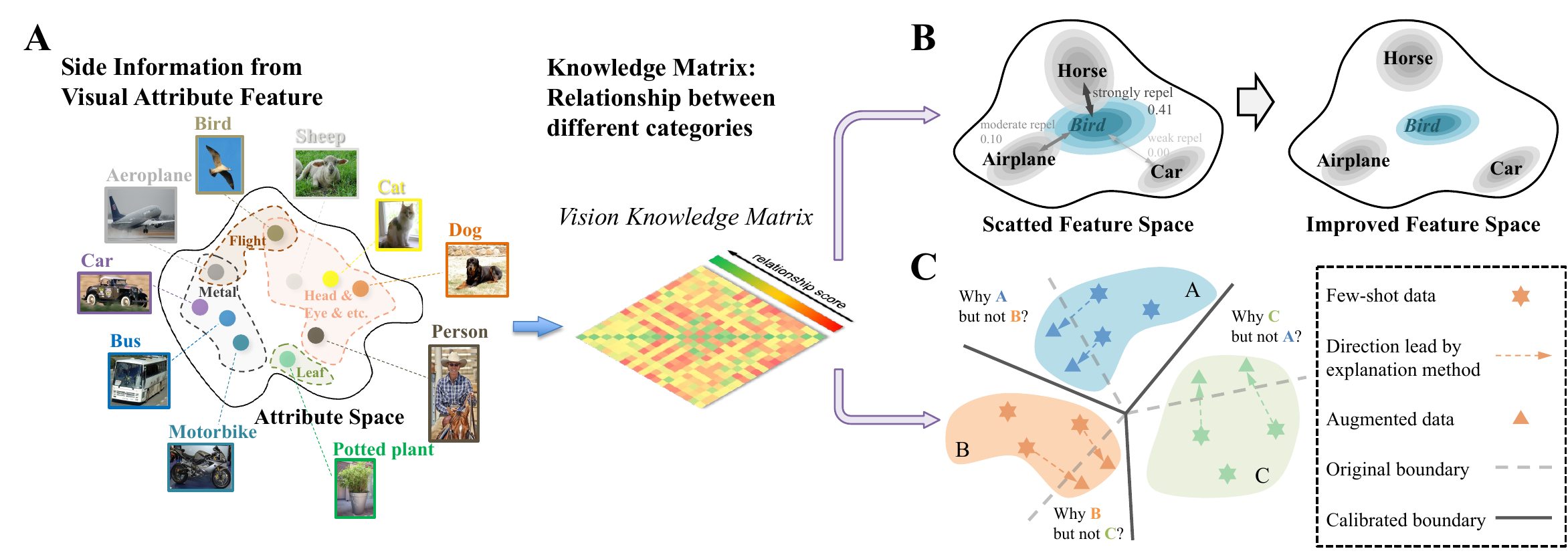}
    \caption{
        \textbf{A.} Motivation of our method. 
        Traditional methods on FSOD only consider the feature representation between distinct categories, which would make the detection model sensitive to the training data distribution. Our model measures the differences between categories with the visual attributes (e.g., head, eye), which could learn the generalizable and discriminative feature representations.
        \textbf{B.} In the fine-tuning stage, the novel category may implicitly utilize the features of multiple base categories for representation, leading to a scatted feature space. Based on the knowledge matrix, contextual semantic supervised contrastive learning is developed to strengthen the space discrimination between semantically similar categories.
        \textbf{C.} Due to the scarcity of few-shot data, the distribution of the novel category cannot be fully represented, resulting in data bias and overfitting. We use the counterfactual explanation method and the masking mechanism to augment the few-shot data so that the mined sample features are closer to the decision boundary and jointly trained to improve the generalization of the model.}
    \label{Knowledge_relationship}
    \vspace{-12 pt}
\end{figure*}

Despite the evident progress of the state-of-the-art models, there still exist several challenges in developing the FSOD models: First, in the fine-tuning stage of the existing method, the feature space is determined by the base categories, the features of the novel category are easily implicitly represented by the features of the base category, leading to biased and inseparable classifier boundaries~\cite{majee2024smile,cao2021few, sun2021fsce}.
The fewer samples of novel categories, the more biased and less uniformly distributed in the feature space.
Second, since the novel category has limited samples, it is insufficient to fully represent the distribution of the novel category. As a result, when iteratively trained on these few-shot data, the model tends to overfit~\cite{yang2021bridging}.
The above issues can be alleviated by introducing prior knowledge~\cite{zhang2021prototype}. However, current few-shot object detectors are usually driven by the visual appearance of the data, where the models would learn the shortcut features via only comparing the visual differences between the novel and base categories \cite{sun2021fsce, lu2022decoupled, xiao2022few}. To solve these limitations, we introduce a useful clue, i.e., the side information, which can measure semantic relationships between categories via exploiting inherent semantic information of the category, e.g., visual attributes and textual description.
Side information is data orthogonal to the input space and output space of the model, and it's helpful for model learning \cite{jonschkowski2015patterns}, which has been successfully applied in many fields, e.g., few-shot classification~\cite{zhang2021prototype, xu2022attribute} and zero-shot object detection~\cite{mao2020zero, rahman2020improved, yan2022semantics}. However, existing work simply transfers the semantic alignment method to the FSOD task~\cite{zhu2021semantic}, which makes it extremely easy to overfit in the few-shot case~\cite{han2022multimodal}. Thus, how to make full use of reasonable side information adapting for the FSOD task needs to be further explored.

To that end, in this paper, we propose a novel generalized feature representation learning method for FSOD, which introduces embedding side information mitigating the feature and sample bias via a few object instances of novel categories. Inspired by that humans can quickly associate distinct categories with semantic attributes, we define the semantic similarity between objects based on their visual attribute representations.
Specifically, we first select representative visual attributes (e.g., head, eye, metal) based on category information and employ visual attributes to obtain semantic representation for each category.
Then, we construct the semantic similarity between different categories, which is represented by a knowledge matrix, as shown in Fig.~\ref{Knowledge_relationship} A. Upon the knowledge matrix, we develop a contextual semantic supervised contrastive learning (CCL) model, which fuses the semantic diversity similarities between categories. In that way, the model will be guided to distinguish semantically similar novel categories from base categories, improving the separability of the feature space, as shown in Fig.~\ref{Knowledge_relationship} B.
Furthermore, we propose a counterfactual data augmentation method to prevent the model from overfitting with few-shot data. By introducing a counterfactual saliency map, it explains why is the ground truth category but not the counter category, where the counter category is selected by the knowledge matrix, i.e., the most semantically similar category. Through the explainable saliency map, the current discriminative region of the model can be found and masked, and the masked feature is joint training to improve the generalization ability of the FSOD model, and will enable CCL to refine the discriminative representation space further, enhancing distinction between similar base and novel categories, as shown in Fig.~\ref{Knowledge_relationship} C.
We also provide a theoretical analysis of the proposed CCL module and counterfactual data augmentation methods on the generalization error bounds and theoretically prove that our method can reduce model generalization errors, providing a theoretical guarantee for the good performance and robustness of these methods.
Extensive experiments conducted on PASCAL VOC~\cite{everingham2010pascal}, MS COCO~\cite{lin2014microsoft}, LVIS V1~\cite{gupta2019lvis}, FSOD-1K~\cite{fan2020few}, and FSVOD-500~\cite{fan2022few_video} benchmarks demonstrate that our model achieves SOTA performance, significantly improving the ability of FSOD in most of the shots/splits. To sum up, our contributions can be summarized as follows:
\begin{itemize}
    \item [1)]
    We introduce prior visual side information into the FSOD task, which can learn the generalized and discriminative representation for novel categories.
    \item [2)]
    We propose a contextual semantic supervised contrastive learning method that encodes the semantic relationship between base and novel categories. The detector can be guided to strengthen the discrimination between semantically similar categories and improve the separability of the feature space.
    \item [3)]
    We propose a side-information guided counterfactual data augmentation mechanism. By utilizing the interpretable saliency map to discover the discriminative region of the model and masking it, the model is forced to learn from other regional features, which prevents overfitting and further refines the discriminative space between similar base and novel categories.
    \item [4)]
    Our model has achieved state-of-the-art performance on the PASCAL VOC, MS COCO, LVIS V1, FSOD-1K, and FSVOD-500 benchmarks, which demonstrates the effectiveness of the proposed framework. We also theoretically prove that our method can reduce model generalization errors.
\end{itemize}

The rest of the paper is organized as follows. We first review related works in Sec.~\ref{related_work}. In Sec.~\ref{sec:method}, we elaborate our method and design the FSOD model through theoretical analysis. The experimental results and visualization analysis are presented in Sec.~\ref{experiment}, which demonstrate the effectiveness of the proposed FSOD method. Finally, we conclude the paper in Sec.~\ref{conclusion}.

\section{Related Work} \label{related_work}

\subsection{Few-Shot Object Detection}

Few-shot object detection (FSOD)~\cite{antonelli2022few, fan2020few, perez2020incremental,bar2022detreg} not only needs to learn the categories information of novel objects from a few instances but also needs to locate the position in the image, which is a difficult task compared with few-shot classification. Existing FSOD methods can be categorized as meta-learning-based \cite{du2023adaptive,fan2024fsodv2,han2024few} and transfer-learning-based \cite{wang2020frustratingly, sun2021fsce, qiao2021defrcn, lu2022decoupled,guirguis2023niff}. Meta-learning-based methods typically train a category-independent detector that facilitates inference using a small number of novel category samples, all without the need for retraining on a novel set~\cite{fan2020few,han2022few,han2022meta,zhang2023detect,han2024few}. Transfer-learning-based methods, primarily through fine-tuning the novel set, demonstrate significant potential in FSOD~\cite{qiao2021defrcn,wu2022multi,xu2023generating,guirguis2023niff}. Meta-learning-based methods also show promise for further improvements with additional fine-tuning~\cite{han2022meta,han2022few}.
However, the fine-tuning process is prone to overfitting, and difficult to construct realistic relationships between categories through learned representations. Furthermore, most of these methods would learn the shortcut features by only comparing the visual differences between the novel and base categories, without explicitly considering the side information.

Several studies employ side information to enhance the performance of few-shot object detection, broadly categorizing these efforts into the following groups. \textbf{Alignment-based:} Several methods maintain semantic consistency between visual features and text throughout the model learning process by incorporating text as side information.
SRR-FSD \cite{zhu2021semantic} uses word embeddings (Word2Vec~\cite{mikolov2013distributed}) as side information to align visual features with word embeddings. KD~\cite{pei2022few} introduces Bag-of-Visual-Words (PPC~\cite{xie2021propagate}) and employs knowledge distillation to preserve the consistency between text category relationships and visual category relationships. MM-FSOD~\cite{han2022multimodal} integrates the language features derived from CLIP~\cite{radford2021learning} with FSOD's visual features and applies prompt tuning to the language encoder to enhance alignment.
However, text-based semantic spaces usually possess inherent noise~\cite{rahman2020improved}, which may affect reasoning the visual relationships between categories. Furthermore, for novel categories, the iteration of visual-text alignment is limited, thus it is difficult to learn discriminative representations between the base and novel categories.
\textbf{Association-based:}
FADI~\cite{cao2021few} identifies the base category most similar to each novel category using WordNet~\cite{miller1995wordnet} and Lin Similarity~\cite{lin1998information}, establishes a one-to-one association between them, and subsequently implements intra-class discrimination. However, this approach encounters limitations when multiple novel categories heavily rely on the same base class. Moreover, it is not suitable for fine-tuning datasets composed solely of novel categories.
\textbf{Calibration-based:}
UA-RPN~\cite{fan2022few} leverages the ImageNet classification dataset~\cite{deng2009imagenet} to de-bias the detector. It effectively solves the problem of reduced novel class detection capability during the fine-tuning stage, arising from the absence of novel class annotations in the base training stage. However, incorporating additional datasets incurs higher training costs.
\textbf{Generation-based:}
Norm-VAE~\cite{xu2023generating} proposes a VAE-based feature generation model. By training this VAE model on the base set, conditioned on the Word2Vec semantic information of each category, and then generating features for novel categories. However, the domain gap between text and visual spaces, coupled with biases in the distributions of novel features, may lead to deviations in the generated features for novel classes from their actual features.
SNIDA~\cite{wang2024snida} employs saliency detection to extract and fuse foreground objects into diverse backgrounds. It then utilizes CLIP’s text semantics to guide a sparse encoder in reconstructing randomly masked images, enhancing sample diversity. However, this data generation method is cumbersome, and the data augmentation process lacks interpretability.
\textbf{Metric-based:}
DeFRCN~\cite{qiao2021defrcn} employs a model pre-trained on ImageNet to create class prototypes. These prototypes are then compared to the features extracted by the detector, which are integrated into the class prediction scores using a weighting method during inference. However, there may be a domain gap between the features of offline prototypes and the detector model.
\textbf{Additional pretrained model-based:} FM-FSOD~\cite{han2024few} leveraged a pre-trained large language model for few-shot object detection, carefully designing language instructions to prompt the LLM to classify each proposal in context. However, it can be sensitive to parameters, especially when data is scarce.

In this paper, we develop a novel few-shot object detector, which first learns the debiased feature representation by introducing object prototypes, and then fuses the diverse semantic similarities between categories based on visual attribute side information to achieve robust and discriminative representations. During training, the detector strengthens the discrimination among semantically similar categories. We also propose a counterfactual data augmentation method to mitigate over-fitting problems in the discrimination process. The counterfactual saliency map can identify and mask the current discriminative region of the model, guiding it to learn to discriminate novel categories using other regional features, thus preventing potential bias in the model.

Our method has the following advantages compared with other methods that introduce side information. Unlike the alignment-based methods, our approach concentrates on distinguishing features between categories that are easily confused. This strategy circumvents the issue of inference errors stemming from the domain gap between visual and semantic spaces. In contrast with the association-based method, our approach overcomes the limitation of being unable to effectively manage multiple novel categories closely related to a single base category. Furthermore, it is applicable for fine-tuning scenarios that involve exclusively novel categories. Distinct from calibration-based methods, our approach depends solely on the relationship between categories, eliminating the need for additional datasets to facilitate calibration. Compared with generation-based methods, our approach interprets the learning bias present in the current training process and leverages it to generate reliable data that correct model bias. Differing from metric-based methods, our approach is free from the domain gap issue and eliminates the necessity for side information in the reasoning process.

\subsection{Contrastive Learning}

Contrastive learning has been effectively studied in self-supervised tasks, and its core idea is to attract positive samples and repel negative samples \cite{khosla2020supervised, chen2020simple, he2020momentum, li2022twin}. Some supervised contrastive learning methods select positive and negative samples by labels \cite{khosla2020supervised}. Due to the richness of the samples, they achieved good feature representation. In some unsupervised contrastive learning, the discovered semantic structures are encoded into the learned embedding space through prototypes derived from the clustering method~\cite{li2022twin}. Since the labels are known in the few-shot problem, we believe that clustering each class feature leads to a more representative prototype. In addition, how to select negative samples is also an important problem in contrastive learning. 
Li \textit{et al.} \cite{li2022twin} generate pseudo-labels for unlabeled data by clustering and select negative samples by clustering confidence. However, these methods are mainly for unsupervised tasks.

We propose a novel contrastive learning with side information for FSOD, which first discovers a set of discriminative prototypes for each category. After that, we introduce a novel contextual semantic supervised contrastive learning, which fuses the semantic diversity similarities between categories to strengthen the space discrimination between semantically similar categories.

\begin{figure*}[]
    \centering
    \setlength{\abovecaptionskip}{0.cm}
    \includegraphics[width = \textwidth]{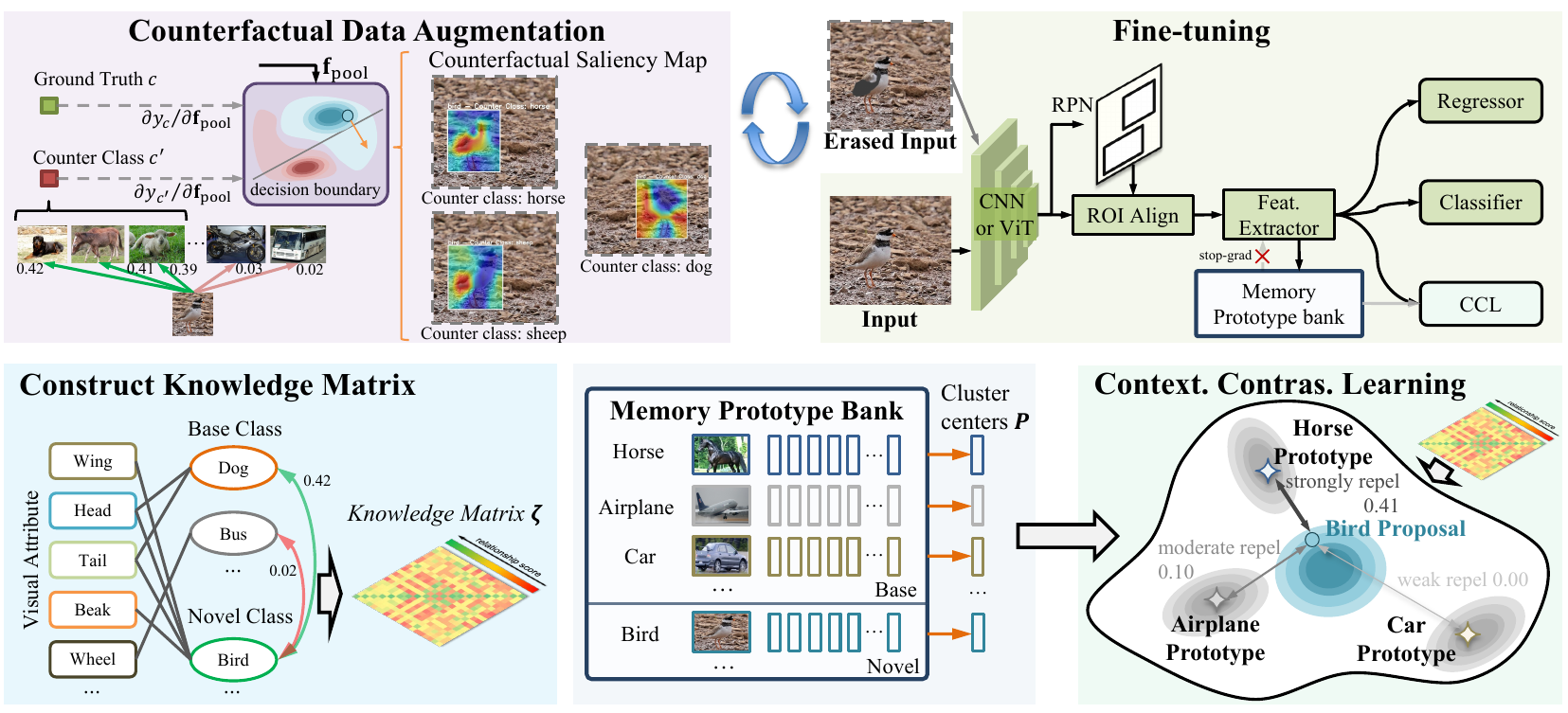}
    \caption{
        An overview of our few-shot object detection fine-tuning method. We first measure the similarity between the base category and the novel category using visual attributes, and represent it by constructing a knowledge matrix.
        During fine-tuning, the memory prototype bank continuously stores the complete features of all the categories. The proposed model leverages the Contextual Semantic Supervised Contrastive Learning (CCL) module and a knowledge matrix to learn generalized representations and improve discriminativeness. Specifically, the CCL module strengthens the distinction between proposal features and specific prototype categories, while the knowledge matrix enables the model to incorporate semantic relations between categories into its representations. The input image has a certain probability of data augmentation. Partially semantically similar counter categories are selected for the current category via the knowledge matrix, and saliency maps of region images are computed via counterfactual explanation. Under a threshold, the original image is erased according to the saliency map. The erased images aid in training the detector to improve its generalization capability and reduce the learning bias. The memory prototype bank is not updated with erased features.
    }
    \label{Framework}
    \vspace{-10pt}
\end{figure*}

\subsection{Object Recognition with Side Information}

Side information is data orthogonal to the input space and output space of the model, but it's helpful for model learning \cite{jonschkowski2015patterns}. Introducing side information in visual object recognition tasks has been well studied, such as few-shot classification \cite{zhang2021prototype, xu2022attribute}, zero-shot classification \cite{wan2021visual, han2022semantic}, and zero-shot object detection \cite{mao2020zero, rahman2020improved, yan2022semantics}. The side information is usually the semantic representation of objects in terms of visual attributes \cite{zhang2021prototype, zhu2019zero, mao2020zero} or word embeddings \cite{zheng2021visual}, 
word embeddings usually have unavoidable noise, resulting in often unstable visual-semantic relationships \cite{rahman2020improved}.
Visual attributes are visual components shared across categories \cite{feris2017visual}, and some works attempt to introduce visual attributes as side information. Some low-shot learning works extract attribute representations of objects through pre-trained attribute classifiers, and then infer categories based on the attribute representations \cite{lampert2013attribute, jayaraman2014zero}. 
In this paper, we introduce visual attributes as side information for the FSOD task. We obtain prior relationships between categories through visual attributes and strengthen the model's discrimination of similar objects during training. To the best of our knowledge, little work has been done to introduce semantic information in the FSOD task, and we believe our method will suggest a new and interesting direction for FSOD.

\subsection{Visual Counterfactual Explanation}

Visual counterfactual explanation explains why images belong to category A but not category B~\cite{goyal2019counterfactual}. Some methods explain the model by finding the smallest perturbation that changes the model's decision boundary~\cite{goyal2019counterfactual, dhurandhar2018explanations,chen2023less,chen2024interpreting}. However, such methods are time-consuming, and the generated perturbations are outside the boundaries of the natural image space. Some methods explain models through counterfactual attribution maps~\cite{wang2020scout, chen2022sim2word}, however, they all require additional uncertainty or attribute encoders, which can introduce biases in the estimation of novel categories in few-shot data.
In this paper, we propose a counterfactual explanation-based data augmentation method for object detection models. Our method can not only be used to explain the model but also can be used to improve the performance of FSOD. We use side information to select semantically similar counter classes, the current discriminative region of the model is discovered in the form of a saliency map and masked, guiding the model to learn discriminating novel categories through other regional features, improving the generalization of the FSOD model.

\subsection{Explanation-guided Data Augmentation}

Geirhos \textit{et al.}~\cite{geirhos2020shortcut} introduced the concept of shortcut learning, highlighting that machines may rely on shortcuts rather than capturing the underlying essence. This tendency is largely attributed to biases in data distribution and other factors. Enhancing model performance through explanation-guided data augmentation offers a way to mitigate shortcut learning without requiring additional annotation~\cite{gao2024going}. Ismail \textit{et al.} \cite{ismail2021improving} augments data that emphasizes important regions in the attribution map of the ground truth category while suppressing irrelevant features during model training. In contrast, Xiao \textit{et al.} \cite{xiao2023masked} aim to fine-tune the model by masking out highlighted regions in the attribution map of the ground truth category, encouraging it to learn a broader range of features. These methods are all aimed at conventional training task scenarios~\cite{ismail2021improving,xiao2023masked}. In this paper, we address the potential overfitting problem among similar categories in few-shot object detection. To mitigate this issue, we design a counterfactual explanation to identify fine-grained dependency regions that distinguish specific category pairs. Utilizing a masking mechanism, we aim to prevent the model from relying on specific features, encouraging it to learn more generalized representations. This, in turn, enhances both model performance and generalization ability.

\section{Method}
\label{sec:method}

In this section, we provide an overview of the proposed method for generalized few-shot object detection. We begin by introducing the problem setting and traditional two-stage fine-tuning framework (Sec.~\ref{problem_def}). Next, we introduce the proposed contextual semantic supervised contrastive learning method with theoretical analysis (Sec.~\ref{thero_analysis}). Then, we discuss some methods for constructing semantic relations between categories via side information (Sec.~\ref{ex_know}). After that, we introduce how to estimate and update prototypes (Sec.~\ref{bank}) and incorporate semantic relations with contrastive learning into FSOD detectors (Sec.~\ref{KGC}). Additionally, we introduce a counterfactual data augmentation method (Sec.~\ref{counterfactual}). Finally, the overall learning objective of our model is introduced (Sec.~\ref{object}).
Fig. \ref{Framework} shows an overview of our model.

\subsection{Preliminaries}
\label{problem_def}

A classical method of FSOD is the two-stage training strategy \cite{wang2020frustratingly, sun2021fsce}. Specifically, it first trains the backbone with abundant labeled instances of base set $\mathcal{D}_{{\rm base}}$, which consists of the base categories $\mathcal{C}_{{\rm base}}$. The goal of this step is to learn a series of transferable features from the base set. Then, the model is adapted to the novel set $\mathcal{D}_{{\rm novel}}$, which consists of novel categories $\mathcal{C}_{{\rm novel}}$ with very few annotated instances, where $\mathcal{C}_{{\rm base}} \cap \mathcal{C}_{{\rm novel}} = \varnothing$. Moreover, a few-shot object detector is trained on $\mathcal{D}_{{\rm base}} \cup \mathcal{D}_{{\rm novel}}$, where there are $k$ annotated instances for each category in $\mathcal{C}_{{\rm base}} \cup \mathcal{C}_{{\rm novel}}$ named $k$-shot detection. This is also called generalized few-shot object detection.
To prevent over-fitting in the fine-tuning stage, there would be components of the model that will be frozen in some modules of the framework. E.g., TFA \cite{wang2020frustratingly} freezes all parameters except regression and classification heads, and FSCE \cite{sun2021fsce} only freezes the parameters of the backbone and RoI pooling layer.

\subsection{Contextual Semantic Supervised Contrastive Learning}\label{thero_analysis}

We propose a simple and effective contrastive learning method for few-shot object detection. In detail, we build relationships between categories with side information for weighing negative samples in contrastive learning.


\subsubsection{Revisiting FSCE}

Given a novel dataset $\mathcal{D}_{\text{novel}}=\{(\mathcal{I}, \mathbf{c}, \mathbf{x}) \}$, where $\mathbf{c}$ and $\mathbf{x}$ represent the object category and location in the image, respectively. We desire that the detector can learn generalized and discriminative feature representations, $\phi_{\theta} : \mathcal{I} \rightarrow \mathbf{f}$, using mapping parameters $\theta$. In Faster R-CNN, $\phi$ is mainly composed of FPN, backbone, RPN, and ROI pooling layer, and $\theta$ is the parameter of $\phi$.
FSCE \cite{sun2021fsce} is the first method to introduce contrastive learning into the fine-tuning stage for few-shot object detection. Specifically, FSCE includes one more contrastive loss function than the standard Faster R-CNN. FSCE utilizes a contrastive head $h$ to encode RoI
feature of region proposals $\mathbf{f}$ to a feature space, $h : \mathbf{f} \rightarrow \boldsymbol{F} \in \mathbb{R}^{d}$, with $C$ categories.
Through CPE (Contrastive Proposal Encoding) loss function, make the features of the same category are compacted, and the features of different categories are contrasted, and it is wished to address the following problem.

\begin{problem}\label{cpe}
    Find the parameters $\theta$ of the mapping function $\phi_{\theta} : \mathcal{I} \rightarrow \mathbf{f}$, using a novel set $\mathcal{D}_{\text{novel}}$, to make the feature representations of the same category more similar, and the feature representations of different categories are farther apart. Such that, for any feature proposals $\mathbf{f}$ and a contrastive head, $h : \mathbf{f} \rightarrow \boldsymbol{F} \in \mathbb{R}^{d}$,
    \begin{equation}
        \mathcal{L}_{\boldsymbol{F}_i} = \sum_{j=1, j\ne i}^{N}\mathbb{I}_{y_i^f=y_j^f} \log \! \left ( \frac{\exp{(\boldsymbol{F}_i \cdot \boldsymbol{F}_j}/ \tau)}{\sum_{k=1,k \ne i}^{N} \exp{(\boldsymbol{F}_i \cdot \boldsymbol{F}_k}/ \tau)} \! \right ),
    \end{equation}
    \begin{equation}
        \min_{\theta} \mathcal{L}_{CPE}, \quad \mathcal{L}_{CPE} = -\frac{1}{N}\sum_{i=1}^{N} u_i  \frac{\mathcal{L}_{\boldsymbol{F}_i}}{N_{y_i^f}-1},
    \end{equation}
where $y_i^f$ and $u_i$ are the category label and IoU score of feature $\boldsymbol{F}_i$, $N_{y_i^f}$ denotes the number of region proposals with the same category label as $y^f_i$, and $\tau$ is the hyper-parameter temperature.
\end{problem}

Some issues arise when using the formula in Problem \ref{cpe} based on few-shot object detection.

\textbf{Issue of feature representation:} Since the proposals of FSCE are all regions predicted by the network, there may be cases where the proposals do not contain complete features, which may lead to biased learning. Although FSCE introduces IoU scores $u$ for weighting positive samples, it is still possible for the model to learn biased features. Therefore, it is very important to choose an unbiased prototype for contrastive learning.

\textbf{Issue of negative pairs weight:} In FSCE, the weight of the CPE loss generated by each negative pair to the model is the same. In fact, since a specific novel category is quite different from some base categories (e.g., motorcycles and cats), it is difficult to confuse them in the fine-tuning stage. We try to leverage the side information to enhance the discrimination between similar base categories and novel categories in the fine-tuning stage.

\textbf{Issue of contrasted category limits:} Since FSCE performs contrastive learning on the proposals of each category, the total number of categories in one iteration of the fine-tuning stage is limited by the batch size. This may result in some similar base categories and novel categories not both appearing in an iteration. We try to build a representative prototype for each category in the fine-tuning stage, ensuring that prototypes of all categories are accessible at each iteration, and strengthening the distinction between the current proposal features and all categories.

\subsubsection{Learning with Side Information}

In order to solve the above issues, we propose a novel prototypical-based contrastive learning method. We utilize a complete prototype feature representation for each object category, denoted as $\mathcal{P} = \{\boldsymbol{P}\}$, with $C$ categories. We learn by contrasting the feature $\boldsymbol{F} \in \mathbb{R}^{d}$ with the unbiased prototype feature $\boldsymbol{P} \in \mathbb{R}^{d}$. In this setup, we can address the following problem.

\begin{problem}\label{p2}
    Find the parameters $\theta$ of the mapping function $\phi_{\theta} : \mathcal{I} \rightarrow \mathbf{f}$, using a novel set $\mathcal{D}_{\text{novel}}$. Given each category prototype feature representation $\mathcal{P} = \{\boldsymbol{P}\}$,
    \begin{equation}
        \min_{\theta} {-\sum_{i=1}^{C} \sum_{j=1}^{N} \mathbb{I}_{y_i^p=y_j^f} \log \left( \frac{\exp{(\boldsymbol{P}_i \cdot \boldsymbol{F}_j / \tau)}}{\sum_{k=1}^{N} \exp{(\boldsymbol{P}_i \cdot \boldsymbol{F}_k / \tau)}} \right)},
    \end{equation}
    where $y^{p}$ is the category label of prototype feature $\boldsymbol{P}$, $y^{f}$ is the category label of feature $\boldsymbol{F}$. This first address the concerns about feature representation and contrasted category limits.
\end{problem}

Next, we consider how to weigh negative samples. We consider the weight of negative samples from the perspective of category relationships. Let $\boldsymbol{\zeta} \in \mathbb{R}^{C \times C}$ be a knowledge matrix constructed from side information, which is used to represent the visual similarity between categories. We proposed contextual supervised contrastive learning with side information, which can strengthen the model's discrimination of similar categories.

\begin{definition}[Contextual semantic supervised contrastive learning with side information]
    Given prototype representations $\mathcal{P} = \{\mathbf{P}\}$ of each category and a prior knowledge matrix $\boldsymbol{\zeta} \in \mathbb{R}^{C \times C}$. The contextual semantic supervised contrastive learning is,
    \begin{equation}
        \mathcal{L} = -\sum_{i=1}^{C} \sum_{j=1}^{N} \mathbb{I}_{y_i^p=y_j^f} \log \left( \frac{\exp{(\boldsymbol{P}_i \cdot \boldsymbol{F}_j / \tau)}}{\sum_{k=1}^{N} \exp{(\boldsymbol{\zeta}_{y_i^p, y_k^f}  \boldsymbol{P}_i \cdot \boldsymbol{F}_k / \tau)}} \right),
    \end{equation}
    when the categories $y^p$ and $y^f$ of  $\boldsymbol{P}$ and $\boldsymbol{F}$ are visually similar, the value of the corresponding position of the knowledge matrix $\boldsymbol{\zeta}_{y^p, y^f}$ is larger. This can strengthen the weight of negative samples between similar categories.
\end{definition}

Now, we address the issue of weighing negative samples in Problem \ref{p2}. Next, we describe in Sec.~\ref{ex_know} how to build category relations as a prior knowledge matrix $\boldsymbol{\zeta}$ for negative sample weighing. Then, in Sec.~\ref{bank}, we introduce how to estimate and update the prototype representations $\mathcal{P} = \{\boldsymbol{P}\}$ of categories in our FSOD model. Finally, in Sec.~\ref{KGC}, we describe the contextual semantic supervised contrastive learning branch in FSOD.

\begin{figure}[!t]
    \centering
    \setlength{\abovecaptionskip}{0.cm}
    \includegraphics[width = 0.48 \textwidth]{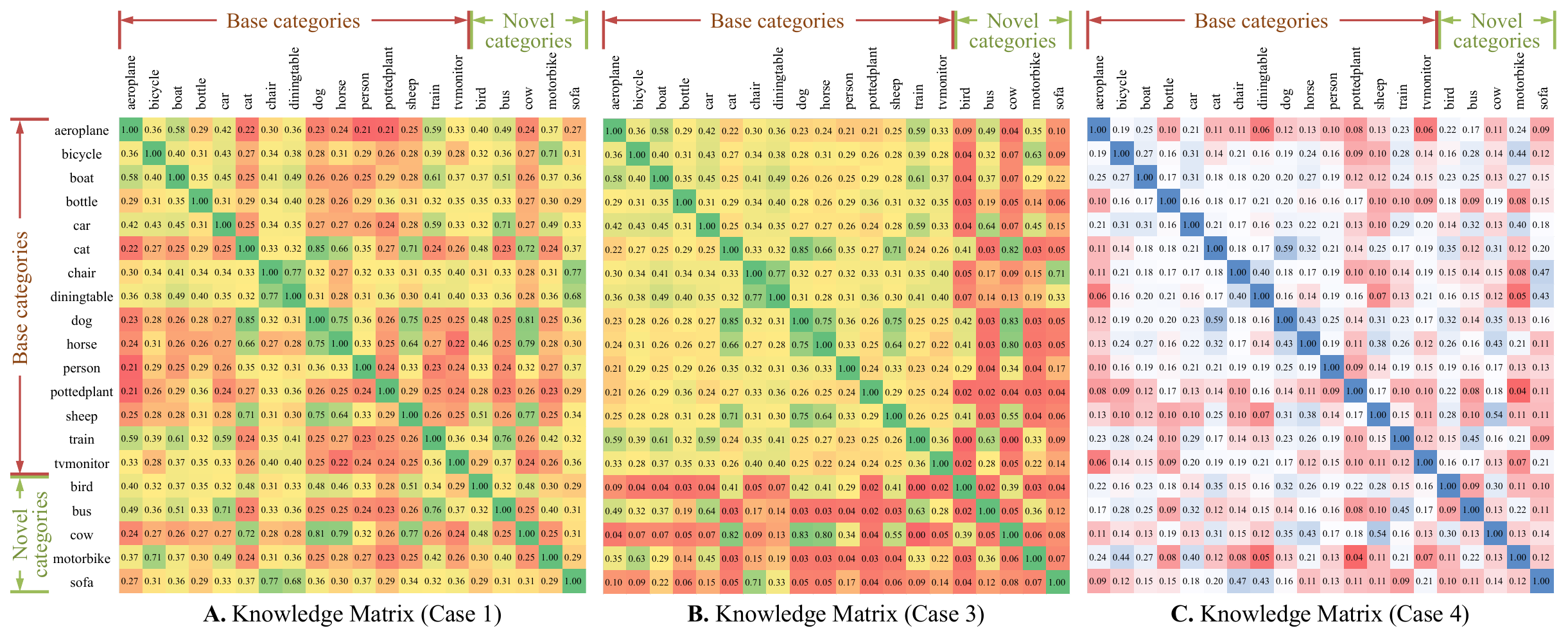}
    \caption{Knowledge matrix for different cases. Relationships between categories were built through semantic information from different side information.}
    \label{knowledge}
\end{figure}

\subsection{Build Semantic Relationship through Side Information}\label{ex_know}

During the FSOD training process, it is easy to confuse similar base categories with novel categories. We expect to know which base categories are likely to be more similar to the novel categories through side information, so as to strengthen their distinction during FSOD training.
To capture the semantic relationship between the base and novel categories, we build a knowledge matrix to measure their visual semantic similarities.
\begin{definition}[Knowledge Matrix] For $C$ categories, the knowledge matrix is denoted as $\boldsymbol{\zeta} \in \mathbb{R}^{C \times C}$. For any categories $c_1$ and $c_2$, where $0<c1\le C, 0<c_2 \le C$, the value of $\boldsymbol{\zeta}_{(c_1,c_2)}$ measures the semantic similarity of two categories, and the value range is [0,1]. The knowledge matrix $\boldsymbol{\zeta}$ is a symmetric matrix, i.e., $\boldsymbol{\zeta}_{(c_1,c_2)} = \boldsymbol{\zeta}_{(c_2,c_1)}$. The value of the main diagonal of the knowledge matrix is $1$, i.e., $\boldsymbol{\zeta}_{(c_1,c_1)} = 1$.
\end{definition}
Through visual attributes or text embeddings, we can obtain the prior relationship between the novel categories and base categories. We discuss several ways to construct category relationships through visual attributes or text embeddings.

\subsubsection{Visual Attributes}
Attributes can be represented as a set of distinctive visual features of object components~\cite{banik2018multi}, which have been effectively applied across various domains. To achieve the visual attribute representation, we can consider a traditional visual attribute prediction model, which is trained with the annotated instances at the image level~\cite{farhadi2009describing, everingham2010pascal}. Following the traditional multi-label model, we can obtain our visual attribute prediction model, which can predict with predefined attributes.
For each category, we select images in the attribute dataset and utilize the attribute model to extract their feature embedding, and then perform $K$-means clustering operations on these feature embeddings. The cluster center of category $c$ is denoted as $\boldsymbol{F}^{\text{Cluster}}_c \in \mathbb{R}^{K \times d}$, and the similarity $S^{embedding}_{(c_1, c_2)}$ between category $c_1$ and category $c_2$ is as follow,
\begin{equation}\label{embedding_similarity}
    S^{embedding}_{(c_1, c_2)} = \frac{1}{K^2} \sum_{i=1}^{K}\sum_{j=1}^{K}\frac{\boldsymbol{F}^{\text{Cluster}}_{c_1, i} \cdot \boldsymbol{F}^{\text{Cluster}}_{c_2, j}}{\left \| \boldsymbol{F}^{\text{Cluster}}_{c_1, i} \right \| \left \| \boldsymbol{F}^{\text{Cluster}}_{c_2, j} \right \|}.
\end{equation}

However, the attribute prediction model is trained on the base set, there may be attributes in the novel category that are unknown in the base category, and the features of the attribute encoder of the base category may not be applicable to the novel category~\cite{xu2022attribute}. Therefore, we use attribute labels for estimating the semantic similarity between novel categories and other categories. This involves specifying a set of attributes and determining beforehand whether the target category includes these attributes.

The attribute labels are represented as a binary vector $\boldsymbol{L}^{\text{attribute}} \in \mathbb{R}^{N_A}$, where $N_A$ is the number of visual attributes, and the value of each dimension is either 1 or 0, indicating the presence or absence of the corresponding attribute in the object. For categories $c_1$ and $c_2$, we compute their visual semantic similarity $S^{label}_{(c_1, c_2)}$:
\begin{equation}\label{label_similarity}
    S^{label}_{(c_1, c_2)} = \frac{\boldsymbol{L}^{\text{attribute}}_{c_1} \cdot \boldsymbol{L}^{\text{attribute}}_{c_2}}{\left \| \boldsymbol{L}^{\text{attribute}}_{c_1} \right \| \left \| \boldsymbol{L}^{\text{attribute}}_{c_2} \right \|}.
\end{equation}

Next, we discuss how to construct knowledge matrix $\boldsymbol{\zeta}$, which can be divided into three cases:
\begin{itemize}
    \item Case 1:
    For category $c_1$ and category $c_2$, $\boldsymbol{\zeta}_{(c_1,c_2)} = S^{embedding}_{(c_1, c_2)}$.
    \item Case 2:
    For category $c_1$ and category $c_2$, $\boldsymbol{\zeta}_{(c_1,c_2)} = S^{label}_{(c_1, c_2)}$.
    \item Case 3:
    When both categories $c_1$ and $c_2$ belong to the base category, $\boldsymbol{\zeta}_{(c_1,c_2)} = S^{embedding}_{(c_1, c_2)}$, otherwise, $\boldsymbol{\zeta}_{(c_1,c_2)} = S^{label}_{(c_1, c_2)}$.
\end{itemize}

The attribute feature can capture additional information beyond the given attributes, leading to a better evaluation of the semantic relationship between base classes. However, it may not represent the novel categories accurately. This paper uses Case 3 to construct the knowledge matrix, this approach considers all the attributes of the categories, including those that may not be known or discovered yet, and thus it avoids the potential bias introduced by predicting unknown attributes.

\textbf{Discussion:} Existing large language models can also serve as a source of prior attribute discrimination for specific categories~\cite{menon2023visual}. Since the motivation of this paper is to leverage prior knowledge for representation enhancement, we assume that obtaining prior attributes for specific categories is feasible. In practical applications, even in the absence of specific dataset annotations, representative attributes can still be identified using large language models to infer category relationships and enhance the quality of the learned representation space.

\subsubsection{Text Embeddings} We also tried to construct the knowledge matrix through text embeddings. The similarity of two words usually reflects the frequency of their co-occurrence in a sentence. To achieve the similarity between categories by text embedding. We adopt the word embedding method GloVe \cite{pennington2014glove} to map the word into a $300$-dimension vector denoted as $\boldsymbol{F}^{\text{GloVe}} \in \mathbb{R}^{300}$. Then, for categories $c_1$ and $c_2$, the similarity $S^{cword}_{(c_1, c_2)}$ between them is,
\begin{equation}\label{cword_similarity}
    S^{cword}_{(c_1, c_2)} = \frac{\boldsymbol{F}^{\text{GloVe}}_{c_1} \cdot \boldsymbol{F}^{\text{GloVe}}_{c_2}}{\left \| \boldsymbol{F}^{\text{GloVe}}_{c_1} \right \| \left \| \boldsymbol{F}^{\text{GloVe}}_{c_2} \right \|}.
\end{equation}

Moreover, the similarity between categories can also be measured by attribute words. The vector $\boldsymbol{F}^\text{AW}_{c} \in \mathbb{R}^{(64 \times 300)}$ indicates the feature of category $c$.
If the category $c$ contains the $i$-th attribute ${\rm attr}_i$, then $\boldsymbol{F}^\text{AW}_{c, i} = \boldsymbol{F}^{\text{GloVe}}_{{\rm attr}_i}$, otherwise, we use "null" to represent the $i$-th embedding, $\boldsymbol{F}^\text{AW}_{c, i} = \boldsymbol{F}^{\text{GloVe}}_{{\rm null}}$.
For categories $c_1$ and $c_2$, the similarity $S^{aword}_{(c_1, c_2)}$ between them is,
\begin{equation}\label{aword_similarity}
    S^{aword}_{(c_1, c_2)} = \frac{\boldsymbol{F}^\text{AW}_{c_1} \cdot \boldsymbol{F}^\text{AW}_{c_2}}{\left \| \boldsymbol{F}^\text{AW}_{c_1} \right \| \left \| \boldsymbol{F}^\text{AW}_{c_2} \right \|}.
\end{equation}

Then, we have two cases of obtaining knowledge matrix $\boldsymbol{\zeta}$ through text embedding:

\begin{itemize}
    \item Case 4:
    Based on category words, for category $c_1$ and category $c_2$, $\boldsymbol{\zeta}_{(c_1,c_2)} = S^{cword}_{(c_1, c_2)}$.
    \item Case 5:
    Based on attribute words, for category $c_1$ and category $c_2$, $\boldsymbol{\zeta}_{(c_1,c_2)} = S^{aword}_{(c_1, c_2)}$.
\end{itemize}

\textbf{Discussion:} Text embeddings are widely used in zero-shot object detection~\cite{yan2022semantics}, where misalignment between textual and visual spaces is a common challenge. Our method does not rely on enforcing this alignment. Instead, it prioritizes enhancing representation discrimination among visually similar novel classes. This design ensures that even if certain textual relationships are imperfect, they still contribute positively to model performance.

Fig. \ref{knowledge} shows the constructed knowledge matrix for the three cases. The knowledge matrix is used to represent the visual-semantic similarity between categories, which is established and fixed before FSOD training.


\subsection{Memory Prototype Bank} \label{bank}
To estimate the unbiased prototypical representation in Problem \ref{p2}, we construct representative prototypes for each category.
To build a representative prototype for each category, we use a memory prototype bank to store the feature embeddings extracted by the current detector parameters as shown in the lower part of the middle of Fig. \ref{Framework}.
The memory prototype bank stores prototypes for all the categories $\boldsymbol{P}^{M} = \left \{ \boldsymbol{P}^{M}_1,  \boldsymbol{P}^{M}_2, \cdots,  \boldsymbol{P}^{M}_C \right \}$, where $C$ is the number of categories in $\mathcal{C}_{{\rm base}} \cup \mathcal{C}_{{\rm novel}}$. For category $c$, the memory prototype bank is denoted as $\boldsymbol{P}^{M}_c = \left \{\boldsymbol{P}^{M}_{c,1}, \boldsymbol{P}^{M}_{c,2}, \cdots, \boldsymbol{P}^{M}_{c,N_c^{M}} \right \}$, where $N_c^{M}$ denotes the number of current proposal embeddings. Note that for the $n$-th prototype embedding $\boldsymbol{P}^{M}_{c,n} \in \mathbb{R}^{d}$ of category $c$, we use the ground truth box location to crop the global feature extracted by the backbone layer, and then feed it into the RoI Pooling layer to obtain the feature representation.

For each category $c$, the memory prototype bank $\boldsymbol{P}^{M}_c$ accumulates from scratch ($N_c^M=0$) and continuously stores the ground truth prototypes. We set the upper limit of the memory prototype bank as $N_c^M=2k$ (related to $k$-shot). When reaching the upper limitation, the oldest prototype feature will be replaced by a new prototype feature.

\subsection{Contextual Semantic Supervised Contrastive Learning Branch} \label{KGC}
Our work uses the knowledge matrix built from side information to weight the coefficients of negative pairs in contrastive learning, as shown in the bottom right corner of Fig. \ref{Framework}. We use a learnable linear transformation layer to map the feature representation of the object proposals and prototypes into a common feature space. The common feature space is denoted as $\boldsymbol{F} \in \mathbb{R}^{N \times 128}$, and their corresponding category labels are denoted as $y^f$.
For the prototype feature in the memory bank, we use $K$-means clustering to obtain cluster centers of each category as the new prototype feature,  which is denoted as $\boldsymbol{P} \in \mathbb{R}^{N^k C \times 128}$, where $N^k$ denotes the number of cluster centers, and their corresponding category labels are denoted as $y^p$. In practice, we set the number of cluster centers to 1, so that we only need to calculate the mean of the prototype embeddings of each category.
Thus, the contextual semantic supervised contrastive learning loss $\mathcal{L}_{CCL}$ is defined as:
\begin{equation}\label{cr_loss}
    \mathcal{L}_{CCL} = -\frac{1}{N}\sum_{i=1}^{N}\log \left ( \frac{\sum_{j=1}^{N_k \times C}\mathbb{I}_{y_i^f=y_j^p} \exp\left(\boldsymbol{F}_i \cdot \boldsymbol{P}_j/\tau \right)}{\sum_{j=1}^{N_k \times C} \exp \left(\boldsymbol{\zeta}_{y_i^f,y_j^p} \boldsymbol{F}_i \cdot \boldsymbol{P}_j/\tau\right)} \right ),
\end{equation}
where $N$ denotes the number of proposal features, $\boldsymbol{F}_i$ is the $i$-th proposal embedding after mapping, $\boldsymbol{P}_j$ denotes the $j$-th prototype embedding of one category after the $K$-means operation, $\tau$ is the hyper-parameter temperature. Note that the gradient stops at the prototype feature $\boldsymbol{P}$ and only goes back through $\boldsymbol{F}$.

\subsection{Counterfactual Data Augmentation}\label{counterfactual}

To address the issue of bias learning in few-shot object detection training, we propose a novel method that employs a knowledge matrix and counterfactual explanation to guide data augmentation, as shown in the upper left of Fig. \ref{Framework}.
Given an input image with annotation $(\mathcal{I}, \mathbf{c}, \mathbf{x})$, where $\mathbf{c}$ and $\mathbf{x}$ represent the object category and location in the image. First, to calculate the forward propagation of the detection network, we do not use the RPN network to calculate the prediction location, but directly use the annotated location in the ROI pooling layer, and calculate the feature map $\mathbf{f}_{\mathrm{pool}} \in \mathbb{R}^{w_p \times h_p \times c_p}$, where $w_p$, $h_p$, and $c_p$ denote the feature's width, height, and number of channels. Then, we compute the gradient of backpropagation from the prediction score $y_c$ (for the ground truth category $c$) to the feature map $\mathbf{f}_{\mathrm{pool}}$, then the gradients are global average pooled,
\begin{equation}
    \alpha_k = \frac{1}{w_p \times h_p} \sum_{i}^{w_p} \sum_{j}^{h_p} \frac{\partial y_c}{\partial \mathbf{f}_{\mathrm{pool}}^{ k}},
\end{equation}
where $k$ denotes the $k$-th channel. We can then compute the attribution map $\mathcal{A}_c \in \mathbb{R}^{w_p \times h_p}$ for category $c$:
\begin{equation}
    \mathcal{A}_c = \mathrm{ReLU} \left ( \sum_{k}\alpha_k \mathbf{f}_{\mathrm{pool}}^{ k} \right).
\end{equation}

Similarly, given a counter category $c^\prime$, we can compute its attribute map $\mathcal{A}_{c^\prime} \in \mathbb{R}^{w_p \times h_p}$. The counter category is suggested by the knowledge matrix, which is selected from the $k_e$ categories most similar to category $c$. In this paper, we set $k_e$ to 3. Next, we compute the counterfactual attribution map $\mathcal{A}$, i.e. why is category $c$ but not category $c^\prime$:
\begin{equation}
    \mathcal{A} =\mathrm{norm} \left ( \mathcal{A}_c \odot \left ( \max{\mathcal{A}_{c^\prime}} - \mathcal{A}_{c^\prime} \right ) \right),
\end{equation}
where $\mathrm{norm}(\cdot)$ means to normalize the value domain of the attribution map. The activation strength of the counterfactual attribution map is higher only if the region is strongly activated for category $c$ and not strongly activated for category $c^\prime$. The activation region of the counterfactual attribution map will be more fine-grained than that of the one-class attribution map. This region is the potential bias of the model in distinguishing highly confounding categories on the few-shot dataset.

We augment the data with counterfactual attribution maps. We expect to mine some samples close to the classification boundary and jointly train to improve the model's generalization, which can be achieved by erasing the discriminative region the model currently focuses on. A threshold $t$ is given, then the mask $\boldsymbol{H}$ for local region erasure is:
\begin{equation}
    \boldsymbol{H}_{i, j} = \left\{\begin{matrix}
        0, & \mathrm{if} \, \mathcal{A}_{i,j} \ge  t  \\
        1, & \mathrm{if} \, \mathcal{A}_{i,j} < t
    \end{matrix}\right. ,
\end{equation}
where $(i, j)$ denotes the position. Then we get an augmented image $\tilde{\mathbf{x}}$,
\begin{equation}
    \tilde{\mathbf{x}} = \mathbf{x} \odot \boldsymbol{H} + \boldsymbol{E} \odot (\mathbf{1} - \boldsymbol{H}),
\end{equation}
where $\boldsymbol{E}$ is a mask template with the same size as $\mathbf{x}$. Here we fill each pixel with a random integer from $[0, 255]$,
\begin{equation}
    \boldsymbol{E}_{i,j,k} = \mathrm{Rand}(0, 255).
\end{equation}

Finally, we feed the augmented data into the few-shot object detection model to train the model parameters. During each iteration, there is a probability of $\varepsilon$ that requires data augmentation operations. It should be noted that the augmented data will not be used to update the memory prototype bank, because the prototype features at this time are not complete, which will affect the performance of model learning. The erased object features will be closer to the classification boundary because the salient regions that the model currently uses to distinguish categories are erased. By aligning erased features with prototype features, the model can be forced to learn additional features to recognize objects, thus will refine the discriminative representation space further, enhancing the distinction between similar base and novel categories and improving the generalization of the model.

\subsection{Our Objective} \label{object}
In the first training stage, we follow the standard training paradigm of Faster R-CNN \cite{ren2016faster} on the base set $\mathcal{D}_{{\rm base}}$. In the second training stage, we transfer the parameters of the first training stage and add the contextual semantic supervised contrastive learning loss into the total loss, where we train the model based on  $\mathcal{D}_{{\rm base}} \cup \mathcal{D}_{{\rm novel}}$:
\begin{equation}
    \mathcal{L} = \mathcal{L}_{\rm rpn} + \lambda_1 \mathcal{L}_{\rm cls} + \lambda_2 \mathcal{L}_{\rm reg} + \lambda_3 \mathcal{L}_{CCL},
\end{equation}
where $\mathcal{L}_{\rm rpn}$ is used to optimize foreground object prediction, $\mathcal{L}_{\rm cls}$ is used for the proposal category prediction, and $\mathcal{L}_{\rm reg}$ is used to modify the final bounding box location, $\lambda_1, \lambda_2$, and $\lambda_3$ are balance hyper-parameters, which are all set to 1.

We also present a theoretical analysis of our method in terms of generalization error bounds in the \textit{Supplementary Materials}, aiming to provide insights into how the introduction of the knowledge matrix enhances model performance and why counterfactual data augmentation is effective.


\begin{table*}
    \begin{minipage}[t]{0.67\textwidth}
        \centering
        \makeatletter\def\@captype{table}\makeatother
        \caption{Effectiveness of different components of our model. We conduct experiments on the 1, 5, and 10-shot of the PASCAL VOC Split 1-3. * represents that the result was run directly from the release code.}
        \resizebox{ \textwidth}{!}{
            \begin{tabular}{cccc|ccccccccc}
                \toprule
                \multirow{2}{*}{Method}  & Knowledge            & Prototype/Feature    & Counterfactual & \multicolumn{3}{c}{Novel Set 1 $AP_{50}$}          & \multicolumn{3}{c}{Novel Set 2 $AP_{50}$} & \multicolumn{3}{c}{Novel Set 3 $AP_{50}$} \\ 
                & Matrix               & Cluster              &   Augmentation                           & 1             & 5             & 10            & 1    & 5             & 10            & 1    & 5             & 10            \\ \midrule
                TFA w/cos                & \XSolidBrush     &  \XSolidBrush    &   \XSolidBrush       & 39.8          & 55.7          & 57.0          & 23.5 & 35.1          & 39.1          & 30.8 & 49.5          & 49.8          \\ \midrule
                \multicolumn{1}{l}{FSCE* \cite{sun2021fsce}} &   \XSolidBrush       &    \XSolidBrush    &  \XSolidBrush    & \textbackslash{}   & $\mathbf{60.5}$ & $\mathbf{63.4}$ &  \textbackslash{}   & 43.4          & 48.7          &   \textbackslash{}   & 53.3          & 55.8          \\
                \multicolumn{1}{l}{FSCE+Cluster}     & \XSolidBrush &   \Checkmark   &     \XSolidBrush        &   \textbackslash{}    & 60.3          & 62.1          & \textbackslash{}     & 43.1          & 48.4          & \textbackslash{}     & 53.1          & 55.3          \\
                \multicolumn{1}{l}{FSCE+Knowledge Matrix}     & \Checkmark & \XSolidBrush & \XSolidBrush         &  \textbackslash{}   & 60.4          & 62.5          & \textbackslash{} & $\mathbf{45.8}$ & $\mathbf{51.1}$ &  \textbackslash{} & $\mathbf{55.2}$ & $\mathbf{57.2}$ \\ \midrule
                \multirow{5}{*}{Ours}    &  \XSolidBrush   &   \XSolidBrush  &   \XSolidBrush   & 47.0          & 61.6          & 63.8          & 27.0 & 44.1          & 49.0          & 38.0 & 53.7          & 57.1          \\
                &  \XSolidBrush  &  \Checkmark   &  \XSolidBrush  & 47.1          & 62.1          & 64.0          & 27.1 & 44.8          & 50.2          & 38.4 & 55.1          & 57.5          \\
                &   \Checkmark   &  \XSolidBrush   &  \XSolidBrush   & 47.8          & 62.2          & 64.4          & 27.9 & 46.7          & 51.3          & 38.7 & 56.3          & 58.2          \\
                &  \Checkmark  & \Checkmark  &    \XSolidBrush   & 48.4 & 62.6 & 64.8 & 28.6 & 46.8          & 52.2          & 39.5 & 56.6          & 58.5          \\
                &   \Checkmark    &   \Checkmark    &   \Checkmark    &  $\mathbf{49.6}$  &  $\mathbf{63.3}$   &  $\mathbf{65.2}$   &  $\mathbf{30.0}$    &  $\mathbf{47.7}$   &  $\mathbf{53.2}$   &  $\mathbf{40.2}$  & $\mathbf{56.9}$     &   $\mathbf{59.0}$   \\ \bottomrule
            \end{tabular}
        }\label{add}
    \end{minipage} 
    \begin{minipage}[t]{0.33\textwidth}
        \centering
        \makeatletter\def\@captype{table}\makeatother\caption{Effectiveness of the different knowledge matrix on PASCAL VOC Split 1 1-shot and 10-shot.}
        \vspace{8pt}
        \resizebox{\textwidth}{!}{
            \begin{tabular}{ll|cc}
            \toprule
            \multicolumn{2}{c|}{\multirow{2}{*}{Method}}                       & \multicolumn{2}{c}{Novel Set 1} \\ 
            \multicolumn{2}{c|}{}                                              & 1-shot            & 10-shot          \\ \midrule
            \multicolumn{2}{l|}{Baseline (Without Knowledge Matrix)}                                      & 47.07             & 63.97            \\ \midrule
            \multirow{7}{*}{Visual Attributes} & Case 1 w/ clustering num. $K$=1 & 47.46          &  64.52         \\
            & Case 1 w/ clustering num. $K$=5 &   47.56      & 64.71        \\
            & Case 1 w/ clustering num. $K$=9 &   47.64        &  64.66                \\
            & Case 2                       &   48.04      & 64.69            \\
            & Case 3 w/ clustering num. $K$=1 &  47.91        &  64.81                 \\
            & Case 3 w/ clustering num. $K$=5 & $\mathbf{48.42}$    & $\mathbf{64.84}$   \\
            & Case 3 w/ clustering num. $K$=9 &  47.86        & 64.44        \\ \midrule
            \multirow{2}{*}{Text Embedding}    & Case 4                       &  47.77      & 64.54            \\
            & Case 5                       &   47.85       & 63.98            \\ \bottomrule
        \end{tabular}
        }
        \label{knowledge_matrix}
    \end{minipage}
\end{table*}

\section{Experiments} \label{experiment}

We perform our method on PASCAL VOC \cite{everingham2010pascal}, MS COCO \cite{lin2014microsoft}, LVIS V1~\cite{gupta2019lvis}, FSOD-1K~\cite{fan2020few}, and FVSOD-500~\cite{fan2022few_video} benchmarks. We conduct a series of ablation studies and visualizations to validate each module of our model. We compare with methods based in part on semantic alignment to demonstrate the optimality of our method for incorporating side information in the FSOD setting. Furthermore, we compare the performance with the SOTA methods and demonstrate the advantages of our method. We also demonstrate that our method can serve as a plug-and-play module, enhancing existing FSOD methods across various backbones.

\subsection{Implementation Details}
If not otherwise stated, our experiments follow the default settings described below. TFA++\cite{sun2021fsce}, DeFRCN\cite{qiao2021defrcn}, MFDC~\cite{wu2022multi}, and DE-ViT\footnote{Referring to \url{https://arxiv.org/abs/2309.12969v2}}~\cite{zhang2023detect} are utilized as baselines, with Faster R-CNN \cite{ren2016faster} serving as the base detector. Among these, TFA++ is distinct for integrating an FPN layer \cite{lin2017feature}, a feature not shared by the other baselines. Both TFA++, DeFRCN, and MFDC employ the ResNet-101~\cite{he2016deep} network, pre-trained on ImageNet-1K, as their backbone. In contrast, DE-ViT is built upon the ViT-L/14~\cite{dosovitskiy2020image}, pre-trained on LVD-142M~\cite{oquab2023dinov2}, as its backbone. The learning rate and other hyperparameters are aligned with those of the baseline methods. Due to limitations in video memory, the Top-k parameter for DE-ViT is set to 6.
We use case 3 mentioned in Section \ref{ex_know} (visual attribute) to obtain the knowledge matrix, where the number of cluster numbers $K$ used to construct the knowledge matrix is set to 5. The number of cluster centers $N^k$ used for prototype clustering is set to 1, and the hyper-parameter temperature $\tau$ is set to 0.2.
For the counterfactual data augmentation method, the number of counter categories $k_e$ assigned to each category as $3$, the probability $\varepsilon$ of performing interpretable data augmentation during training as $0.05$, and the threshold $t$ for erasing as $0.8$. The experiments are conducted on 2 RTX 3090 graphics cards.

\begin{figure}[!t]
    \centering
    \setlength{\abovecaptionskip}{0.cm}
    \includegraphics[width = 0.5 \textwidth]{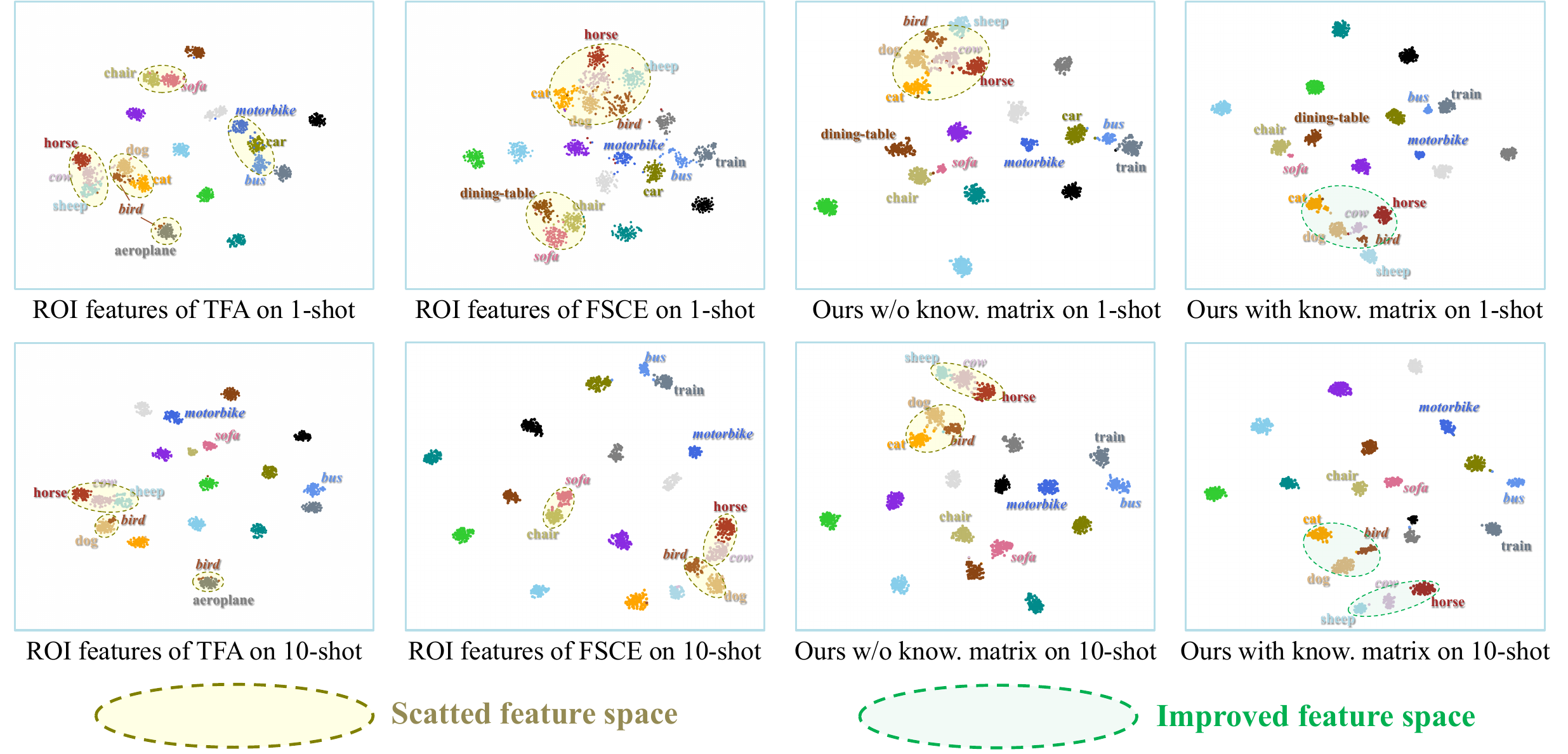}
    \caption{T-SNE visualization of the object proposal leaned by TFA, FSCE, and ours, we randomly select 100 proposals of each category. The novel categories are presented in italic font. Please zoom in for better visualization.}
    \label{TSNE}
\end{figure}

\subsection{Few-Shot Detection Benchmarks}

We adhere to the well-established benchmarks for few-shot object detection to validate our method, incorporating evaluations on five popular datasets in particular.

\textbf{PASCAL VOC}~\cite{everingham2010pascal} dataset contains 20 categories. For the few-shot object detection task, the VOC 2007 and VOC 2012 training and validation sets are used for training, and the VOC 2007 test set is used for evaluation. Following the previous work \cite{wang2020frustratingly}, $15$ categories are used as base categories, and the remaining 5 are used as novel categories. There are 3 different division schemes. We denote them as novel split 1, 2, and 3. For $k$-shot, we will randomly instance $k$ unseen instances of the novel categories, where $k=1,2,3,5,10$, we use the same instance as TFA's \cite{wang2020frustratingly} for training. $AP_{50}$ is used as an evaluation metric.

\textbf{MS COCO}~\cite{lin2014microsoft} contains 80 categories, where 20 are the same as the categories in the PASCAL VOC dataset as the novel categories, and the other 60 categories as the base categories. 5,000 images are selected from the COCO 2014 validation set for evaluation, the images in the training set and the rest images in the validation set are used for training.
For $k$-shot, each unseen novel category will randomly sample $k$ instances, where $k=10, 30$, and we use the same instance as TFA's~\cite{wang2020frustratingly}. $AP$ and $AP_{75}$ are used as the evaluation metrics.

\textbf{LVIS V1}~\cite{gupta2019lvis}, derived from MS COCO 2017, is a large-scale dataset, incorporating 1,203 categories in its Version 1.0, which we utilize to validate our method. This dataset is categorized into 337 rare (1-10 images), 461 common (10-100 images), and 405 frequent (more than 100 images) categories. Adhering to the TFA~\cite{wang2020frustratingly} setup, the common and frequent categories are treated as base categories, while rare categories are considered novel categories. For fine-tuning, up to 10 instances from each category in the training set are sampled. The detection performance of the rare categories, denoted as $AP_{r}$, serves as the evaluation metric.

\textbf{FSOD-1K}~\cite{fan2020few} is a dataset specifically crafted for the few-shot detection task, featuring 1,000 categories endowed with high-quality annotations. 800 categories are designated for the training set as base categories, while the remaining 200 categories are allocated to the test set as novel categories. 5 instances for each category in the test set are randomly sampled for fine-tuning, with the remaining samples utilized for evaluation. The $AP_{50}$ serves as the evaluation metric. The experiment is conducted five times, and the average value of these trials is reported as the final result.

\textbf{FSVOD-500}~\cite{fan2022few_video} is a large-scale video object detection dataset, encompassing 500 categories, each with class-balanced videos. 320 categories are designated for the training set as base categories, and 100 categories are allocated to the test set as novel categories. The offline test support set, containing 5 instances for each novel category, is utilized for fine-tuning. $AP$, $AP_{50}$, and $AP_{75}$ are used as the evaluation metrics.

\begin{table*}
    \begin{minipage}[t]{0.66\textwidth}
        \centering
        \makeatletter\def\@captype{table}\makeatother
        \caption{Comparison with the self-constructing knowledge matrix method. The results are on the PASCAL VOC benchmark.}
        \resizebox{ \textwidth}{!}{
            \begin{tabular}{cc|ccccc|ccccc|ccccc}
            \toprule
            \multirow{2}{*}{Method} & Counterfactual & \multicolumn{5}{c|}{Novel Split 1} & \multicolumn{5}{c|}{Novel Split 2} & \multicolumn{5}{c}{Novel Split 3} \\ 
            & Augmentation  & 1     & 2    & 3    & 5    & 10   & 1     & 2    & 3    & 5    & 10   & 1     & 2    & 3    & 5    & 10   \\ \midrule
            Self-const. knowl.    &  \XSolidBrush & 48.2  & 52.1 & 51.5  & 61.1 & 60.2 & 27.9 & 34.7 & 44.7 & 44.4 & 48.2 & 39.1  & 42.1 & 47.3 & 53.8 & 57.2 \\ 
            \rowcolor{gray!20}
            Visual knowl.             &  \XSolidBrush   & $\mathbf{48.4}$  & $\mathbf{52.2}$ & $\mathbf{53.5}$ & $\mathbf{62.6}$ & $\mathbf{64.8}$ & $\mathbf{28.6}$  & $\mathbf{34.5}$ & $\mathbf{46.6}$ & $\mathbf{46.8}$ & $\mathbf{52.7}$ & $\mathbf{39.5}$  & $\mathbf{43.3}$ & $\mathbf{49.0}$ & $\mathbf{56.6}$ & $\mathbf{58.5}$ \\
            \midrule
            Self-const. knowl.    &  \Checkmark & 48.0  & 52.4 & 52.7  & 62.0 & 62.4 & 28.1  & 35.1 & 45.7 & 45.3 & 52.0 & 39.2  & 42.9 & 49.1 & 56.4 & 58.3 \\ 
            \rowcolor{gray!20}
            Visual knowl.             &  \Checkmark    & $\mathbf{49.6}$  & $\mathbf{53.2}$ & $\mathbf{54.3}$ & $\mathbf{63.3}$ & $\mathbf{65.2}$ & $\mathbf{30.0}$  & $\mathbf{35.3}$ & $\mathbf{47.3}$ & $\mathbf{47.7}$ & $\mathbf{53.2}$ & $\mathbf{40.2}$  & $\mathbf{44.2}$ & $\mathbf{50.4}$ & $\mathbf{56.9}$ & $\mathbf{59.0}$ \\
            \bottomrule
        \end{tabular}
        }\label{self-construct}
    \end{minipage} 
    \begin{minipage}[t]{0.34\textwidth}
        \centering
        \makeatletter\def\@captype{table}\makeatother\caption{Effectiveness of the number of cluster centers. The results are on PASCAL VOC Split 1 10-shot.}\vspace{-4pt}
        \resizebox{\textwidth}{!}{
            \begin{tabular}{c|ccc}
            \toprule
            \multicolumn{1}{l|}{Number of the} & \multirow{2}{*}{Baseline} & \multicolumn{2}{c}{Knowledge Matrix}                                      \\ 
            \multicolumn{1}{l|}{Cluster Centers}  &                           & \multicolumn{1}{l}{Visual Attribute} & \multicolumn{1}{l}{Text Embedding} \\ \midrule
            w/o clustering                                   & 63.77                     & 64.42                                & 64.35                              \\
            1                                   & $\mathbf{63.97}$                     & $\mathbf{64.84}$                               & $\mathbf{64.54}$                              \\
            3                                   & 63.81                     & 64.21                                &  64.35                   \\
            5                                   & 63.80                     & 64.43                                &  64.36                    \\ \bottomrule
        \end{tabular}
        }
        \label{cluster_center}
    \end{minipage}
\end{table*}

\subsection{Ablation Study}
In this section, the method is integrated with TFA++, aiming to validate the effectiveness of various components and modules.
\subsubsection{Components of Our Method} We conduct experiments to validate the effectiveness of different components of our model. The experiments are conducted on 1, 5, and 10 shots of the PASCAL VOC split 1, 2, and 3. We analyze two important components in the CCL loss: the knowledge matrix and the clustering of the prototype in the memory prototype bank. We also analyze the effect of adding counterfactual data augmentation. Table \ref{add} shows the experiment results. When clustering is not used, we use the original prototype $\boldsymbol{P}^{M}$ in the memory prototype bank to replace the cluster center $\boldsymbol{P}$ in Equation \ref{cr_loss}. Adding the knowledge matrix or clustering alone will improve the results. In any split and shot, we observe a 0.6-2.6 (an average of 1.4) improvement without the prototype clustering component and a 0.5-2.0 (an average of 1.3) improvement with the prototype clustering component when adding the knowledge matrix. These demonstrated that using the knowledge matrix in our method can effectively improve the performance of few-shot object detection.
The clustering component helps to improve model performance, especially after introducing the knowledge matrix (an average of 0.5), which demonstrates that prototype clustering is beneficial to estimating unbiased features.
The model will reach the highest performance when both components are equipped. We also introduce the knowledge matrix into the FSCE \cite{sun2021fsce} model in a similar way and improve the performance by 2.4, 2.4, 1.9, and 1.4 on 5, 10 shots of split 2, and 5, 10 shots of split 3, respectively, but the performance drops by 0.1 and 0.9 on the 5 and 10 shots of split 1, respectively. FSCE cannot guarantee that the performance will be improved after adding the knowledge matrix, this is mainly because of the biased feature used for contrastive learning. How to introduce side information in FSOD is still an open and challenging problem, and we provide some ideas for this problem. Furthermore, our experiments have demonstrated that including clustering operations in one of the branches of FSCE contrastive learning can actually reduce model performance. This may be due to the presence of bias in the proposal process, which can subsequently result in biased cluster centers. After introducing counterfactual data augmentation, we observe an average performance improvement of $0.8$. This proves that the counterfactual data augmentation model can effectively reduce the generalization error of the model.

We also adopt the T-SNE \cite{van2008visualizing} method to inspect the proposal embedding representations, as shown in Fig.~\ref{TSNE}.
We compare 4 detectors, TFA~\cite{wang2020frustratingly}, FSCE \cite{sun2021fsce}, our model without knowledge matrix, and our model with knowledge matrix on 1-shot and 10-shot of PASCAL VOC split 1 to extract all proposal embeddings in the VOC test set. For a fair comparison, the counterfactual data augmentation model is not used. For each category, we randomly select 100 proposal embeddings with prediction scores greater than 0.5. In previous work, a novel category may implicitly use the visual composite attributes of all the base categories to develop its representation, which would generate a scatted feature space marked as shown in Fig.~\ref{TSNE}. For example, for TFA on 1 shot, the novel category bird exploits base category dog, cat, and aeroplane features to represent its feature, which is spatially dense and difficult to distinguish. Our method will effectively improve the feature space after introducing a prior visual knowledge matrix, the prior semantic relations enable the model to focus on indistinguishable categories and improve the discriminant between features. These results demonstrated that our method can effectively alleviate the scattered space problem and learn a better feature space after introducing a knowledge matrix.

\subsubsection{Performance of Different Knowledge Matrix}
Here we explore the effectiveness of the different knowledge matrices mentioned in Section \ref{ex_know}. The experiments are conducted on PASCAL VOC split 1 1-shot and 10-shot respectively. The experiments do not use counterfactual data augmentation. Table \ref{knowledge_matrix} shows the evaluation results. We observe that the knowledge matrix obtained through visual attributes and text embedding can improve the performance of the model.
In the knowledge matrix obtained by visual attributes, the results of case 3 are always better than the baseline. In the knowledge matrix obtained by text embedding, although case 5 has higher performance on 1-shot than case 1, it doesn't do well on 10-shot, only case 4 is always effective. Knowledge matrix built from side information of visual attribute has an advantage over text embedding, with $0.6$ and $0.3$ better performance on 1-shot and 10-shot. Therefore, we believe that visual attributes are more advantageous than text embeddings for building relationships of similarity.

We also tried to use the prototype cluster centers from the memory prototype bank to construct the knowledge matrix, that is, the self-constructed knowledge matrix. The self-constructed knowledge matrix is updated by continuously computing the cosine similarity between prototype centers during training.
As shown in Table~\ref{self-construct}, regardless of whether counterfactual data augmentation is used or not, the performance of utilizing a self-constructed knowledge matrix is not as strong as that of a knowledge matrix built with visual side information. This could potentially be attributed to overfitting during the training process. As a result, we suggest using side information to construct a static knowledge matrix for model training.

\subsubsection{Ablation for the Number of Cluster Centers $N^k$}
The effectiveness of the number of cluster centers $N^k$ on the performance of the model is explored.
We only experiment with 10-shot of PASCAL VOC split 1, because, in a low shot, such as 1-shot, too many cluster centers cannot be enforced.
The results are presented in Table \ref{cluster_center}. When "w/o clustering" is indicated in the table, it means that the original prototype embedding $\boldsymbol{P}^{M}$ is utilized to replace the cluster center $\boldsymbol{P}$, followed by the same operation.
We find that after adding the clustering operation with $N^k=1$, regardless of whether the knowledge matrix is used, the performance of the model has been improved, especially with the visual attribute knowledge matrix has $0.42$ improvements, but as $N^k$ increases, the performance of the model decreases.
Therefore, we perform the clustering operation with the number of cluster centers $N^k=1$, which is effective for any shots.

\subsubsection{Ablation for the Counterfactual Data Augmentation}

We examine the effects of various modules and parameters on counterfactual interpretability data augmentation. The experiments are conducted on PASCAL VOC split 1 3-shot.
We compare our method with the following approaches: 1) random mask augmentation~\cite{zhong2020random}, a common data augmentation trick for object detection, 2) random counter class augmentation without knowledge matrix selection of specified categories, 3) Ismail \textit{et al.}\cite{ismail2021improving}, an explanation-guided data augmentation method based on important region enhancement, and 4) Xiao \textit{et al.}\cite{xiao2023masked}, an explanation-guided data augmentation method based on key region masking. For random mask augmentation, we adopt parameters designed for object detection consistent with the original settings~\cite{zhong2020random}. For random counter-class augmentation, we randomly select a counter-class among categories other than the ground truth category. 
Compared with the common data augmentation tricks or existing explanation-guided data augmentation methods, our proposed counterfactual augmentation method is specifically tailored for FSOD tasks and intricately linked with CCL. It serves to prevent the overfitting of CCL during the process of distinguishing between base and novel categories.

As shown in Fig.~\ref{explanation_ablation} A, we find that random mask augmentation degraded the model performance, possibly due to overfitting in few-shot learning, which could mislead the model's learning. We observed only a slight improvement with counter-class augmentation, which may be because some classes are highly discriminative, and applying counterfactual interpretable data augmentation between these classes does not lead to further improvements in their discriminative power. The important region enhancement method (Ismail \textit{et al.} \cite{ismail2021improving}) degrades baseline performance, while the key region masking method (Xiao \textit{et al.} \cite{xiao2023masked}) provides only slight improvements. Therefore, these approaches may not be well-suited for few-shot scenarios. Our data augmentation method guided by a knowledge matrix can provide more effective augmentation data for few-shot data. Thus, our proposed counterfactual data augmentation method effectively enhances the distinction between base and novel categories.

\begin{figure}[!t]
    \centering
    \setlength{\abovecaptionskip}{0.cm}
    \includegraphics[width = 0.48 \textwidth]{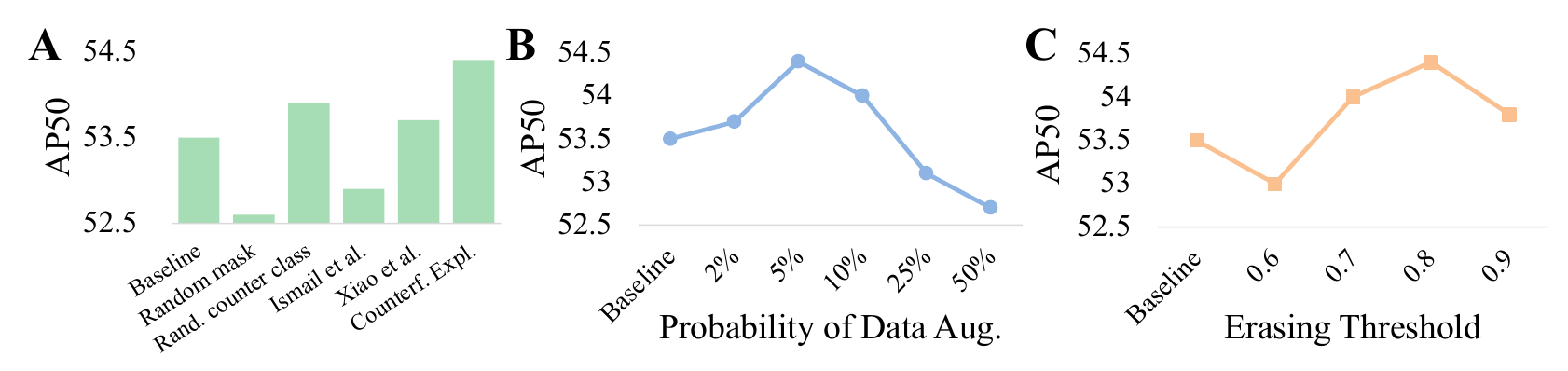}
    \caption{\textbf{A.} Comparison with different data augmentation strategies. Counterfactual data augmentation guided by a knowledge matrix achieves the best performance. \textbf{B.} Probability of data augmentation $\varepsilon$,  the best performance is achieved when set to $5\%$. \textbf{C.} Erasing threshold $t$, the best performance is achieved when set to $0.8$.}
    \label{explanation_ablation}
    \vspace{-10pt}
\end{figure}

We also discuss the impact of data augmentation probability $\varepsilon$ and erasing threshold $t$. As shown in Fig.~\ref{explanation_ablation} B, we observed that an excessively high enhancement frequency leads to decreased model performance. This is mainly because the high frequency erases the currently learned discriminative features of the model will interfere with the original feature learning. We found that the best performance was achieved when the frequency $\varepsilon$ was set to $5\%$. As shown in Fig.~\ref{explanation_ablation} C, a lower erasing threshold denotes erasing more regions. If the erasing region is too large, the enhanced image may be considered an out-of-distribution sample, which can reduce the performance of the model. On the other hand, if the erasing region is too small, it may affect the quality of the generated samples used for training. We found that setting the erasing threshold $t$ to $0.8$ achieved the best performance.

\subsection{Comparison with the Alignment-based Methods}

We explore additional techniques to incorporate knowledge into FSOD. Specifically, we drew inspiration from previous work on zero-shot object detection \cite{zheng2021visual, rahman2020improved, rahman2020zero}, which leverages semantic feature alignment. We adapted these methods for use in the FSOD setting. We assume that the novel categories are known beforehand and semantic embeddings of their visual attributes are available. Our baseline models for this study were TFA and FSCE, and we incorporated alignment modules into the classification layer branch. The alignment technique was introduced during the first stage of training, while the semantic embeddings of novel categories were included in the second stage of fine-tuning. The same freezing strategy was employed as TFA or FSCE. We verify the optimality of our method in integrating side information on the PASCAL VOC dataset, as shown in Table \ref{zero-shot}, our method achieved the best results on any split/shot. It shows that the integration of side information into FSOD in the form of category relationships is better than that based on semantic feature alignment. Furthermore, we discovered that although the semantic alignment method is beneficial for enhancing model performance (It improves the performance of TFA), it appears to be at odds with the contrastive learning module (It degrades the performance of FSCE). This is largely due to the fact that the semantic alignment module has a predetermined prototype, whereas contrastive learning aims to increase the distance between different classes, which can potentially disrupt the established semantic structure and result in reduced performance.

\begin{table}[!t]
    \centering
    \caption{Comparison with the alignment-based methods. * represents that the result was run directly from the release code.}
    \resizebox{0.48 \textwidth}{!}{
        \begin{tabular}{c|ccccc|ccccc|ccccc}
            \toprule
            \multirow{2}{*}{Method} & \multicolumn{5}{c|}{Novel Split 1}                                                   & \multicolumn{5}{c|}{Novel Split 2}                                                   & \multicolumn{5}{c}{Novel Split 3}                                                   \\ 
            & 1             & 2             & 3             & 5             & 10            & 1             & 2             & 3             & 5             & 10            & 1             & 2             & 3             & 5             & 10            \\ \midrule
            TFA w/fc                  \cite{wang2020frustratingly} & 36.8          & 29.1          & 43.6          & 55.7          & 57.0          & 18.2          & 29.0          & 33.4          & 35.5          & 39.0          & 27.7          & 33.6          & 42.5          & 48.7          & 50.2          \\
            TFA+VL-SZSD                 \cite{zheng2021visual}         & 40.1          & 35.3          & 47.3          & 54.6          & 55.1          & 22.0          & 27.9          & 37.6          & 37.1          & 39.5          & 30.8          & 37.4          & 35.7          & 45.8          & 50.0          \\
            TFA+PL-ZSD                   \cite{rahman2020improved}  & 40.8          & 40.7          & 47.5          & 53.2          & 58.4          & 20.1          & 26.5          & 41.0          & 41.2          & 42.6          & 36.5          & 39.0          & 43.4          & 49.4          & 48.9          \\
            TFA+Rahman \textit{et al.}           \cite{rahman2020zero}          & 45.9              & 44.8           & 47.5      &  57.8        & 61.2         &  24.6    & 25.9     & 38.7    & 37.1     & 42.2        & 29.8    & 38.3    & 41.7      & 48.9     &  47.7   \\ \midrule
            FSCE* \cite{sun2021fsce} & \textbackslash{} & \textbackslash{}  & 48.2  & 60.5  & 63.4  & \textbackslash{} & \textbackslash{}  & 44.6  & 44.6  & 50.7  & \textbackslash{} & \textbackslash{}  & 46.7  & 54.7  & 57.2\\
            FSCE+VL-SZSD \cite{zheng2021visual} & \textbackslash{} & \textbackslash{} & 48.3 & 53.8 & 61.0  & \textbackslash{} & \textbackslash{}  & 40.0  &37.5  & 45.9  & \textbackslash{} & \textbackslash{} & 41.9  & 50.6  & 53.8\\
            FSCE+PL-ZSD                   \cite{rahman2020improved} & \textbackslash{} & \textbackslash{} & 49.5  & 52.9  & 58.6  & \textbackslash{} & \textbackslash{} & 42.0 & 41.6  & 46.0  & \textbackslash{} & \textbackslash{}  & 46.2  & 53.8  & 53.9\\
            FSCE+Rahman \textit{et al.}           \cite{rahman2020zero} & \textbackslash{} & \textbackslash{}  & 44.9  & 54.5  & 60.7  & \textbackslash{} & \textbackslash{}  & 43.6  & 41.8  & 46.3  & \textbackslash{} & \textbackslash{}  & 43.5  & 51.4  & 55.1 \\ \midrule
            \rowcolor{gray!20}
            Ours                    & $\mathbf{48.4}$ & $\mathbf{52.5}$ & $\mathbf{53.5}$ & $\mathbf{62.9}$ & $\mathbf{64.8}$ & $\mathbf{28.6}$ & $\mathbf{34.5}$ & $\mathbf{46.6}$ & $\mathbf{46.8}$ & $\mathbf{52.2}$ & $\mathbf{39.5}$ & $\mathbf{43.3}$ & $\mathbf{49.0}$ & $\mathbf{56.6}$ & $\mathbf{58.5}$ \\ \bottomrule
        \end{tabular}
    }
    \label{zero-shot}
\end{table}

\begin{table*}[!t]
    \centering
    \caption{Few-shot object detection evaluation results on PASCAL VOC~\cite{everingham2010pascal}. The evaluation metric adopts the mean average precision (mAP@0.5). $\dagger$ denotes further fine-tuned on the novel categories.}
    \vspace{-10pt}
    \resizebox{\textwidth}{!}{
        \begin{tabular}{lcccc|ccccc|ccccc|ccccc}
        \toprule
        \multirow{2}{*}{Method} & \multirow{2}{*}{Paper Year} & \multirow{2}{*}{Backbone} & \multirow{2}{*}{Base Detector} & \multirow{2}{*}{Side Information} & \multicolumn{5}{c}{Novel Split 1} & \multicolumn{5}{c}{Novel Split 2} & \multicolumn{5}{c}{Novel Split 3} \\ \cmidrule(lr){6-10} \cmidrule(lr){11-15} \cmidrule(l){16-20}
         &     &                           &      &                     & 1    & 2    & 3    & 5    & 10    & 1    & 2    & 3    & 5    & 10    & 1    & 2    & 3    & 5    & 10    \\ \midrule
        LSTD~\cite{chen2018lstd} & AAAI 18 & VGGNet-16 & SSD & N/A & 8.2 & 1.0 & 12.4 & 29.1 & 38.5 & 11.4 & 3.8 & 5.0 & 15.7 & 31.0 & 12.6 & 8.5 & 15.0 & 27.3 & 36.3 \\
        FSRW~\cite{kang2019few} & ICCV 19 & DarkNet-19 & YOLOv2 & N/A & 14.8 & 15.5 & 26.7 & 33.9 & 47.2 & 15.7 & 15.3 & 22.7 & 30.1 & 40.5 & 21.3 & 25.6 & 28.4 & 42.8 & 45.9\\
        MetaDet-FRCN~\cite{wang2019meta} & ICCV 19 & VGGNet-16 & Faster R-CNN & N/A & 18.9 & 20.6 & 20.1 & 36.8 & 49.6 & 21.8 & 23.1 & 27.8 & 31.7 & 43.0 & 20.6 & 23.9 & 29.4 & 43.9 & 44.1 \\
        Meta R-CNN~\cite{yan2019meta} & ICCV 19 & ResNet-101 & Faster R-CNN & N/A & 19.9 & 25.5 & 35.0 & 45.7 & 51.5 & 10.4 & 19.4 & 29.6 & 34.8 & 45.4 & 14.3 & 18.2 & 27.5 & 41.2 & 48.1 \\
        RepMet~\cite{karlinsky2019repmet} & CVPR 19 & InceptionV3 & FPN+DCN & N/A & 26.1 & 32.9 & 34.4 & 38.6 & 41.3 & 17.2 & 22.1 & 23.4 & 28.3 & 35.8 & 27.5 & 31.1 & 31.5 & 34.4 & 37.2\\
        NP-RepMet~\cite{yang2020restoring} & NeurIPS 19 & InceptionV3 & FPN+DCN & N/A & 37.8 & 40.3 & 41.7 & 47.3 & 49.4 & $\mathbf{41.6}$ & 43.0 & 43.4 & 47.4 & 49.1 & 33.3 & 38.0 & 39.8 & 41.5 & 44.8 \\
        TFA w/cos~\cite{wang2020frustratingly} & ICML 20 & ResNet-101 & Faster R-CNN & N/A & 39.8 & 36.1 & 44.7 & 55.7 & 56.0 & 23.5 & 26.9 & 34.1 & 35.1 & 39.1 & 30.8 & 34.8 & 42.8 & 49.5 & 49.8 \\
        MPSR~\cite{wu2020multi} & ECCV 20 & ResNet-101 & Faster R-CNN & N/A & 41.7 & 42.5 & 51.4 & 55.2 & 61.8 & 24.4 & 29.3 & 39.2 & 39.9 & 47.8 & 35.6 & 41.8 & 42.3 & 48.0 & 49.7 \\
        Retentive R-CNN~\cite{fan2021generalized} & CVPR 21 & ResNet-101 & Retentive R-CNN & N/A& 42.4 & 45.8 & 45.9 & 53.7 & 56.1 & 21.7 & 27.8 & 35.2 & 37.0 & 40.3 & 30.2 & 37.6 & 43.0 & 49.7 & 50.1     \\
        CME~\cite{li2021beyond} & CVPR 21 & DarkNet-19 & YOLOv2 & N/A & 41.5 & 47.5 & 50.4 & 58.2 & 60.9 & 27.2 & 30.2 & 41.4 & 42.5 & 46.8 & 34.3 & 39.6 & 45.1 & 48.3 & 51.5 \\
        FSCE~\cite{sun2021fsce} & CVPR 21 & ResNet-101 & Faster R-CNN & N/A & 44.2 & 43.8 & 51.4 & 61.9 & 63.4 & 27.3 & 29.5 & 43.5 & 44.2 & 50.2 & 37.2 & 41.9 & 47.5 & 54.6 & 58.5 \\
        QA-FewDet~\cite{Han2021ICCV} & ICCV 21 & ResNet-101 & Faster R-CNN & N/A & 42.4 & 51.9 & 55.7 & 62.6 & 63.4 & 25.9 & 37.8 & 46.6 & 48.9 & 51.1 & 35.2 & 42.9 & 47.8 & 54.8 & 53.5 \\
        $FSOD^{up}$~\cite{Wu2021ICCV} & ICCV 21 & ResNet-101 & Faster R-CNN & N/A & 43.8 & 47.8 & 50.3 & 55.4 & 61.7 & 31.2 & 30.5 & 41.2 & 42.2 & 48.3 & 35.5 & 39.7 & 43.9 & 50.6 & 53.5 \\
        DMNet~\cite{lu2022decoupled} & T-Cyber. 22 & ResNet-101 & DMNet & N/A & 34.7 & 50.7 & 54.0 & 58.8 & 62.5 & 31.3 & 28.2 & 41.8 & 46.2 & 52.7 & 38.6 & 40.0 & 43.4 & 48.9 & 48.9 \\
        MRSN~\cite{ma2022mutually}  & ECCV 22 & ResNet-101 & Faster R-CNN & N/A & 47.6 & 48.6 & $\mathbf{57.8}$ & 61.9 & 62.6 & 31.2 & 38.3 & 46.7 & 47.1 & 50.6 & 35.5 & 30.9 & 45.6 & 54.4 & 57.4 \\
        Xiao \textit{et al.}~\cite{xiao2022few}& TPAMI 23 & ResNet-18 & Faster R-CNN & N/A & 26.9 & 35.7 & 42.3 & 48.9 & 57.8 & 21.2 & 26.7 & 30.6 & 37.7 & 45.1 & 24.3 & 30.4 & 36.3 & 41.6 & 50.1 \\
        CKPC \cite{chen2023category}  & TIP 23 & ResNet-101 & Faster R-CNN & N/A & 45.5 & 52.4 & 56.6 & 61.7 & 63.9 & 33.4 & $\mathbf{43.5}$ & $\mathbf{47.3}$ & $\mathbf{49.4}$ & 52.1    & 40.4 & 43.7 & 48.5 & 54.0    & 58.8 \\
        SRR-FSD~\cite{zhu2021semantic} & CVPR 21 & ResNet-101 & Faster R-CNN & Word2Vec~\cite{mikolov2013distributed} & 47.8 & 50.5 & 51.3 & 55.2 & 56.8 & 32.5 & 35.3 & 39.1 & 40.8 & 43.8 & 40.1 & 41.5 & 44.3 & 46.9 & 46.4 \\
        UA-RPN~\cite{fan2022few} & ECCV 22 & ResNet-50 & Faster R-CNN & ImageNet~\cite{deng2009imagenet} & 40.1 & 44.2 & 51.2 & 62.0 & 63.0 & 33.3 & 33.1 & 42.3 & 46.3 & 52.3 & 36.1 & 43.1 & 43.5 & 52.0 & 56.0 \\
        KD-TFA++~\cite{pei2022few} & ECCV 22 & ResNet-101 & Faster R-CNN & PPC~\cite{xie2021propagate} & 47.0 & 50.2 & 52.5 & 62.1 & 64.2 & 29.7 & 32.9 & 45.9 & 48.5 & 51.1 & $\mathbf{42.6}$ & $\mathbf{46.5}$ & 48.8 & 56.8 & 57.4 \\
        \rowcolor{gray!20}
        TFA++ w/ ours & Our Method & ResNet-101 & Faster R-CNN & Visual Attribute & $\mathbf{49.6}$ & $\mathbf{53.2}$ & 54.4 & $\mathbf{63.3}$ & $\mathbf{65.2}$ & 30.0 & 35.3 & $\mathbf{47.3}$ & 47.7 & $\mathbf{53.2}$ & 40.2 & 44.2 & $\mathbf{50.4}$ & $\mathbf{56.9}$ & $\mathbf{59.0}$ \\
        \midrule
        FADI~\cite{cao2021few} & NeurIPS 21 & ResNet-101 & Faster R-CNN & WordNet~\cite{miller1995wordnet} & 50.3 & 54.8 & 54.2 & 59.3 & 63.2 & 30.6 & 35.0 & 40.3 & 42.8 & 48.0 & 45.7 & 49.7 & 49.1 & 55.0 & 59.6 \\
        Meta Faster R-CNN~\cite{han2022meta} & AAAI 22 & ResNet-101 & Faster R-CNN & N/A & 43.0 & 54.5 & 60.6 & 66.1 & 65.4 & 27.7 & 35.5 & 46.1 & 47.8 & 51.4 & 40.6 & 46.4 & 53.4 & 59.9 & 58.6 \\
        Meta-DETR~\cite{zhang2022meta} & TPAMI 22 & ResNet-101 & Deformable DETR & N/A & 40.6 & 51.4 & 58.0 & 59.2 & 63.6 & 37.0 & 36.6 & 43.7 & 49.1 & 54.6 & 41.6 & 45.9 & 52.7 & 58.9 & 60.6 \\
        LVC~\cite{kaul2022label} & CVPR 22 & ResNet-101 & Faster R-CNN & N/A & 54.5 & 53.2 & 58.8 & 63.2 & 65.7 & 32.8 & 29.2 & 50.7 & 49.8 & 50.6 & 48.4 & 52.7 & 55.0 & 59.6 & 59.6 \\
        KFSOD~\cite{zhang2022kernelized} & CVPR 22 & ResNet-101 & Faster R-CNN & N/A & 44.6 & - & 54.5 & 60.9 & 65.8 & 37.8 & - & 43.1 & 48.1 & 50.4 & 34.8 & - & 44.1 & 52.7 & 53.9\\
        FCT~\cite{han2022few} & CVPR 22 & PVTv2-B2-Li & Faster R-CNN & N/A & 49.9 & 57.1 & 57.9 & 63.2 & 67.1 & 27.6 & 34.5 & 43.7 & 49.2 & 51.2 & 39.5 & 54.7 & 52.3 & 57.0 & 58.7 \\
        VFA~\cite{han2023vfa} & AAAI 23 & ResNet-101 & Meta R-CNN++ & N/A & 57.7 & 64.6 & 64.7 & 67.2 & 67.4 & 41.4 & 46.2 & 51.1 & 51.8 & 51.6 & 48.9 & 54.8 & 56.6 & 59.0 & 58.9 \\
        ICPE & AAAI 23 & ResNet-101 & Meta R-CNN & N/A & 54.3 & 59.5 & 62.4 & 65.7 & 66.2 & 33.5 & 40.1 & 48.7 & 51.7 & 52.5 & 50.9 & 53.1 & 55.3 & 60.6 & 60.1\\
        $\sigma$-ADP~\cite{du2023adaptive} & ICCV 23 & ResNet-101 & Faster R-CNN & N/A & 52.3 & 55.5 & 63.1 & 65.9 & 66.7 & 42.7 & 45.8 & 48.7 & 54.8 & $\mathbf{56.3}$ & 47.8 & 51.8 & 56.8 & 60.3 & 62.4 \\
        FS-DETR~\cite{bulat2023fs} & ICCV 23 & ResNet-50 & DETR & N/A & 45.0 & 48.5 & 51.5 & 52.7 & 56.1 & 37.3 & 41.3 & 43.4 & 46.6 & 49.0 & 43.8 & 47.1 & 50.6 & 52.1 & 56.9 \\
        DeFRCN~\cite{qiao2021defrcn} & ICCV 21 & ResNet-101 & Faster R-CNN & ImageNet~\cite{deng2009imagenet} & 57.0 & 58.6 & 64.3 & 67.8 & 67.0 & 35.8 & 42.7 & 51.0 & 54.5 & 52.9 & 52.5 & 56.6 & 55.8 & 60.7 & 62.5 \\
        PTF+KI~\cite{yang2022efficient} & TIP 22 & ResNet-101 & DeFRCN & ImageNet~\cite{deng2009imagenet} & 57.0 & 62.3 & 63.3 & 66.2 & 67.6 & 42.8 & 44.9 & 50.5 & 52.3 & 52.2 & 50.8 & 56.9 & 58.5 & 62.1 & 63.1 \\
        MFDC~\cite{wu2022multi} & ECCV 22 & ResNet-101 & DeFRCN & ImageNet~\cite{deng2009imagenet} & 63.4 & 66.3 & 67.7 & 69.4 & 68.1 & 42.1 & 46.5 & 53.4 & 55.3 & 53.8 & 56.1 & 58.3 & 59.0 & 62.2 & 63.7 \\
        NIFF-DeFRCN~\cite{guirguis2023niff} & CVPR 23 & ResNet-101 & DeFRCN & ImageNet~\cite{deng2009imagenet} & 63.5 & 67.2 & $\mathbf{68.3}$ & $\mathbf{71.1}$ & 69.3 & 37.8 & 41.9 & 53.4 & $\mathbf{56.0}$ & 53.5 & 55.3 & 60.5 & 61.1 & 63.7 & 63.9 \\
        KD-DeFRCN~\cite{pei2022few} & ECCV 22 & ResNet-101 & DeFRCN & ImageNet~\cite{deng2009imagenet}, PPC~\cite{xie2021propagate} & 58.2 & 62.5 & 65.1 & 68.2 & 67.4 & 37.6 & 45.6 & 52.0 & 54.6 & 53.2 & 53.8 & 57.7 & 58.0 & 62.4 & 62.2\\
        Norm-VAE~\cite{xu2023generating} & CVPR 23 & ResNet-101 & DeFRCN & ImageNet~\cite{deng2009imagenet}, Word2Vec~\cite{mikolov2013distributed} & 62.1 & 64.9 & 67.8 & 69.2 & 67.5 & 39.9 & 46.8 & $\mathbf{54.4}$ & 54.2 & 53.6 & 58.2 & 60.3 & 61.0 & 64.0 & 65.5 \\
        MM-FSOD~\cite{han2022multimodal} & ArXiv 22 & ResNet-101 & DeFRCN & ImageNet~\cite{deng2009imagenet}, CLIP~\cite{radford2021learning} & 59.4 & 59.5 & 64.6 & 68.7 & 68.4 & 36.0 & 45.5 & 51.5 & 55.0 & 55.2 & 54.2 & 53.7 & 57.5 & 60.8 & 62.5 \\
        FPD~\cite{wang2024fine} & AAAI 24 & ResNet-101 & Meta-RCNN & N/A & 46.5 & 62.3 & 65.4 & 68.2 & 69.3 & 32.2 & 43.6 & 50.3 & 52.5 & 56.1 & 43.2 & 53.3 & 56.7 & 62.1 & 64.1 \\
        SMILe~\cite{majee2024smile} & ECCV 24 & ResNet-101 & Faster R-CNN & N/A & 40.9 & - & - & 59.7 & 62.0 & 26.5 & - & - & 49.5 & 52.3 & 42.6 & - & - & 56.4 & 61.4 \\
        T-GSEL~\cite{zhang2025learning} & IJCV 25 & PVTv2-B2-Li & Faster R-CNN & N/A & 50.4 & 63.6 & 61.9 & 68.6 & 67.3 & 31.3 & 32.9 & 43.6 & 47.9 & 53.9 & 41.2 & 49.8 & 54.1 & 62.1 & 61.9\\
        SNIDA~\cite{wang2024snida} & CVPR 24 & ResNet-101 & DeFRCN & ImageNet~\cite{deng2009imagenet}, CLIP~\cite{radford2021learning} & 59.3 & 60.8 & 64.3 & 65.4 & 65.6 & 35.2 & 40.8 & 50.2 & 54.6 & 50.0 & 51.6 & 52.4 & 55.9 & 58.5 & 62.6 \\
        \rowcolor{gray!20}
        DeFRCN w/ ours & Our Method & ResNet-101 & DeFRCN & ImageNet~\cite{deng2009imagenet}, Visual Attribute & 58.6 & 61.9 & 65.2 & 68.8 & 67.7 & 38.8 & 46.7 & 52.8 & 55.1 & 54.1 & 56.5 & 58.1 & 59.6 & 61.0 & 63.1 \\ \rowcolor{gray!20}
        MFDC w/ ours & Our Method & ResNet-101 & DeFRCN & ImageNet~\cite{deng2009imagenet}, Visual Attribute & $\mathbf{64.9}$ & $\mathbf{67.3}$ & 67.8 & 70.5 & $\mathbf{70.3}$ & $\mathbf{42.9}$ & $\mathbf{48.4}$ & 53.9 & 55.5 & 53.9 & $\mathbf{59.4}$ & $\mathbf{62.0}$ & $\mathbf{61.2}$ & $\mathbf{64.8}$ & $\mathbf{65.8}$ \\
        \midrule \midrule
        DE-ViT$^{\dagger}$~\cite{zhang2023detect} & CoRL 24 & ViT-L/14 & Faster R-CNN & LVD-142M~\cite{oquab2023dinov2} & 43.3 & 52.7 & 56.9 & 65.5 & 68.4 & 27.9 & 34.4 & 51.6 & 60.2 &   65.2 & 49.7 & 60.5 & 61.8 & 64.1 & 64.8 \\ 
        FM-FSOD~\cite{han2024few} & CVPR 24 & ViT-L/14 & Large language model & LVD-142M~\cite{oquab2023dinov2}, Vicuna~\cite{chiang2023vicuna} & 40.1 & 53.5 & 57.0 & 68.6 & \textbf{72.0} & \textbf{33.1} & 36.3 & 48.8 & 54.8 & 66.2 & 39.2 & 50.2 & 55.7 & 63.4 & 68.1 \\
        \rowcolor{gray!20}
        DE-ViT w/ ours & Our Method & ViT-L/14 & Faster R-CNN & LVD-142M~\cite{oquab2023dinov2} & $\mathbf{46.9}$ & $\mathbf{55.7}$ & $\mathbf{57.6}$ & $\mathbf{69.4}$ & $70.8$ & $30.0$ & $\mathbf{36.6}$ & $\mathbf{54.6}$ & $\mathbf{63.9}$ & $\mathbf{66.2}$ & $\mathbf{51.4}$ & $\mathbf{62.1}$ & $\mathbf{63.5}$ & $\mathbf{69.3}$ & $\mathbf{70.9}$ \\ 
        \bottomrule
        \end{tabular}
            }
    \label{PascVoc}
\end{table*}

\subsection{Comparison with the State-of-the-art Methods}

In our experiments, we compared our method with state-of-the-art methods on the PASCAL VOC, MS COCO, LVIS V1, FSOD-1K, and FSVOD-500 datasets. Among these methods, SRR-FSD~\cite{zhu2021semantic} utilizes Word2Vec~\cite{mikolov2013distributed}, the DeFRCN's (and other methods based on it) PCB module~\cite{qiao2021defrcn} incorporates the ImageNet pre-training model, KD~\cite{pei2022few} uses visual words PPC~\cite{xie2021propagate}, FADI~\cite{cao2021few} leverages WordNet~\cite{miller1995wordnet}, MM-FSOD~\cite{han2022multimodal} and SNIDA~\cite{wang2024snida} use pre-trained CLIP text encoder, UA-RPN~\cite{fan2022few} leverages the ImageNet dataset, and Norm-VAE~\cite{xu2023generating} uses Word2Vec~\cite{mikolov2013distributed} as side information. FM-FSOD~\cite{han2024few} leveraged a pre-trained large language model with carefully designed instructions. Other methods in our comparison do not utilize side information.

\subsubsection{PASCAL VOC Results} Table \ref{PascVoc} shows the results of three different splits with distinct shots, our method achieves superior performance in most shots/splits.
Compared with the methods SRR-FSD~\cite{zhu2021semantic}, FADI~\cite{cao2021few}, and KD-TFA++~\cite{pei2022few} that introduce side information, our method based on TFA++ is $11.5\%$, $1.6\%$, and $1.9\%$ higher than them in the average results of 15 experiments. Since our method leverages visual attributes as side information to enhance the discrimination training between similar base categories and novel categories, resulting in a significant performance improvement compared to state-of-the-art methods. Moreover, we adopt a counterfactual data augmentation method to reduce potential bias in model learning. This combination of techniques helps our method achieve significant performance. We also combine with stronger baselines, including DeFRCN~\cite{qiao2021defrcn}, MFDC~\cite{wu2022multi}, and DE-ViT~\cite{zhang2023detect}, achieving SOTA results. Our method can improve the average results on 15 experiments of DeFRCN by $3.4\%$ and $2.7\%$ on MFDC. On the backbone based on ResNet-101, our method achieves SOTA results under 10 settings, which are on average $2.5\%$ and $2.2\%$ higher than the current SOTA methods NIFF-DeFRCN~\cite{guirguis2023niff} and Norm-VAE~\cite{xu2023generating} respectively. The method is also verified on the stronger ViT~\cite{dosovitskiy2020image} backbone which is pre-trained on LVD-142M~\cite{oquab2023dinov2}, using DE-ViT as the baseline. DE-ViT, a meta-learning based method, is combined with our method and fine-tuned on the novel set following base training to achieve enhanced results. Our method improves the average results of DE-ViT by $5.0\%$. 
We observe that using ViT as the backbone yields stronger results than ResNet-101 for 3-10 shots, but performance on 1 and 2 shots is less effective. Our analysis suggests that ViT is prone to overfitting, and since PASCAL VOC does not constitute a large-scale dataset, it results in a diminished ability to generalize well when data is extremely scarce. 
Compared to FM-FSOD~\cite{han2024few}, which leverages a large language model architecture and ViT as the vision backbone, our approach achieves the best results in 13 settings, surpassing it by an average of 7.6\%.
Fig.~\ref{Visualization} visualizes some detection results of TFA, FSCE, FADI, DE-ViT, and our method based on TFA++ and DE-ViT for novel categories on the PASCAL VOC dataset with different data splits and shots. 
TFA, FSCE, and FADI often have ambiguous decision boundaries for novel categories, as well as DE-ViT.
After incorporating a visual knowledge matrix, our method strengthens the discrimination between similar categories. In addition, the use of a counterfactual data augmentation module reduces the potential learning bias of the model and gets more accurate predictions.

\begin{figure}[!t]
    \centering
    \setlength{\abovecaptionskip}{0.cm}
    \begin{overpic}[width=0.48\textwidth,tics=8]{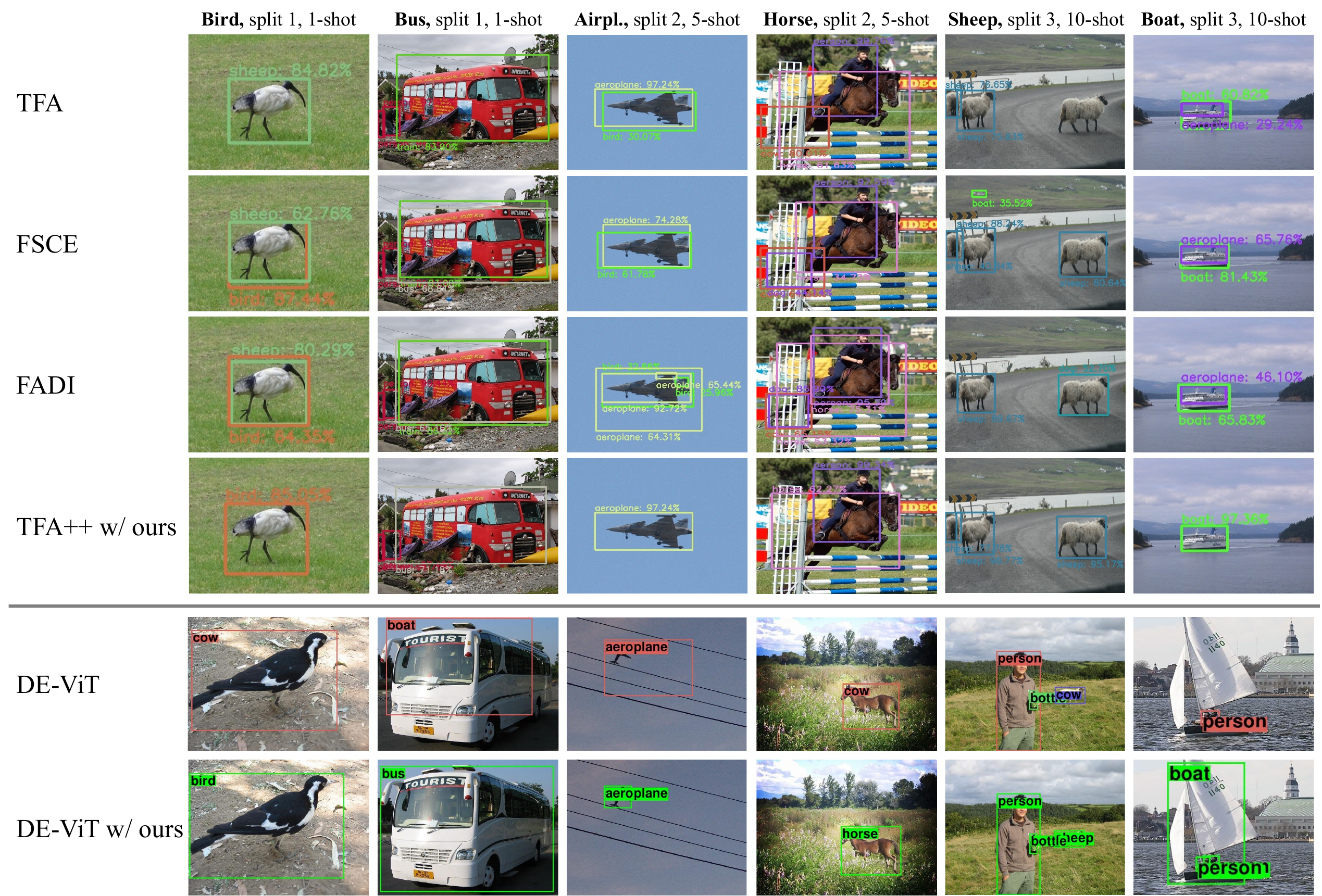}  
        \put (5,59.4) {\tiny \cite{wang2020frustratingly}}
        \put (6,48.8) {\tiny \cite{sun2021fsce}}
        \put (6,38.2) {\tiny \cite{cao2021few}}
        \put (7.8,15.5) {\tiny \cite{zhang2023detect}}
    \end{overpic} 
    \caption{Detection results of TFA, FSCE, FADI, DE-ViT, and our method in the PASCAL VOC dataset with different data splits and shots.}
    \vspace{-14pt}
    \label{Visualization}
\end{figure}

\subsubsection{MS COCO Results} The evaluation results of MS COCO are shown in Table \ref{COCO_benchmark}, we use DeFRCN~\cite{qiao2021defrcn}, MFDC~\cite{wu2022multi}, and DE-ViT~\cite{zhang2023detect} as the baselines. Given that the MS COCO dataset contains 80 categories, more than PASCAL VOC, we assigned the 10 most similar counter categories to each category in our counterfactual data augmentation method, i.e., $k_e=10$. We adopt COCO-style $AP$ and $AP_{75}$ as the evaluation metric. We compare our method with the SOTA methods, and we achieve competitive performance. From the experiment result, we can observe that our model has achieved better performance. Our method achieves improvement over all the baselines. On DeFRCN, there is a $1.6\%$ and $1.3\%$ increase in $AP$, along with a $1.7\%$ and $3.1\%$ increase in $AP_{75}$ for 10 shots and 30 shots, respectively. On MFDC, there is a $3.6\%$ and $0.9\%$ increase in $AP$, along with a $0.5\%$ and $0.9\%$ increase in $AP_{75}$ for 10 shots and 30 shots, respectively. On DE-ViT, the method improves the average $AP$ by $2.2\%$ and $AP_{75}$ by $2.0\%$. The method on the MS COCO dataset further demonstrates the effectiveness of our method. Given that MS COCO features multiple novel categories similar to the same base category, the association-based FADI method struggles to perform effectively. In contrast, our method is capable of learning a robust representation space despite these challenges. Furthermore, the large-scale MS COCO dataset creates a significant gap between its visual and textual semantic spaces, hindering the effectiveness of both alignment-based (KD and MM-FSOD) and generation-based (Norm-VAE) methods. However, the advantages of our method become particularly evident in improving the 10-shot $AP$ metric. In the 30-shot setting, our method performs worse than FM-FSOD~\cite{han2024few}, possibly due to the large language model benefiting from richer pre-training content and a relatively sufficient number of training samples. However, under the scarce 10-shot setting, our method demonstrates a significant advantage, outperforming FM-FSOD by 24.2\% and 24.6\% in $AP$ and $AP_{75}$, respectively. Fig.~\ref{coco_visualization} illustrates the detection results of DE-ViT and our method for novel categories on the MS COCO dataset, showing how the method contributes to more accurate object detection and classification.

\begin{table}[!t]
    \centering
    \caption{Few-shot object detection evaluation results on MS COCO. The evaluation metric adopts $AP$ and $AP_{75}$.}
    \vspace{-10pt}
    \resizebox{0.48 \textwidth}{!}{
        \begin{tabular}{cccc|cccc}
        \toprule
        \multirow{2}{*}{Method} & \multirow{2}{*}{Paper Year} & \multirow{2}{*}{Backbone} & \multirow{2}{*}{Side Information} & \multicolumn{2}{c}{10-shot} & \multicolumn{2}{c}{30-shot} \\ \cmidrule(lr){5-6} \cmidrule(l){7-8} 
         &   &   &  & $AP$ & $AP_{75}$ & $AP$ & $AP_{75}$ \\ \midrule
        SRR-FSD~\cite{zhu2021semantic} & CVPR 21 & ResNet-101 & Word2Vec~\cite{mikolov2013distributed} & 11.3 & 9.8 & 14.7 & 13.5 \\
        FADI~\cite{cao2021few} & NeurIPS 21 & ResNet-101 & WordNet~\cite{miller1995wordnet} & 12.2 & 11.9 & 16.1 & 15.8 \\ 
        Meta Faster R-CNN~\cite{han2022meta} & AAAI 22 & ResNet-101 & N/A & 12.7 & 10.8 & 16.6 & 15.8 \\
        Meta-DETR~\cite{zhang2022meta} & TPAMI 22 & ResNet-101 & N/A & 19.0 & 19.7 & 22.2 & 22.8 \\
        KFSOD~\cite{zhang2022kernelized} & CVPR 22 & ResNet-101 & N/A & 18.5 & 18.7 & - & - \\
        FCT~\cite{han2022few} & CVPR 22 & PVTv2-B2-Li & N/A & 17.1 & 17.0 & 21.4 & 22.1 \\
        VFA~\cite{han2023vfa} & AAAI 23 & ResNet-101 & N/A & 16.2 & - & 18.9 & - \\
        FS-DETR~\cite{bulat2023fs} & ICCV 23 & ResNet-50 & N/A & 11.3 & 11.1 & - & - \\ 
        DeFRCN~\cite{qiao2021defrcn} & ICCV 21 & ResNet-101 & ImageNet~\cite{deng2009imagenet} & 18.6 & 17.6 & 22.5 & 22.3 \\
        PTF+KI~\cite{yang2022efficient} & TIP 22 & ResNet-101 & ImageNet~\cite{deng2009imagenet} & 16.9 & 16.7 & 20.7 & 20.4 \\
        MFDC~\cite{wu2022multi} & ECCV 22 & ResNet-101 & ImageNet~\cite{deng2009imagenet} & 19.4 & 20.2 & 22.7 & 23.2 \\
        NIFF-DeFRCN~\cite{guirguis2023niff} & CVPR 23 & ResNet-101 & ImageNet~\cite{deng2009imagenet} & 18.0 & - & 20.0 & - \\ 
        KD~\cite{pei2022few} & ECCV 22 & ResNet-101 & ImageNet~\cite{deng2009imagenet}, PPC~\cite{xie2021propagate} & 18.9 & 17.8 & 22.6 & 22.6 \\
        Norm-VAE~\cite{xu2023generating} & CVPR 23 & ResNet-101 & ImageNet~\cite{deng2009imagenet}, Word2Vec~\cite{mikolov2013distributed} & 18.7 & 17.8 & 22.5 & 22.4 \\
        MM-FSOD~\cite{han2022multimodal} & ArXiv 23 & ResNet-101 & ImageNet~\cite{deng2009imagenet}, CLIP~\cite{radford2021learning} & 18.7 & 17.7 & 22.5 & 22.2 \\
        FPD~\cite{wang2024fine} & AAAI 24 & ResNet-101 & N/A & 16.5 & - & 20.1 & - \\
        T-GSEL~\cite{zhang2025learning} & IJCV 25 & PVTv2-B2-Li & N/A & 18.0 & 17.7 & 22.3 & $\mathbf{23.5}$ \\
        \rowcolor{gray!20} 
        DeFRCN w/ ours & Our Method & ResNet-101 & ImageNet~\cite{deng2009imagenet}, Visual Attribute & 18.9 & 17.9 & 22.8 & 23.0 \\ \rowcolor{gray!20}
        MFDC w/ ours & Our Method & ResNet-101 & ImageNet~\cite{deng2009imagenet}, Visual Attribute & $\mathbf{20.1}$ & $\mathbf{20.3}$ & $\mathbf{22.9}$ & 23.4 \\
        \midrule \midrule 
        DE-ViT~\cite{zhang2023detect} (Top-k=6) & CoRL 24 & ViT-L/14 & LVD-142M~\cite{oquab2023dinov2} & 33.6 & 36.6 & 33.8 & 36.8 \\ 
        SMILe~\cite{majee2024smile} & ECCV 24 & ViT-B & ImageNet~\cite{deng2009imagenet} & 25.8 & 26.1 & 31.0 & 33.6 \\
        FM-FSOD~\cite{han2024few} & CVPR 24 & ViT-L/14 & LVD-142M~\cite{oquab2023dinov2}, Vicuna~\cite{chiang2023vicuna} & 27.7 & 30.1 & $\mathbf{37.0}$ & $\mathbf{39.7}$ \\
        \rowcolor{gray!20}
        DE-ViT w/ ours & Our Method & ViT-L/14 & LVD-142M~\cite{oquab2023dinov2} & $\mathbf{34.4}$ & $\mathbf{37.5}$ & 34.5 & 37.4 \\
        \bottomrule
        \end{tabular}
    }
    \label{COCO_benchmark}
\end{table}


\begin{figure}[!t]
    \centering
    \setlength{\abovecaptionskip}{0.cm}
    \begin{overpic}[width=0.48\textwidth,tics=8]{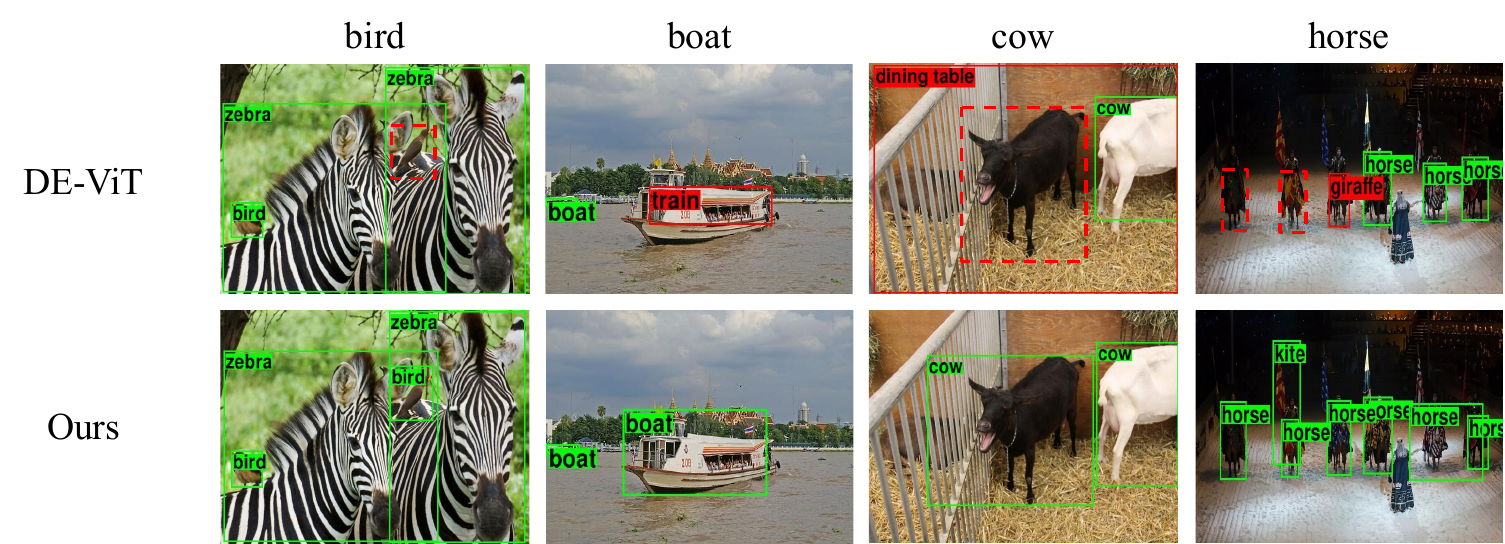}  
        \put (9.6,23.5) {\tiny \cite{zhang2023detect}}
    \end{overpic} 
    \caption{Detection results of DE-ViT and our method in MS COCO dataset under the 30-shot setup. The red bounding boxes indicate incorrect detection results, while the green bounding boxes indicate correct detection results.}
    \label{coco_visualization}
\end{figure}

\subsubsection{LVIS V1 Results} We integrate our method with the recent open-source baseline DE-ViT~\cite{zhang2023detect}, employing Faster R-CNN as the base detector. The evaluation results of LVIS V1 are shown in Table~\ref{LVIS_benchmark}, where the average precision $AP_{r}$ of the rare category of LVIS serves as the evaluation metric. Our method enhances the baseline by 1.6 $AP_{r}$, underscoring its efficacy. This further demonstrates that our method is capable of delivering strong results, even when faced with a large number of novel categories to detect. Fig.~\ref{lvis_visualization} shows detection outcomes of scenes featuring rare categories. While DE-ViT may encounter difficulties in distinguishing similar objects, such as mistaking a sawhorse for a cone or a mallet for a walking cane, our method effectively rectifies these detection errors. This demonstrates that, through fine-tuning and learning discriminative features, our method is capable of accurately distinguishing between closely similar objects in more challenging scenarios.

\begin{table}[!t]
    \centering
    \caption{Few-shot object detection evaluation results on LVIS V1~\cite{gupta2019lvis}. The evaluation metric adopts $AP_{r}$ on rare categories.}
    \vspace{-10pt}
    \resizebox{0.48\textwidth}{!}{
        \begin{tabular}{cccc|c}
        \toprule
        Method & Paper Year & Backbone & Base Detector & $AP_{r}$ \\ \midrule
        Baseline~\cite{gupta2019lvis} & CVPR 19 & ResNet-101 & Faster R-CNN & 13.1  \\
        TFA w/ fc~\cite{wang2020frustratingly} & ICML 20 & ResNet-101 & Faster R-CNN & 15.5 \\
        TFA w/ cos~\cite{wang2020frustratingly} & ICML 20 & ResNet-101 & Faster R-CNN & 17.3 \\
        DiGeo+TFA~\cite{ma2023digeo}  & CVPR 23 & ResNet-101 & Faster-RCNN & 18.5 \\
        EQL~\cite{tan2020equalization} & CVPR 20 & ResNet-101 & Mask R-CNN & 14.6\\
        BAGS~\cite{li2020overcoming} & CVPR 20 & ResNeXt-101 &  Faster R-CNN & 18.8 \\
        ACSL~\cite{wang2021adaptive}  & CVPR 21 & ResNet-101 & Faster R-CNN &  19.3 \\
        DiGeo+ACSL~\cite{ma2023digeo}  & CVPR 23 & ResNet-101 & Faster R-CNN &  19.5 \\
        Grounding DINO (Zero-Shot)~\cite{liu2023grounding} & ArXiv 23 & Swin-T & DINO & 18.1 \\ 
        DE-ViT~\cite{zhang2023detect} & CoRL 24 & ViT-L/14 & Faster R-CNN & 32.2 \\ \rowcolor{gray!20}
        DE-ViT w/ ours & Our Method & ViT-L/14 & Faster R-CNN & $\mathbf{33.8}$  \\
        \bottomrule
        \end{tabular}
    }
    \label{LVIS_benchmark}
\end{table}

\begin{figure}[!t]
    \centering
    \setlength{\abovecaptionskip}{0.cm}
    \begin{overpic}[width=0.48\textwidth,tics=8]{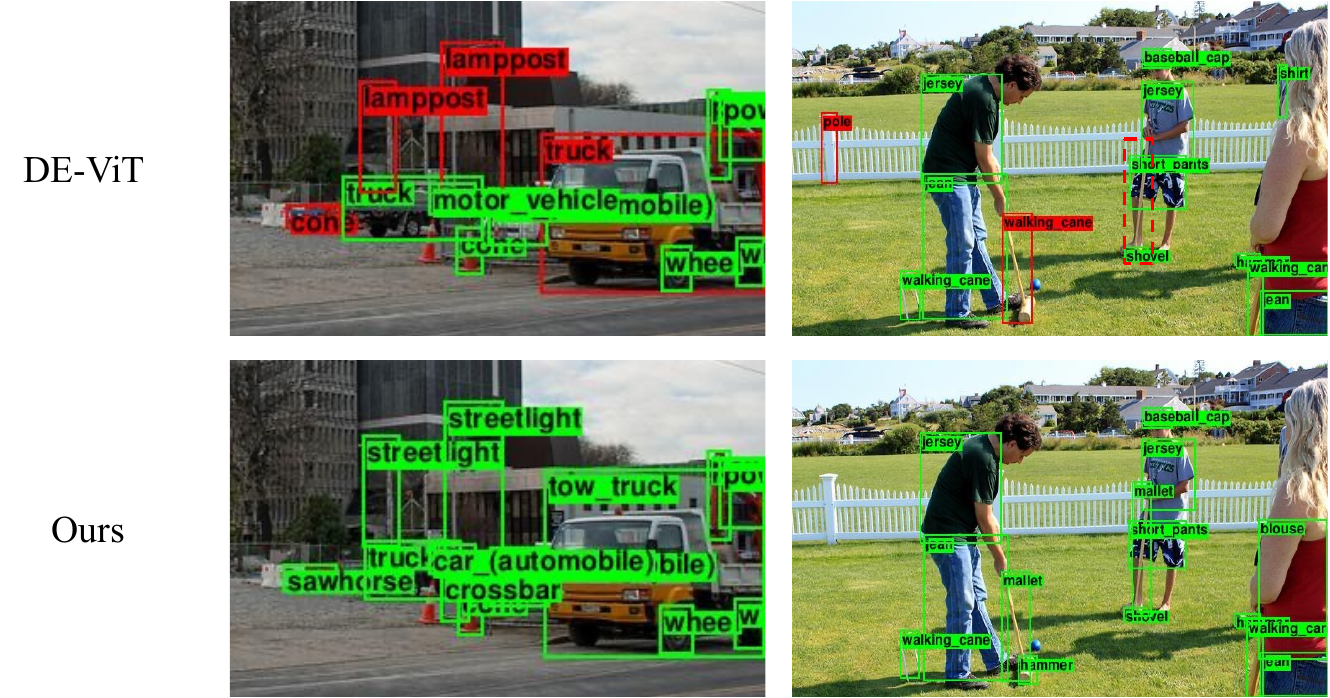}  
        \put (11,38.9) {\scriptsize \cite{zhang2023detect}}
    \end{overpic} 
    \caption{Detection results of DE-ViT and our method in LVIS V1 dataset. The red bounding boxes indicate incorrect detection results, while the green bounding boxes indicate correct detection results.}
    \label{lvis_visualization}
\end{figure}

\subsubsection{FSOD-1K Results} 
We employ DeFRCN~\cite{qiao2021defrcn} and DE-ViT~\cite{zhang2023detect} as baseline models. Due to limitations in video memory, the PCB module within DeFRCN is excluded from our implementation. Since there is no officially released support set for the FSOD-1K dataset, we adopt a 5-shot setup, consistent with other studies~\cite{fan2020few,zhang2022time}. In this setup, 5 instances for each category in the test set are randomly selected for fine-tuning, while the remaining samples are used for evaluation. This experiment is conducted five times to mitigate variability, and the mean of these trials is reported as the final result. It is crucial to highlight that the categories subjected to fine-tuning are strictly novel categories, excluding any base categories. Our evaluation metric is $AP_{50}$. The performance of our method on the FSOD-1K dataset, as shown in Table~\ref{FSOD1k_benchmark}, is compared against other meta-learning-based approaches, showcasing the SOTA results. Specifically, our method registers a $4.0\%$ improvement on DeFRCN and a $17.0\%$ improvement on DE-ViT. These enhancements underscore the efficacy of our approach within the FSOD-1K benchmark. Due to the high annotation quality of the FSOD-1K dataset, DE-ViT outperforms all methods that utilize a ResNet backbone without any fine-tuning. By subsequently applying fine-tuning in conjunction with our method, we achieved significant enhancements. This indicates our method's capability to learn more discriminative representations from data consisting exclusively of novel categories.
Fig.~\ref{fsod1k_visualization} illustrates the detection outcomes for novel categories on the FSOD-1K dataset using DE-ViT and our method, showcasing our method's superior precision in object classification.

\begin{table}[!t]
    \centering
    \caption{Few-shot object detection evaluation results on FSOD-1K~\cite{fan2020few} 5-shot test set. The evaluation metric adopts $AP_{50}$.}
    \vspace{-10pt}
    \resizebox{0.48 \textwidth}{!}{
        \begin{tabular}{cccc|c}
            \toprule
            Method & Paper Year & Backbone & Base Detector & $AP_{50}$ \\ \midrule
            FRCN~\cite{ren2016faster} & TPAMI 16 & ResNet & Faster R-CNN & 23.0 \\
            LSTD~\cite{chen2018lstd} & AAAI 18 & ResNet-50 & Faster R-CNN & 24.2 \\
            FSOD~\cite{fan2020few} & CVPR 20  & ResNet-50 & Faster R-CNN & 27.5 \\
            PNSD~\cite{zhang2020few}  & ACCV 20 & ResNet-50 & Faster R-CNN & 29.8 \\
            QSAM~\cite{lee2022few} & WACV 22 & ResNet-101 & Faster R-CNN & 30.7 \\
            KFSOD~\cite{zhang2022kernelized} & CVPR 22 & ResNet-50 & Faster R-CNN & 33.4 \\
            TENET~\cite{zhang2022time} & ECCV 22 & ResNet-50 & Faster R-CNN & 35.4 \\
            $\sigma$-ADP~\cite{du2023adaptive} & ICCV 23 & ResNet-101 & Faster R-CNN & 36.9 \\ 
            FSODv2~\cite{fan2024fsodv2} & IJCV 24 & ResNet-50 & Faster R-CNN & 30.9 \\
            DeFRCN~\cite{qiao2021defrcn} & ICCV 21 & ResNet-101 & Faster R-CNN & 37.8 \\ 
            \rowcolor{gray!20}
            DeFRCN w/ ours & Our Method & ResNet-101 & Faster R-CNN & $\mathbf{39.3}_{\pm 0.4}$ \\ 
            \midrule \midrule 
            DE-ViT~\cite{zhang2023detect} & CoRL 24 & ViT-L/14 & Faster R-CNN & 40.0 \\ \rowcolor{gray!20}
            DE-ViT w/ ours & Our Method & ViT-L/14 & Faster R-CNN & $\mathbf{46.8}_{\pm 0.3}$ \\
                   \bottomrule
            \end{tabular}
    }
    \label{FSOD1k_benchmark}
\end{table}

\begin{figure}[!t]
    \centering
    \setlength{\abovecaptionskip}{0.cm}
    \begin{overpic}[width=0.48\textwidth,tics=8]{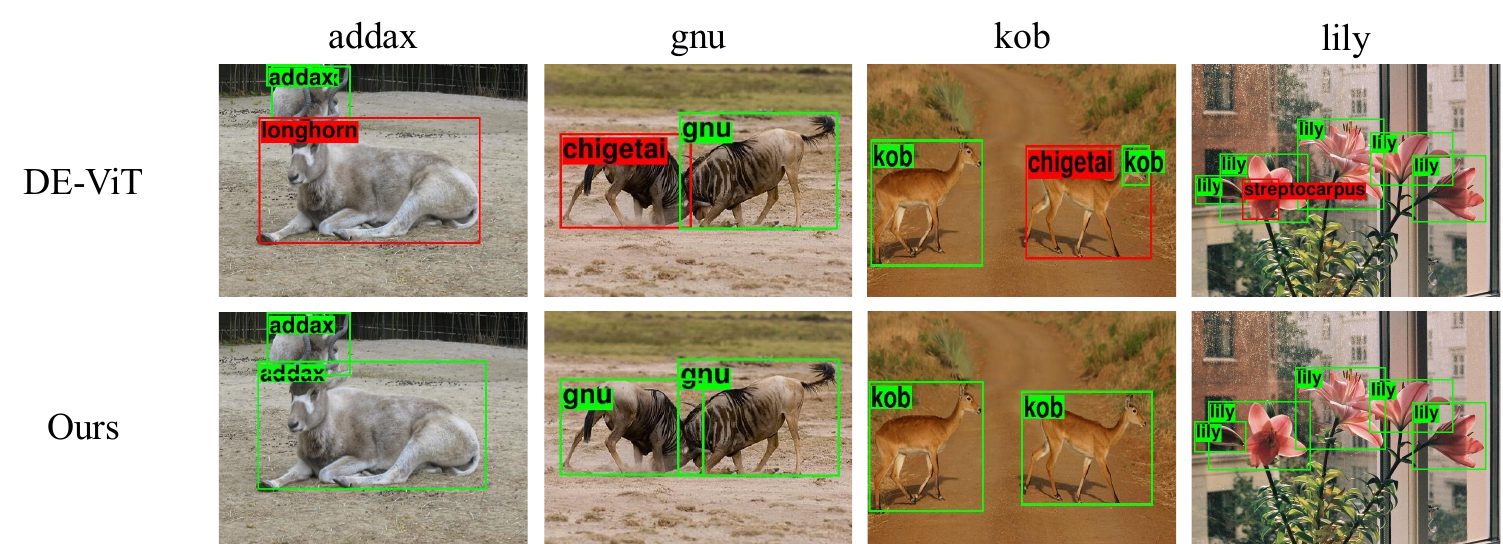}  
        \put (9.6,23.5) {\tiny \cite{zhang2023detect}}
    \end{overpic} 
    \caption{Detection results of DE-ViT and our method in FSOD-1K dataset. The red bounding boxes indicate incorrect detection results, while the green bounding boxes indicate correct detection results.}
    \label{fsod1k_visualization}
\end{figure}

\subsubsection{FSVOD-500 Results} We employ DeFRCN~\cite{qiao2021defrcn} (excluding the PCB module) and DE-ViT~\cite{zhang2023detect} as baseline models, and use $AP$, $AP_{50}$, and $AP_{75}$ as the evaluation metrics. Similar to the FSOD-1K benchmark, the fine-tuning phase only involves novel categories, with no inclusion of base categories. The evaluation results of the FSVOD dataset are shown in Table~\ref{FSVOD_benchmark}, indicating that our method achieves state-of-the-art (SOTA) performance. Specifically, with DeFRCN, our methodology realizes improvements of 1.7 in $AP$, 0.9 in $AP_{50}$, and 1.4 in $AP_{75}$. DE-ViT underperforms compared to DeFRCN without fine-tuning, due to the FSVOD dataset's base set consisting of video clips. This composition offers less sample diversity than seen in datasets like MS COCO, LVIS V1, and FSOD-1K, which in turn limits the effectiveness of novel category inference. However, by incorporating our method to refine the discriminative feature space, we significantly improve the performance, with our method improving the performance by 9.7 in $AP$, 12.0 in $AP_{50}$, and 11.3 in $AP_{75}$, and achieving SOTA results. 
These results highlight the effectiveness of our approach on the FSVOD-500 benchmark.
Fig.~\ref{fsvod_visualization} displays the detection results for novel categories within the FSVOD-500 dataset, comparing the use of DE-ViT and our method. This visual representation demonstrates the superior object classification accuracy afforded by our method.




\subsection{Visual Analysis by Saliency Maps}

We now use the saliency maps to analyze the effect of each component, the base model is TFA++. We use Grad-CAM~\cite{selvaraju2020grad} to calculate the saliency maps. Fig.~\ref{cam} presents the saliency maps of the novel categories on the PASCAL VOC dataset split 2 under the 10-shot setup. We incrementally add components of prototype clustering, knowledge matrix in CCL, and counterfactual data augmentation to the base model to observe the model's effect. For fairness, we select the images of the novel categories that are accurately classified and localized by each model.

\begin{table}[!t]
    \centering
    \caption{Few-shot object detection evaluation results on FSVOD-500~\cite{fan2022few_video} under 5-shot setup. The evaluation metric adopts $AP$, $AP_{50}$, and $AP_{75}$.}
    \resizebox{0.48 \textwidth}{!}{
        \begin{tabular}{cccc|ccc}
        \toprule
        Method & Paper Year & Backbone & Base Detector & $AP$ & $AP_{50}$ & $AP_{75}$ \\ \midrule
        FRCN~\cite{ren2016faster}  & TPAMI 16 & ResNet & Faster R-CNN & 18.2 & 26.4 & 19.6 \\
        FSOD~\cite{fan2020few} & CVPR 20 & ResNet & Faster R-CNN & 21.1 & 31.3 & 22.6 \\ 
        MEGA~\cite{chen2020memory} & CVPR 20 & ResNet & Faster R-CNN & 16.8 & 26.4 & 17.7 \\ 
        RDN~\cite{deng2019relation} & ICCV 19 & ResNet & Faster R-CNN & 18.2 & 27.9 & 19.7 \\ 
        CTracker~\cite{peng2020chained} & ECCV 20 & ResNet & Faster R-CNN & 20.1 & 30.6 & 21.0  \\ 
        FairMOT~\cite{zhang2021fairmot} & IJCV 21 & ResNet & CenterNet & 20.3 & 31.0 & 21.2 \\ 
        CenterTrack~\cite{zhou2020tracking} & ECCV 20 & DLA & CenterNet & 20.6 & 30.5 & 21.9 \\ 
        FSVOD~\cite{fan2022few_video} & ECCV 22 & ResNet & - & 25.1 & 36.8 & 26.2 \\ 
        DeFRCN~\cite{qiao2021defrcn} & ICCV 21 & ResNet & Faster R-CNN & 42.5 & 63.6 & 47.8 \\
        \rowcolor{gray!20}
        DeFRCN w/ ours & Our Method & ResNet & Faster R-CNN & $\mathbf{44.2}$ & $\mathbf{64.5}$ & $\mathbf{49.2}$ \\ 
        \midrule \midrule
        DE-ViT~\cite{zhang2023detect} & CoRL 24 & ViT-L/14 & Faster R-CNN & 37.7 & 54.9 & 39.8 \\ \rowcolor{gray!20}
        DE-ViT w/ ours & Our Method & ViT-L/14 & Faster R-CNN & $\mathbf{47.4}$ & $\mathbf{66.9}$ & $\mathbf{51.1}$ \\
        \bottomrule
        \end{tabular}
    }
    \label{FSVOD_benchmark}
\end{table}

\begin{figure}[!t]
    \centering
    \setlength{\abovecaptionskip}{0.cm}
    \begin{overpic}[width=0.48\textwidth,tics=8]{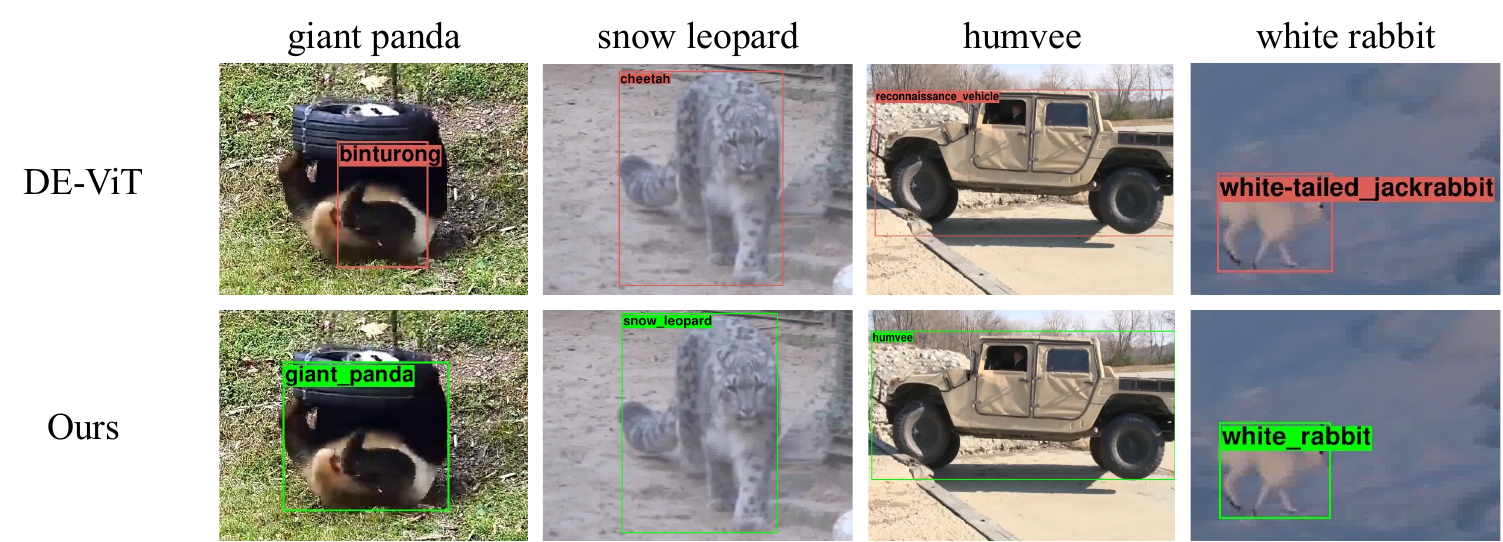}  
        \put (9.6,23.5) {\tiny \cite{zhang2023detect}}
    \end{overpic} 
    \caption{Detection results of DE-ViT and our method in FSVOD-500 dataset. The red bounding boxes indicate incorrect detection results, while the green bounding boxes indicate correct detection results.}
    \label{fsvod_visualization}
\end{figure}

In Fig.~\ref{cam}, we find that after adding the prototype clustering component, the salient regions of the model's decision are more focused on objects. For instance, upon adding the prototype clustering component, the model emphasizes the region of the airplane and reduces its focus on the sky. Similarly, it focuses on the head area of the cow rather than just the limbs, and the horse's body rather than the saddle or human legs. This can be attributed to the fact that clustering the prototypes enhances their representative characteristics and mitigates the bias introduced by a single prototype, as seen in the case of the airplane and the sky. Therefore, it can be inferred that the prototype clustering component facilitates the estimation of unbiased prototypical representations. After introducing a knowledge matrix in CCL to enhance the distinction of similar categories, the model is capable of focusing on more informative regions. For instance, the model is more attentive to the airplane's fuselage, as well as the heads of cows and horses. After introducing counterfactual data augmentation, the model focuses on regions more accurately. These observations provide further evidence of the efficacy of the individual components of our approach.

\begin{figure}[!t]
    \centering
    \setlength{\abovecaptionskip}{0.cm}
    \includegraphics[width = 0.48 \textwidth]{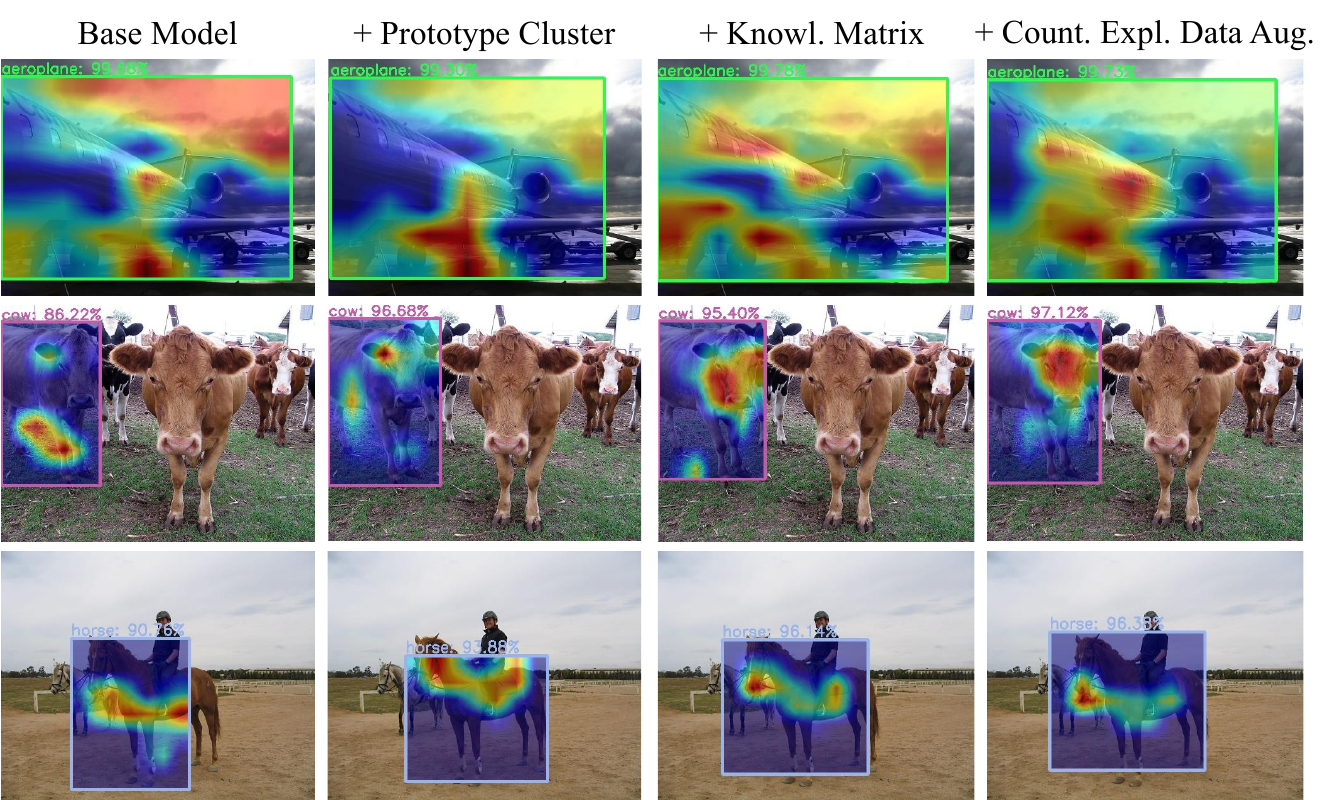}
    \caption{Saliency map visualization of different model components in PASCAL VOC dataset split 2 under the 10-shot setup. Please zoom in for better visualization.}
    \label{cam}
\end{figure}

\subsection{Visualization of CounterFactual Explanation}

We visualize saliency maps of counterfactual interpretability computed by our method during training, with images from the few-shot training set, the base model is TFA++. As shown in Fig. \ref{counter_cam}, we find that the saliency maps assigned to different negative classes focus on different regions. For example, the current model's discriminative region for distinguishing between birds and dogs is the wings, whereas for distinguishing between birds and sheep, it's the paws.
By utilizing this interpretable method, we can gain insights into the key features that the model utilizes to differentiate between the two categories. This not only enhances interpretability but also boosts the model's performance.

\begin{figure}[!t]
    \centering
    \setlength{\abovecaptionskip}{0.cm}
    \includegraphics[width = 0.48 \textwidth]{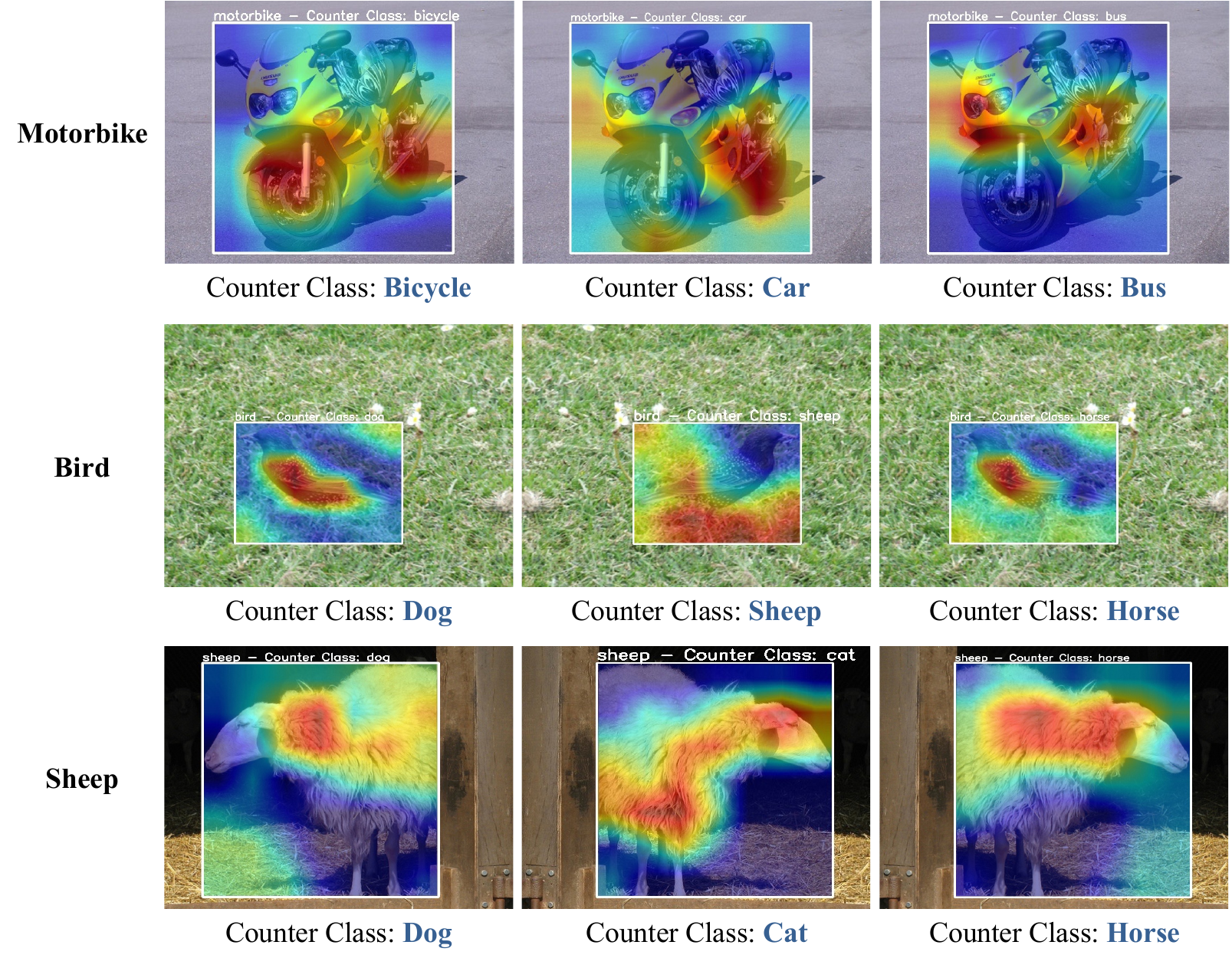}
    \caption{Visualization saliency maps of counterfactual explanation during training. The image flipping in the figure is the default weak data augmentation.}
    \label{counter_cam}
\end{figure}

\section{Conclusion} \label{conclusion}
In this paper, we propose a novel few-shot object detection with side information. Specifically, we introduce a novel contextual semantic supervised contrastive learning module that encodes the visual attribute semantic relationship between base categories and the novel categories. The detector can be guided to strengthen the discrimination between semantically similar categories and improve the separability of the feature space. Furthermore, we proposed a side information guided counterfactual data augmentation method, which can transparently identify the discriminative region of different objects during model training, and can also reduce the generalization error.
Extensive experiments using ResNet and ViT backbones conducted on PASCAL VOC, MS-COCO, LVIS V1, FSOD-1K, and FSVOD-500 benchmarks demonstrate that our model achieves new state-of-the-art performance, significantly improving the ability of FSOD in most of the shot/split.

\appendices
%
%
%
%
%

\ifCLASSOPTIONcaptionsoff
  \newpage
\fi



%
\bibliographystyle{IEEEtran}      
\footnotesize
\bibliography{egbib}

%
\vspace{-10 mm}

\begin{IEEEbiography}[{\includegraphics[width=1in]{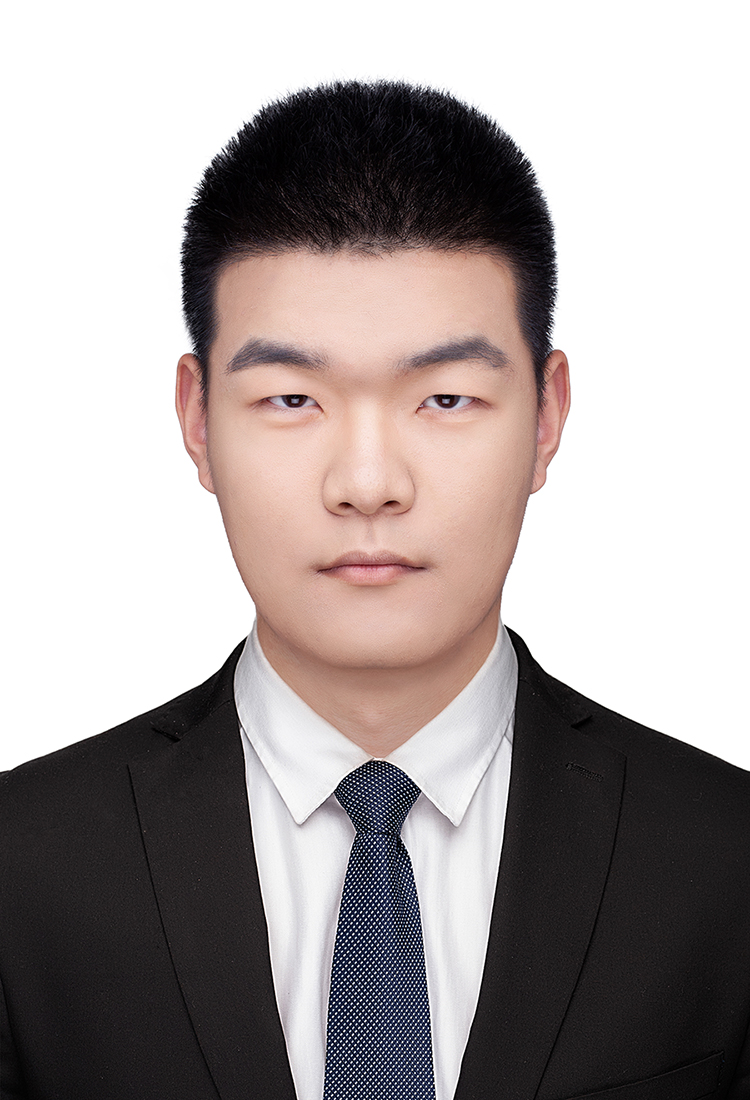}}]{Ruoyu Chen}
    is currently working toward the Ph.D. degree in the School of Cyber Security, University of Chinese Academy of Sciences, China. He received his B.E. degree in measurement \& control technology and instrument from Northeastern University, China in 2021. He has published multiple top journals and conference papers, such as ICLR and CVPR. He has served as a reviewer for several top journals and conferences such as T-PAMI, ECCV, CVPR, ICML, ICCV, ICLR, and NeurIPS. His research interests mainly include computer vision, object detection, and interpretable AI.
\end{IEEEbiography}

\vspace{-10 mm}

\begin{IEEEbiography}[{\includegraphics[width=1in]{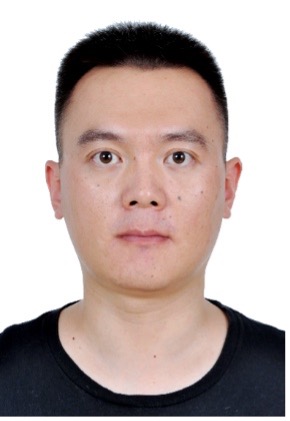}}]{Hua Zhang} is a professor with the Institute of Information Engineering, Chinese Academy of Sciences. He received the Ph.D. degree in computer science from the School of Computer Science and Technology, Tianjin University, Tianjin, China in 2015. His research interests include computer vision, multimedia, and machine learning.
\end{IEEEbiography}

\vspace{-10 mm}

\begin{IEEEbiography}[{\includegraphics[width=1in]{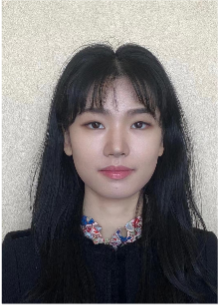}}]{Jingzhi Li} is currently an associate professor with the Institute of Information Engineering, Chinese Academy of Sciences, Beijing, China. She received the Ph.D. degree in cyberspace security from the University of Chinese Academy of Sciences, Beijing, China. Her current research interests include image processing, face recognition security, and multimedia privacy.
\end{IEEEbiography}

\vspace{-10 mm}

\begin{IEEEbiography}[{\includegraphics[width=1in]{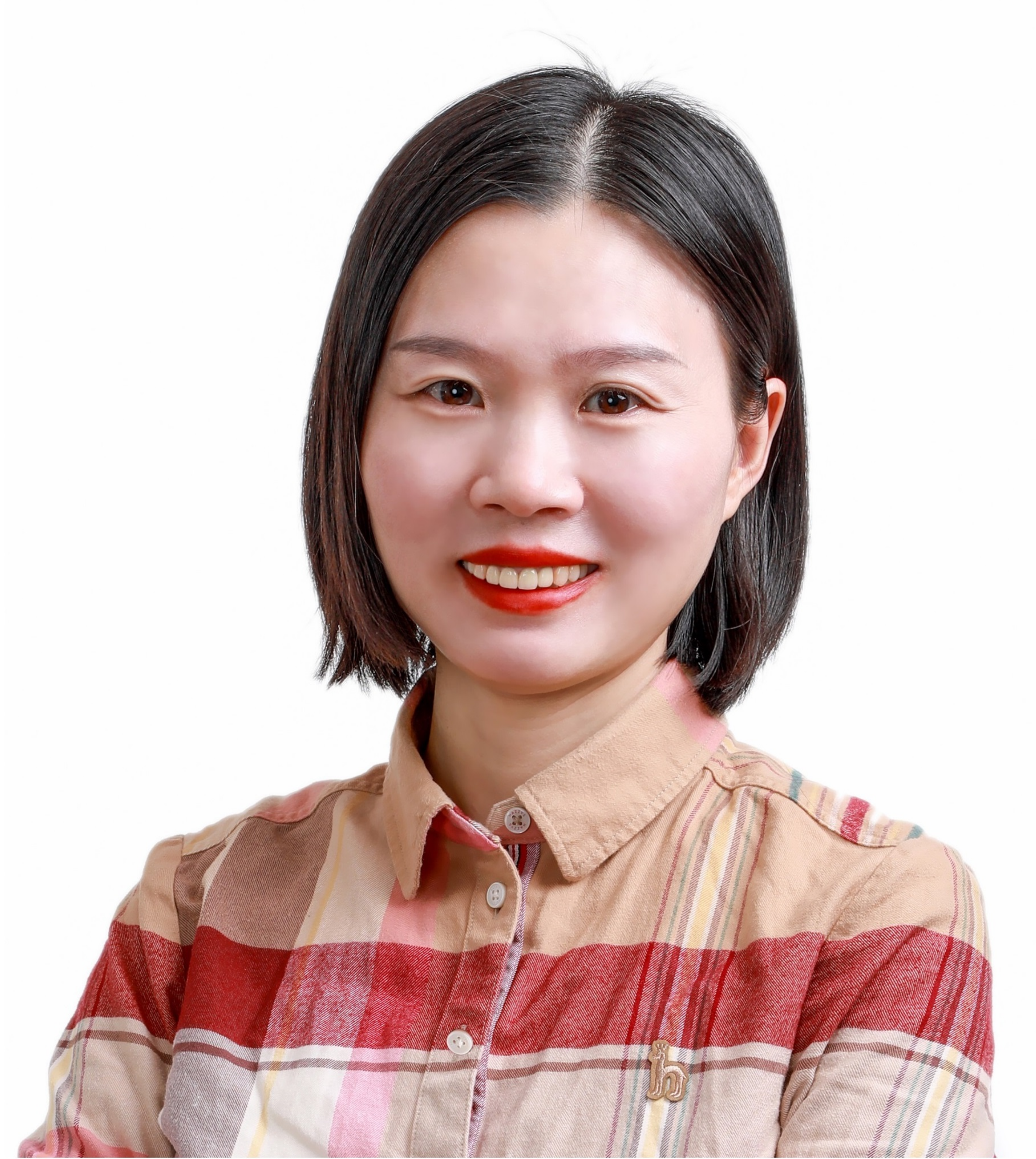}}]{Li Liu}(Senior Member, IEEE) received the Ph.D. degree in information and communication engineering from the National University of Defense Technology (NUDT), China, in 2012.
    She is currently a Full Professor with the College of System Engineering, National University of Defense Technology.
     During her Ph.D. study, she spent more than two years as a Visiting Student at the University of Waterloo, Canada, from 2008 to 2010. From 2015 to 2016, she spent ten months visiting the Multimedia Laboratory at the Chinese University of Hong Kong. From 2016.12 to 2018.11, she worked as a senior researcher at the Machine Vision Group at the University of Oulu, Finland. Her current research interests include Computer Vision, Machine Learning, Artificial Intelligence, Trustworthy AI, and Synthetic Aperture Radar. Her papers have currently over 13,100+ citations in Google Scholar.
\end{IEEEbiography}

\vspace{-10 mm}

\begin{IEEEbiography}[{\includegraphics[width=1in]{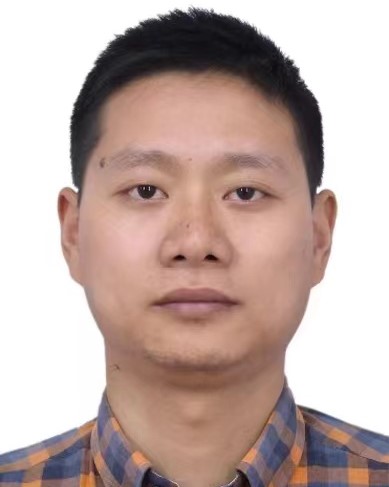}}]{Zhen Huang} received the B.S. and Ph.D. degrees from the National University of Defense Technology (NUDT), Changsha, China, in 2006
and 2012, respectively. He is currently a professor with the National Key Laboratory of Parallel and Distributed Computing, NUDT. His research interests include machine learning algorithms, intelligent systems, and knowledge mining. He has published over 100 papers including top international journals, such as IEEE/ACM Transactions, and international conferences including AAAI, IJCAI, MM, ACL, SIGIR, WWW and others.
\end{IEEEbiography}

\vspace{-10 mm}

\begin{IEEEbiography}[{\includegraphics[width=1in]{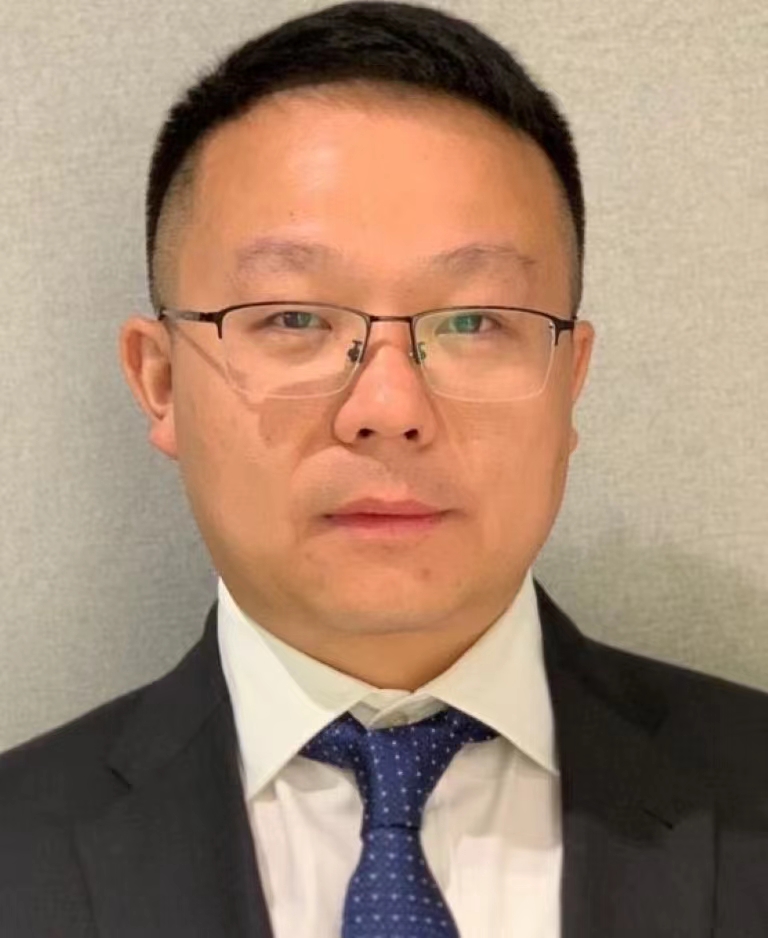}}]{Xiaochun Cao}(Senior Member, IEEE)
    is a Professor and Dean of School of Cyber Science and Technology, Shenzhen Campus of Sun Yat-sen University. He received the B.E. and M.E. degrees both in computer science from Beihang University (BUAA), China, and the Ph.D. degree in computer science from the University of Central Florida, USA, with his dissertation nominated for the university level Outstanding Dissertation Award. After graduation, he spent about three years at ObjectVideo Inc. as a Research Scientist. From 2008 to 2012, he was a professor at Tianjin University. Before joining SYSU, he was a professor at Institute of Information Engineering, Chinese Academy of Sciences. He has authored and coauthored over 200 journal and conference papers. In 2004 and 2010, he was the recipients of the Piero Zamperoni best student paper award at the International Conference on Pattern Recognition. He is on the editorial boards of IEEE \textsc{Transactions on Pattern Analysis and Machine Intelligence} and IEEE \textsc{Transactions on Image Processing}, and was on the editorial boards of IEEE \textsc{Transactions on Circuits and Systems for Video Technology} and IEEE \textsc{Transactions on Multimedia}.
\end{IEEEbiography}




\newpage

\begin{appendices}

\section{Generalization Error Bound Analysis} \label{error_bound}

We perform a theoretical analysis of our method in terms of generalization error bounds. The expected risk $\mathcal{R}(\cdot)$ and empirical risk $\hat{\mathcal{R}}(\cdot)$ of the few-shot object detector's classifier $f$ on dataset $\mathcal{D}$ are defined as:
\begin{gather}
    \mathcal{R}(f) = \mathbb{E}_{(\mathbf{x},y) \sim \mathcal{D}} \left [ \mathcal{L}_{\mathrm{cls}}\left (f(\mathbf{x}) , y \right) \right ],\\
    \hat{\mathcal{R}}(f) = \frac{1}{N}\sum_{i=1}^N \mathcal{L}_{\mathrm{cls}} \left (f(\mathbf{x}) , y \right) ,
\end{gather}
where $N$ denotes the number of samples in the dataset $\mathcal{D}$, and $\mathcal{L}_{\mathrm{cls}}$ denotes the classification loss function. Let $\mathcal{F}$ be a classifier function, few-shot object detection model $\hat{f}_{\mathrm{base}}$ is trained on base set $\mathcal{D}_{\mathrm{base}}$:
\begin{equation}
    \hat{f}_{\mathrm{base}} = \underset{f \in \mathcal{F}}{\arg \min} \hat{\mathcal{R}}_{b} (f),
\end{equation}
and the conventional fine-tuning model $\hat{f}_{\mathrm{novel}}$ is trained on novel set $\mathcal{D}_{\mathrm{novel}}$:
\begin{equation}
    \hat{f}_{\mathrm{novel}} = \underset{f \in \mathcal{F}}{\arg \min} \hat{\mathcal{R}}_{n} (f).
\end{equation}

Since our method introduces a memory prototype bank and CCL module, the prototype bank is continuously stored in the base category prototype, and the base category features are kept aligned with the prototype through CCL. The learning rate used to fine-tune the network is small. The fine-tuned base category feature distribution is basically consistent with the pre-trained base category feature distribution. Therefore, the detector model with CCL $\hat{f}_{\mathrm{CCL}}$ can be seen as joint fine-tuning in the base domain $\mathcal{D}_{\mathrm{base}}$ and novel domain $\mathcal{D}_{\mathrm{novel}}$:
\begin{equation}
    \hat{f}_{\mathrm{CCL}} = \underset{f \in \mathcal{F}}{\arg \min} \hat{\mathcal{R}}_{(b+n)} (f),
\end{equation}
and empirically, for $\hat{\mathcal{R}}_{(b+n)}$, given a hyperparameter $\lambda_c \in [0,1)$,  $\hat{\mathcal{R}}_{(b+n)}$ can be a convex combination of base risk and novel risk:
\begin{equation}
    \hat{\mathcal{R}}_{(b+n)}(\hat{f}_{\mathrm{CCL}}) = (1-\lambda_c) \hat{\mathcal{R}}_{b}(\hat{f}_{\mathrm{CCL}}) + \lambda_c \hat{\mathcal{R}}_{n}(\hat{f}_{\mathrm{CCL}}),
\end{equation}
thus, given the test set $\mathcal{D}_{\mathrm{test}}$, the generalization error is:
\begin{equation}
    \mathcal{R}_{t}(\hat{f}_{\mathrm{CCL}}) - \hat{\mathcal{R}}_{(b+n)}(\hat{f}_{\mathrm{CCL}}),
\end{equation}
then, based on the neural network assumption of Zhang \textit{et al.} \cite{zhang2012generalization} and the theorem of Yang \textit{et al.} \cite{yang2021bridging}, we can bound the generalization error of a network with the following lemma:

\begin{lemma}[\cite{yang2021bridging}] \label{bound}
    For any sample $\mathbf{x}$, its upper bound is $B$, i.e., $\|\mathbf{x}\|\le B$. Let $\mathbf{X}^{N_b} = \{\mathbf{x}_i^{b}\}_{i=1}^{N_b}$ and $\mathbf{X}^{N_n} = \{\mathbf{x}_i^{n}\}_{i=1}^{N_n}$ be two set of i.i.d. samples from the base set $\mathcal{D}_{\mathrm{base}}$ and the novel set $\mathcal{D}_{\mathrm{novel}}$. Assume that the detector has multiple layers, with each layer $l$ having parameters matrices $W_l$ and its Frobenius norm are at most $M_l$. The activation functions be 1-Lipschitz continuous, positive-homogeneous, and applied element-wise. Let $\mathcal{F}$ be a function class ranging from $[0, 1]$. Then, given a $\lambda_c \in [0,1)$, for any $\delta > 0$, we have at least $1-\delta$ probability that,
    \begin{equation} \label{bound_eq}
        \begin{aligned}
            \mathcal{R}_{t}\left(\hat{f}_{\mathrm{CCL}} \right) \leq & \; \hat{\mathcal{R}}_{(b+n)}\left(\hat{f}_{\mathrm{CCL}}\right)+(1-\lambda_c) \gamma_{\mathcal{F}}(\mathcal{D}_{\mathrm{base}}, \mathcal{D}_{\mathrm{novel}})   \\
            & + 2(1-\lambda_c)\hat{\mathfrak{R}}_{\mathrm{base}}(\mathcal{F}) + 3(1-\lambda_c) \sqrt{\frac{\ln (4 / \delta)}{2 N_{b}}}  \\
            & + 2 \lambda_c \hat{\mathfrak{R}}_{\mathrm{novel}}(\mathcal{F}) + 3 \lambda_c \sqrt{\frac{\ln (4 / \delta)}{2 N_{n}}} \\
            & + \sqrt{\frac{\ln (4 / \delta)}{2}\left ( \frac{(1-\lambda_c)^2}{N_b} + \frac{\lambda_c^2}{N_n} \right)},
        \end{aligned}
    \end{equation}
    where $\gamma_{\mathcal{F}}(\cdot, \cdot)$ is the integral probability metric \cite{muller1997integral} that measures the distance between two distributions. $N_b$ and $N_n$ are the sample sizes of $\mathcal{D}_{\mathrm{base}}$ and $\mathcal{D}_{\mathrm{novel}}$ respectively. $\hat{\mathfrak{R}}(\cdot) $ denotes the Rademacher Complexity \cite{bartlett2002rademacher}.
\end{lemma}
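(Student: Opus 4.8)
The plan is to treat $\hat{f}_{\mathrm{CCL}}$ as the empirical risk minimizer of the mixture objective $\hat{\mathcal{R}}_{(b+n)} = (1-\lambda_c)\hat{\mathcal{R}}_b + \lambda_c \hat{\mathcal{R}}_n$ and to reduce the statement to the multi-source domain-adaptation bound of Yang \textit{et al.}~\cite{yang2021bridging} instantiated for our network class. First I would write the target gap $\mathcal{R}_t(\hat f_{\mathrm{CCL}}) - \hat{\mathcal{R}}_{(b+n)}(\hat f_{\mathrm{CCL}})$ and insert the expected mixture risk $(1-\lambda_c)\mathcal{R}_b(\hat f_{\mathrm{CCL}}) + \lambda_c\mathcal{R}_n(\hat f_{\mathrm{CCL}})$ as an intermediate quantity, splitting the gap into (i) a \emph{bias} part $\mathcal{R}_t(\hat f_{\mathrm{CCL}}) - [(1-\lambda_c)\mathcal{R}_b(\hat f_{\mathrm{CCL}}) + \lambda_c\mathcal{R}_n(\hat f_{\mathrm{CCL}})]$ and (ii) an \emph{estimation} part $[(1-\lambda_c)\mathcal{R}_b(\hat f_{\mathrm{CCL}}) + \lambda_c\mathcal{R}_n(\hat f_{\mathrm{CCL}})] - \hat{\mathcal{R}}_{(b+n)}(\hat f_{\mathrm{CCL}})$.

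For part (i), the key observation is that $\mathcal{F}$ together with the bounded loss $\mathcal{L}_{\mathrm{cls}}\in[0,1]$ lets one control the difference of expected risks over any two distributions by the integral probability metric: for every $f\in\mathcal{F}$, $\mathcal{R}_t(f) - \mathcal{R}_b(f) \le \gamma_{\mathcal{F}}(\mathcal{D}_{\mathrm{test}},\mathcal{D}_{\mathrm{base}})$, and the test distribution can be absorbed into the base/novel mixture via a triangle-type inequality for IPMs, so that this collapses (up to the $(1-\lambda_c)$ weight) to the single term $\gamma_{\mathcal{F}}(\mathcal{D}_{\mathrm{base}},\mathcal{D}_{\mathrm{novel}})$. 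This is exactly where the CCL / memory-bank modeling assumption enters: because the base prototypes are continually stored and the base features kept aligned with them through $\mathcal{L}_{CCL}$ under a small fine-tuning learning rate, the post-fine-tuning base feature distribution essentially coincides with the pre-trained one, so the effective ``source'' of the transfer is $\mathcal{D}_{\mathrm{base}}$ and only the novel shift is charged.

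For part (ii), I would run the standard symmetrization argument separately on the base and novel empirical processes. Uniformly over $f\in\mathcal{F}$, McDiarmid's bounded-differences inequality (constants $1/N_b$, since the loss lies in $[0,1]$) together with the ghost-sample symmetrization yields $\sup_{f\in\mathcal{F}}\big(\mathcal{R}_b(f)-\hat{\mathcal{R}}_b(f)\big) \le 2\,\hat{\mathfrak{R}}_{\mathrm{base}}(\mathcal{F}) + 3\sqrt{\ln(4/\delta)/(2N_b)}$ with probability at least $1-\delta/4$, and symmetrically for the novel term with $N_n$; the 1-Lipschitz, positive-homogeneous, element-wise activations allow the Ledoux--Talagrand contraction to move $\mathcal{L}_{\mathrm{cls}}$ through the network, while the layerwise Frobenius-norm constraints $\|W_l\|_F\le M_l$ keep each $\hat{\mathfrak{R}}(\mathcal{F})$ finite (and, via the peeling/induction argument underlying the assumption of Zhang \textit{et al.}~\cite{zhang2012generalization}, explicit in the $M_l$ and $B$ if a closed form is wanted). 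A further Hoeffding-type bound on the convex combination of the empirical losses, together with the fact that $\hat{\mathcal{R}}_{(b+n)}$ is exactly that convex combination, accounts for the last term $\sqrt{\tfrac{\ln(4/\delta)}{2}\big(\tfrac{(1-\lambda_c)^2}{N_b}+\tfrac{\lambda_c^2}{N_n}\big)}$. Finally I would collect the four high-probability events, apply a union bound so each receives $\delta/4$ (hence the $\ln(4/\delta)$), and rearrange to obtain Eq.~\eqref{bound_eq}.

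The main obstacle is step (i): rigorously justifying that the test risk is controlled by the $(1-\lambda_c)$-weighted mixture plus a \emph{single} discrepancy term $\gamma_{\mathcal{F}}(\mathcal{D}_{\mathrm{base}},\mathcal{D}_{\mathrm{novel}})$. This requires making the ``$\hat f_{\mathrm{CCL}}\approx$ joint fine-tuning on $\mathcal{D}_{\mathrm{base}}\cup\mathcal{D}_{\mathrm{novel}}$'' claim precise --- i.e., that the memory bank together with the small learning rate effectively freezes the base feature distribution, so the genuine source domain is $\mathcal{D}_{\mathrm{base}}$ --- and then folding $\mathcal{D}_{\mathrm{test}}$ into the mixture through the IPM triangle inequality. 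Once this modeling reduction is in place, the remaining Rademacher/concentration machinery is routine, which is why the bound is attributed to \cite{yang2021bridging} and used here essentially off the shelf.
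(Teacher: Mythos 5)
The paper does not actually prove Lemma~\ref{bound}: it is imported verbatim from Yang \emph{et al.}~\cite{yang2021bridging} and used as a black box, so there is no in-paper proof to compare your reconstruction against. That said, your sketch correctly identifies the standard ingredients of such a multi-source generalization bound (decompose into a distribution-shift/bias piece and a per-source estimation piece, control each estimation piece with McDiarmid plus symmetrization and Ledoux--Talagrand contraction through the norm-constrained network, control the weighted empirical mixture with a separate bounded-differences step, and allocate $\delta/4$ to each event to produce the $\ln(4/\delta)$).

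The step that does not go through as you describe is the bias part. If you introduce a genuinely separate test distribution $\mathcal{D}_{\mathrm{test}}$ and then try to ``fold it in'' via the IPM triangle inequality, you inevitably pick up an extra term of the form $\gamma_{\mathcal{F}}(\mathcal{D}_{\mathrm{test}},\mathcal{D}_{\mathrm{novel}})$ (or a $\lambda_c$-weighted pair of discrepancies), which does not appear in Eq.~\eqref{bound_eq}. The single $(1-\lambda_c)\gamma_{\mathcal{F}}(\mathcal{D}_{\mathrm{base}},\mathcal{D}_{\mathrm{novel}})$ term arises only if the target distribution \emph{is} the novel distribution, so that the bias is purely algebraic:
\begin{equation*}
\mathcal{R}_n(f)-\bigl[(1-\lambda_c)\mathcal{R}_b(f)+\lambda_c\mathcal{R}_n(f)\bigr]=(1-\lambda_c)\bigl(\mathcal{R}_n(f)-\mathcal{R}_b(f)\bigr)\le(1-\lambda_c)\,\gamma_{\mathcal{F}}(\mathcal{D}_{\mathrm{base}},\mathcal{D}_{\mathrm{novel}}),
\end{equation*}
and this identification is exactly what the paper later relies on (at $\lambda_c=1$ the bound must collapse to the standard single-source bound over $\mathcal{D}_{\mathrm{novel}}$, which is the form used in the appendix proof of the paper's Theorem 1). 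Finally, the discussion of the memory bank, small learning rate, and ``joint fine-tuning'' is the paper's heuristic justification for \emph{modelling} $\hat f_{\mathrm{CCL}}$ as the minimizer of the convex mixture $\hat{\mathcal{R}}_{(b+n)}$; it belongs to the setup preceding the lemma, not to the lemma's proof, so you should not treat it as a mathematical obstacle in the derivation itself.
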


Based on Lemma \ref{bound}, we have:
\begin{theorem}
    When using the CCL module, the generalization error of the detector $\hat{f}_{\mathrm{CCL}}$ is approximately:
    \begin{equation} \label{bound_eq_appro}
        \begin{aligned}
            \mathcal{R}_{t}(\hat{f}_{\mathrm{CCL}}) \leq & \; (1-\lambda_c) \gamma_{\mathcal{F}}(\mathcal{D}_{\mathrm{base}}, \mathcal{D}_{\mathrm{novel}})\\
            &  + 2 \lambda_c \hat{\mathfrak{R}}_{\mathrm{novel}}(\mathcal{F})  + 4 \lambda_c \sqrt{\frac{\ln (4 / \delta)}{2 N_{n}}},
        \end{aligned}
    \end{equation}
    the generalization error of the model will be reduced, and the generalization error can be further reduced by introducing the knowledge matrix.
\end{theorem}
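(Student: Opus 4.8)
The plan is to start from the full bound of Lemma~\ref{bound} and simplify it using the structural asymmetry of the few-shot setting (abundant base set, tiny novel set), and then to compare the resulting expression with the bound one would obtain for plain fine-tuning on the novel set.

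First I would exploit $N_b \gg N_n$. The base-domain Rademacher term $2(1-\lambda_c)\hat{\mathfrak{R}}_{\mathrm{base}}(\mathcal{F})$ is $O(N_b^{-1/2})$ under the Frobenius-norm and boundedness assumptions on the layers, the base concentration term $3(1-\lambda_c)\sqrt{\ln(4/\delta)/(2N_b)}$ is also $O(N_b^{-1/2})$, and so is the $\tfrac{(1-\lambda_c)^2}{N_b}$ summand inside the last square root of \eqref{bound_eq}; all three are negligible next to the $N_n^{-1/2}$ novel-domain terms and can be dropped. In addition, because the memory prototype bank retains the complete base prototypes and the CCL loss keeps the base features aligned with them throughout the small-step fine-tuning, the training risk $\hat{\mathcal{R}}_{(b+n)}(\hat{f}_{\mathrm{CCL}})$ is essentially the (small) base training risk plus a vanishing novel-fit residual, so I would treat it as negligible too.

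Collecting what remains, the last surviving square root equals $\lambda_c\sqrt{\ln(4/\delta)/(2N_n)}$, which adds to the existing $3\lambda_c\sqrt{\ln(4/\delta)/(2N_n)}$ to give $4\lambda_c\sqrt{\ln(4/\delta)/(2N_n)}$; together with $(1-\lambda_c)\gamma_{\mathcal{F}}(\mathcal{D}_{\mathrm{base}},\mathcal{D}_{\mathrm{novel}})$ and $2\lambda_c\hat{\mathfrak{R}}_{\mathrm{novel}}(\mathcal{F})$ this is exactly the right-hand side of \eqref{bound_eq_appro}. To see that the error is genuinely reduced, I would note that vanilla fine-tuning on $\mathcal{D}_{\mathrm{novel}}$ alone corresponds to $\lambda_c = 1$, with bound $2\hat{\mathfrak{R}}_{\mathrm{novel}}(\mathcal{F})+4\sqrt{\ln(4/\delta)/(2N_n)}$ and no discrepancy term; taking $\lambda_c\in[0,1)$ rescales the novel terms by $\lambda_c$ at the price of the extra $(1-\lambda_c)\gamma_{\mathcal{F}}$, so the CCL bound is strictly tighter whenever $\gamma_{\mathcal{F}}(\mathcal{D}_{\mathrm{base}},\mathcal{D}_{\mathrm{novel}})<\tfrac{\lambda_c}{1-\lambda_c}\big(2\hat{\mathfrak{R}}_{\mathrm{novel}}(\mathcal{F})+4\sqrt{\ln(4/\delta)/(2N_n)}\big)$, and the feature-alignment property of CCL is precisely what keeps $\gamma_{\mathcal{F}}$ small. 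For the knowledge matrix, I would argue that the factor $\boldsymbol{\zeta}_{y^p,y^f}$ amplifies the repulsion between visually similar base/novel category pairs — the pairs responsible for cross-domain confusion — which shrinks the class of witness functions that separate $\mathcal{D}_{\mathrm{base}}$ from $\mathcal{D}_{\mathrm{novel}}$ in the learned representation, hence decreases $\gamma_{\mathcal{F}}$ further (equivalently, if one keeps $\hat{\mathcal{R}}_{(b+n)}$, it lowers the empirical classification risk); since $1-\lambda_c>0$, the bound \eqref{bound_eq_appro} then improves.

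The hard part is not the bookkeeping of the first two steps but the two qualitative claims behind the conclusion — that $\hat{\mathcal{R}}_{(b+n)}(\hat{f}_{\mathrm{CCL}})$ is genuinely negligible, and, more delicately, that the knowledge matrix provably \emph{decreases} $\gamma_{\mathcal{F}}(\mathcal{D}_{\mathrm{base}},\mathcal{D}_{\mathrm{novel}})$ rather than merely perturbing it. Making the latter rigorous would require a quantitative model of how the $\boldsymbol{\zeta}$-reweighted contrastive objective reshapes the feature map $\phi_{\theta}$, which is why the result is stated as an approximate bound.
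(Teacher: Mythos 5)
Your derivation of the approximate bound and the CCL--versus--plain-fine-tuning comparison follows the paper's own proof essentially step for step: drop every $N_b^{-1/2}$ term together with the empirical training risk, combine the two surviving $\sqrt{\ln(4/\delta)/(2N_n)}$ contributions into the factor $4\lambda_c$, and then note that the resulting expression lies below the $\lambda_c=1$ (novel-only) bound whenever $\gamma_{\mathcal{F}}$ is small. Your version is marginally sharper in that you state the explicit threshold $\gamma_{\mathcal{F}}<\tfrac{\lambda_c}{1-\lambda_c}\bigl(2\hat{\mathfrak{R}}_{\mathrm{novel}}(\mathcal{F})+4\sqrt{\ln(4/\delta)/(2N_n)}\bigr)$, whereas the paper replaces $\gamma_{\mathcal{F}}$ by a small $\epsilon$ and writes $\mathcal{R}_t(\hat{f}_{\mathrm{CCL}})\precsim\epsilon+\lambda_c\sup\mathcal{R}_t(\hat{f}_{\mathrm{novel}})<\sup\mathcal{R}_t(\hat{f}_{\mathrm{novel}})$; these are logically equivalent formulations of the same comparison.

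Where you genuinely diverge is the final knowledge-matrix claim. You argue that the $\boldsymbol{\zeta}$-reweighted repulsion reduces the discrepancy $\gamma_{\mathcal{F}}(\mathcal{D}_{\mathrm{base}},\mathcal{D}_{\mathrm{novel}})$, i.e.\ you shrink the first term of the approximate bound with $\lambda_c$ held fixed. The paper instead argues that amplifying negative weights on confusable base/novel pairs is equivalent to placing more weight $(1-\lambda_c)$ on the base empirical risk, i.e.\ the knowledge matrix \emph{effectively lowers $\lambda_c$}, and since (after absorbing $\gamma_{\mathcal{F}}$ into the small $\epsilon$) the right-hand side of the approximate bound is increasing in $\lambda_c$, a smaller $\lambda_c$ tightens it. These are different knobs in the same inequality, and neither route is fully rigorous: $\gamma_{\mathcal{F}}=\sup_{f\in\mathcal{F}}|\mathcal{R}_b(f)-\mathcal{R}_n(f)|$ is a distributional discrepancy rather than a classifier margin, so improved base/novel separability does not self-evidently reduce it; and $\lambda_c$ is a free decomposition parameter in Lemma~\ref{bound}, not a quantity the loss literally moves. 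You are more candid than the paper about this last step being heuristic, which is to your credit. If you wished to match the paper's mechanism, you would phrase it as: upweighting confusable negative pairs makes the joint objective behave as though the base domain carried a larger share, so the effective $\lambda_c$ in the convex split of $\hat{\mathcal{R}}_{(b+n)}$ decreases, which tightens the approximate bound.
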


\begin{proof}
    Equation \ref{bound_eq} shows the generalization error $\mathcal{R}_{t}\left(\hat{f}_{\mathrm{CCL}} \right)$ is bounded by the empirical training risk $\hat{\mathcal{R}}_{(b+n)}\left(\hat{f}_{\mathrm{CCL}}\right)$, the domain gap $\gamma_{\mathcal{F}}(\mathcal{D}_{\mathrm{base}}, \mathcal{D}_{\mathrm{novel}}) $ and the estimation error. Among them, the empirical training risk can be minimized by the detector to arbitrary small value, and the number of samples of $N_b$ is very large, so $1/N_b$ can be approximately 0. Therefore, we can deduce Equation \ref{bound_eq_appro}. According to Lemma \ref{bound}, when CCL is not used, the detector is equivalent to optimizing only on the novel set and its generalization error is:
    \begin{equation} \label{}
        \mathcal{R}_{t}(\hat{f}_{\mathrm{novel}}) \precsim  2 \hat{\mathfrak{R}}_{\mathrm{novel}}(\mathcal{F})  + 4 \sqrt{\frac{\ln (4 / \delta)}{2 N_{n}}},
    \end{equation}
    since the few-shot object detection model maintains the performance of recognizing base categories, base set and novel set come from the same dataset, under this assumption, the domain gap $\gamma_{\mathcal{F}}(\mathcal{D}_{\mathrm{base}}, \mathcal{D}_{\mathrm{novel}})  \triangleq \sup_{f \in \mathcal{F}} |\mathcal{R}_b(f_{\mathrm{CCL}}) - \mathcal{R}_n(f_{\mathrm{CCL}})|$ is small, which can be represented by a small value $\epsilon$. Then:
    \begin{equation} \label{compare_eq}
        \mathcal{R}_{t}(\hat{f}_{\mathrm{CCL}}) \precsim \epsilon + \lambda_c \sup_{f \in \mathcal{F}} \left( \mathcal{R}_{t}(\hat{f}_{\mathrm{novel}}) \right) < \sup_{f \in \mathcal{F}} \left( \mathcal{R}_{t}(\hat{f}_{\mathrm{novel}}) \right ),
    \end{equation}
    therefore, after introducing CCL, the generalization error of the model will decrease. When the knowledge matrix is introduced, the model will focus on extremely similar base categories and novel categories. This process can be seen as strengthening the discrimination and preventing extremely similar novel classes from being recognized as base classes. It is equivalent to increasing the weight $(1-\lambda_c)$ of base empirical risk $\hat{\mathcal{R}}_{\mathrm{base}}$, the value of $\lambda_c$ will be smaller. According to Equation \ref{compare_eq}, $\mathcal{R}_{t}(\hat{f}_{\mathrm{CCL}})$ is positively correlated with $\lambda_c$, so when $\lambda_c$ decreases, the error bound of the model will be lower.
\end{proof}

We next analyze the proposed explainable data augmentation method. To this end, we introduce notation for the domains of the real data used for training and the data augmented by the counterfactual method, denoting them as $\mathcal{D}_r$ and $\mathcal{D}_e$, respectively. The empirical risk of training on real data and augmented data are denoted as $\hat{\mathcal{R}}_r$ and $\hat{\mathcal{R}}_e$, respectively. The process of detector optimizing data $\mathcal{D}_r$ and $\mathcal{D}_e$ can be regarded as:
\begin{equation}
    \hat{f}_{e} = \underset{f \in \mathcal{F}}{\arg \min} \hat{\mathcal{R}}_{(r+e)} (f),
\end{equation}
when given an hyper parameter $\lambda_g \in [0, 1)$, $ \hat{\mathcal{R}}_{(r+e)} (\hat{f}_e)$ can be a convex combination of risks $\hat{\mathcal{R}}_{r} (\hat{f}_e)$ and $\hat{\mathcal{R}}_{e} (\hat{f}_e)$:
\begin{equation}
    \hat{\mathcal{R}}_{(r+e)} (\hat{f}_e) = (1-\lambda_g) \hat{\mathcal{R}}_{r} (\hat{f}_e) + \lambda_g \hat{\mathcal{R}}_{e} (\hat{f}_e),
\end{equation}
then based on Lemma \ref{bound}, we have:
\begin{proposition}
    When using the counterfactual explanation method for data augmentation, for any $\delta > 0$, we have at least $1-\delta$ probability that,
    \begin{equation} \label{proposition_e}
        \begin{aligned}
            \mathcal{R}_{t}(\hat{f}_{e}) \leq & \; \hat{\mathcal{R}}_{(r+e)}(\hat{f}_{e}) + (1-\lambda_g) \gamma_{\mathcal{F}}(\mathcal{D}_{r}, \mathcal{D}_{e})\\
            & + 2 (1 - \lambda_g) \hat{\mathfrak{R}}_{r}(\mathcal{F})  + 3 (1 - \lambda_g) \sqrt{\frac{\ln (4 / \delta)}{2 N_{r}}}\\
            &  + 2 \lambda_g \hat{\mathfrak{R}}_{e}(\mathcal{F})  + 3 \lambda_g \sqrt{\frac{\ln (4 / \delta)}{2 N_{e}}} \\
            & + \sqrt{\frac{\ln (4 / \delta)}{2}\left ( \frac{(1-\lambda_g)^2}{N_r} + \frac{\lambda_g^2}{N_e} \right)},
        \end{aligned}
    \end{equation}
    where $N_r$ and $N_e$ are the sample sizes of $\mathcal{D}_r$ and $\mathcal{D}_e$ respectively.
\end{proposition}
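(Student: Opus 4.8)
The plan is to recognise that the learning problem for $\hat{f}_{e}$ is, structurally, the very same two-domain joint-training problem that Lemma~\ref{bound} already controls, and then to invoke that lemma with the pair $(\mathcal{D}_{\mathrm{base}},\mathcal{D}_{\mathrm{novel}})$ replaced by $(\mathcal{D}_{r},\mathcal{D}_{e})$ and $\lambda_c$ replaced by $\lambda_g$. Concretely, $\hat{f}_{e}$ is the minimiser of $\hat{\mathcal{R}}_{(r+e)} = (1-\lambda_g)\hat{\mathcal{R}}_{r} + \lambda_g \hat{\mathcal{R}}_{e}$, a convex combination of an empirical risk over $N_r$ real images and an empirical risk over $N_e$ counterfactually erased images; this is exactly the hypothesis shape of Lemma~\ref{bound}. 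Since the counterfactual augmentation changes only the inputs and not the network, the architectural conditions of the lemma (a multilayer detector with per-layer Frobenius norm at most $M_l$, $1$-Lipschitz positive-homogeneous element-wise activations, and $\mathcal{F}$ ranging in $[0,1]$) transfer without modification.

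First I would dispose of the input-level hypotheses. The augmented sample $\tilde{\mathbf{x}} = \mathbf{x}\odot\boldsymbol{H} + \boldsymbol{E}\odot(\mathbf{1}-\boldsymbol{H})$ replaces erased pixels by values in $[0,255]$, i.e.\ in the same range as the original pixels, so $\|\tilde{\mathbf{x}}\| \le B$ with the same bound $B$, and the boundedness assumption of Lemma~\ref{bound} is preserved. I would then define $\mathcal{D}_{e}$ as the distribution obtained by pushing $\mathcal{D}_{r}$ — together with the independent uniform fill $\boldsymbol{E}$ and the counter-class choice dictated by the knowledge matrix $\boldsymbol{\zeta}$ — through the counterfactual masking map; conditioning on the detector parameters that generate the attribution maps $\mathcal{A}_c,\mathcal{A}_{c'}$ makes this map deterministic, so the erased images form an i.i.d.\ sample $\mathbf{X}^{N_e}$ from $\mathcal{D}_{e}$, which is all that the empirical-Rademacher term $\hat{\mathfrak{R}}_{e}(\mathcal{F})$ and the concentration steps inside Lemma~\ref{bound} require.

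Second, with the architectural and data hypotheses in place, I would apply Lemma~\ref{bound} under the substitution $\mathcal{D}_{\mathrm{base}}\to\mathcal{D}_{r}$, $\mathcal{D}_{\mathrm{novel}}\to\mathcal{D}_{e}$, $N_b\to N_r$, $N_n\to N_e$, $\lambda_c\to\lambda_g$, $\hat{f}_{\mathrm{CCL}}\to\hat{f}_{e}$: the integral-probability-metric term becomes $\gamma_{\mathcal{F}}(\mathcal{D}_{r},\mathcal{D}_{e})$, the two Rademacher/concentration blocks become the $(1-\lambda_g)$-weighted terms over $\mathcal{D}_{r}$ and the $\lambda_g$-weighted terms over $\mathcal{D}_{e}$, and the final square-root term picks up $(1-\lambda_g)^2/N_r + \lambda_g^2/N_e$, which is precisely Equation~\ref{proposition_e}, holding with probability at least $1-\delta$. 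I expect the main obstacle to be the second step: rigorously justifying that the counterfactually augmented images may be treated as an i.i.d.\ sample from a \emph{fixed} distribution $\mathcal{D}_{e}$ — in practice the mask depends on the still-evolving parameters and on the random template, so one must condition appropriately (or argue in expectation over $\boldsymbol{E}$) and check that this conditioning does not break the independence used in the concentration inequalities behind Lemma~\ref{bound}. Once $\mathcal{D}_{e}$ is pinned down, the remainder is a verbatim re-reading of the lemma.
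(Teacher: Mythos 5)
Your proposal follows exactly the paper's route: the paper derives the proposition by writing $\hat{\mathcal{R}}_{(r+e)}(\hat{f}_e)=(1-\lambda_g)\hat{\mathcal{R}}_{r}(\hat{f}_e)+\lambda_g\hat{\mathcal{R}}_{e}(\hat{f}_e)$ and then invoking Lemma~\ref{bound} verbatim with the substitutions $\mathcal{D}_{\mathrm{base}}\to\mathcal{D}_{r}$, $\mathcal{D}_{\mathrm{novel}}\to\mathcal{D}_{e}$, $\lambda_c\to\lambda_g$, $N_b\to N_r$, $N_n\to N_e$. You are in fact more careful than the paper, which does not explicitly re-verify the boundedness of the masked inputs nor discuss whether the augmented samples can be treated as i.i.d.\ draws from a fixed $\mathcal{D}_{e}$; the obstacle you flag is real and is simply glossed over in the original argument.
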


\begin{theorem} \label{EXAU_sup}
    Assume that the augmented set $\mathbf{X}^{N_e} = \{\mathbf{x}_i^e\}_{i=1}^{N_e}$ generated by the counterfactual explanation method is not out-of-distribution samples, and the number of augmented samples $N_e=k_eN_r$, where $k_e > 1$. For the detector $\hat{f}_e$ trained with augmented set and $\hat{f}$ trained without augmented set, we have:
    \begin{equation} \label{}
        \sup \left( \mathcal{R}_{t}\left( \hat{f}_e \right ) \right ) < \sup \left( \mathcal{R}_{t}\left( \hat{f} \right ) \right ).
    \end{equation}
\end{theorem}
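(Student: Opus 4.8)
The plan is to reuse the template of the CCL generalization argument, replacing Lemma~\ref{bound} by Proposition~\ref{proposition_e} and exploiting the fact that augmentation enlarges the effective sample size. First I would discard the terms that are negligible: since $\hat{f}_e$ minimizes $\hat{\mathcal{R}}_{(r+e)}$, the empirical risk $\hat{\mathcal{R}}_{(r+e)}(\hat{f}_e)$ can be made arbitrarily small and is dropped; and by the hypothesis that the counterfactually masked images $\mathbf{X}^{N_e}$ are not out-of-distribution, the integral probability metric $\gamma_{\mathcal{F}}(\mathcal{D}_r, \mathcal{D}_e) \triangleq \sup_{f \in \mathcal{F}} |\mathcal{R}_r(f) - \mathcal{R}_e(f)|$ is bounded by a small constant $\epsilon'$. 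Intuitively $\epsilon'$ is even smaller than the base/novel gap $\epsilon$ appearing in the CCL analysis, because $\mathcal{D}_e$ is obtained from $\mathcal{D}_r$ by a local erasure that fills pixels with values in $[0,255]$ and hence preserves $\|\mathbf{x}\| \le B$.

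Next I would fix the mixing weight to the value that balances the two estimation-error terms, $\lambda_g = N_e/(N_r + N_e) = k_e/(1+k_e)$, so that up to the negligible terms Proposition~\ref{proposition_e} collapses to
\begin{equation*}
    \mathcal{R}_t(\hat{f}_e) \precsim \epsilon' + 2(1-\lambda_g)\hat{\mathfrak{R}}_r(\mathcal{F}) + 2\lambda_g \hat{\mathfrak{R}}_e(\mathcal{F}) + C\sqrt{\frac{\ln(4/\delta)}{2(N_r + N_e)}},
\end{equation*}
for an absolute constant $C$; the appearance of $N_r + N_e = (1+k_e)N_r$ under the square root is precisely the quantitative benefit of augmentation. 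I would then compare this against the bound for the detector trained without augmentation, which in the paper's notation reads $\mathcal{R}_t(\hat{f}) \precsim 2\hat{\mathfrak{R}}_r(\mathcal{F}) + 4\sqrt{\ln(4/\delta)/(2N_r)}$. Since $k_e > 1$ the square-root term strictly decreases; and because $\hat{\mathfrak{R}}_e(\mathcal{F})$ is evaluated on a sample of size $N_e > N_r$, the standard $O(1/\sqrt{n})$ decay of empirical Rademacher complexity gives $\hat{\mathfrak{R}}_e(\mathcal{F}) \le \hat{\mathfrak{R}}_r(\mathcal{F})$, so the convex combination $(1-\lambda_g)\hat{\mathfrak{R}}_r(\mathcal{F}) + \lambda_g \hat{\mathfrak{R}}_e(\mathcal{F}) \le \hat{\mathfrak{R}}_r(\mathcal{F})$. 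With $\epsilon'$ negligible, the upper bound on $\mathcal{R}_t(\hat{f}_e)$ is therefore strictly below that on $\mathcal{R}_t(\hat{f})$, which yields $\sup\!\left(\mathcal{R}_t(\hat{f}_e)\right) < \sup\!\left(\mathcal{R}_t(\hat{f})\right)$.

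The main obstacle I expect is controlling the Rademacher complexity $\hat{\mathfrak{R}}_e(\mathcal{F})$ of the augmented distribution rigorously: the erasure operation could a priori change the covering numbers of $\mathcal{F}$ restricted to the augmented sample, so the inequality $\hat{\mathfrak{R}}_e(\mathcal{F}) \le \hat{\mathfrak{R}}_r(\mathcal{F})$ needs the ``not out-of-distribution'' hypothesis to be turned into a statement that the masking map does not inflate the relevant complexity measure — using that masking keeps the norm bound $B$ and the layerwise Frobenius norms $M_l$ unchanged. Making this fully precise (rather than ``approximately'', as the paper's style permits) would require a quantitative Lipschitz bound on the augmentation map; for the present purposes I would state it as an assumption consistent with the hypothesis already in the theorem.
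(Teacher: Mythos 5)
Your plan matches the paper's proof closely: both start from Proposition~2 (the augmentation analogue of Lemma~1), discard the empirical risk $\hat{\mathcal{R}}_{(r+e)}(\hat{f}_e)$ and the gap $\gamma_{\mathcal{F}}(\mathcal{D}_r,\mathcal{D}_e)$ by the ``in-distribution'' hypothesis, invoke sample-size monotonicity to get $\hat{\mathfrak{R}}_e(\mathcal{F}) < \hat{\mathfrak{R}}_r(\mathcal{F})$, and then compare the remaining estimation-error terms to the unaugmented bound $2\hat{\mathfrak{R}}_r(\mathcal{F}) + 4\sqrt{\ln(4/\delta)/(2N_r)}$. The concern you raise at the end about rigorously bounding $\hat{\mathfrak{R}}_e$ is left at the same informal level in the paper.

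Where you diverge is in the bookkeeping of $\lambda_g$ and the estimation-error terms, and this is where your sketch has a genuine gap. The paper keeps the three square-root terms of Proposition~2 separate, bounds $3(1-\lambda_g)\sqrt{\ln(4/\delta)/(2N_r)} + 3\lambda_g\sqrt{\ln(4/\delta)/(2N_e)}$ by $3\sqrt{\ln(4/\delta)/(2N_r)}$ using $(1-\lambda_g)+\lambda_g=1$ and $N_e>N_r$, and bounds the third term by maximizing $(1-\lambda_g)^2+\lambda_g^2/k_e \le 1$ over $\lambda_g\in[0,1)$, yielding a term-by-term comparison $3+1$ vs.\ $3+1$. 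You instead fix $\lambda_g = N_e/(N_r+N_e)$ and fold all three square-root terms into a single $C\sqrt{\ln(4/\delta)/(2(N_r+N_e))}$, then assert that ``since $k_e>1$ the square-root term strictly decreases.'' That claim is not sound unless you track $C$: the crude merge of the three terms at your chosen $\lambda_g$ gives $C\le 7$, and $7/\sqrt{N_r+N_e} < 4/\sqrt{N_r}$ requires $k_e > 33/16 \approx 2.06$, not just $k_e>1$. (A finer computation at your $\lambda_g$ --- namely $\left(3\sqrt{N_r}+3\sqrt{N_e}+\sqrt{N_r+N_e}\right)/(N_r+N_e) < 4/\sqrt{N_r}$, i.e.\ $3+3\sqrt{k_e}+\sqrt{1+k_e} < 4(1+k_e)$ --- does hold for all $k_e\ge 1$, so your route is ultimately salvageable, but you must carry the constants rather than hide them in $C$; the paper's term-by-term bounding sidesteps this entirely.) As a secondary remark, the paper's own maximization argument silently excludes $\lambda_g=0$ (where its bounds are all tight and the strict ``$<$'' fails); fixing $\lambda_g>0$ as you do, or bounding only over $\lambda_g\in(0,1)$, is the cleaner way to state it.
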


\begin{proof}
    Based on Lemma \ref{bound}, we can get the generalization error of the detector $\hat{f}$ trained without augmented data:
    \begin{equation} \label{}
        \begin{aligned}
            \mathcal{R}_{t} \left ( \hat{f} \right) & \le \mathcal{R}_{r}\left( \hat{f} \right ) + 2 \hat{\mathfrak{R}}_{r}(\mathcal{F}) + 4  \sqrt{\frac{\ln (4 / \delta)}{2 N_{r}}},
        \end{aligned}
    \end{equation}
    the empirical risk $\mathcal{R}_{r}\left( \hat{f} \right )$ can be minimized to arbitrary small, close to 0. The empirical risk $\hat{\mathcal{R}}_{(r+e)}(\hat{f}_{e})$ in Equation \ref{proposition_e} can be minimized tending to 0. $N_e = k_e N_r, k_e > 1, N_r < N_e$.  For the domain gap $\gamma_{\mathcal{F}}(\mathcal{D}_{r}, \mathcal{D}_{e})$, it has been assumed in Theorem \ref{EXAU_sup} that the generated samples are not out-of-distribution samples, and in our detector model training, we force the augmented sample features to be close to the prototype of the real category, so $\gamma_{\mathcal{F}}(\mathcal{D}_{r}, \mathcal{D}_{e})$ is very small and tends to 0. While,
    \begin{equation*} \label{}
        \hat{\mathfrak{R}}(\mathcal{F}):=\mathrm{E}_{\sigma} \sup _{f \in \mathcal{F}}\left[\frac{1}{N} \sum_{n=1}^{N} \sigma_{n} f\left(\mathbf{x}_{n}\right)\right],
    \end{equation*}
    because the sample distribution of $\mathcal{D}_{r}$ and $\mathcal{D}_{e}$ has a small gap when the model converges, the main difference between $\hat{\mathfrak{R}_r}(\mathcal{F})$ and $\hat{\mathfrak{R}_e}(\mathcal{F})$ depends on the sample size $N$, so $\hat{\mathfrak{R}_e} < \hat{\mathfrak{R}_r}$.
    Thus:
    \begin{equation} \label{proof2_1}
        \begin{aligned}
            \mathcal{R}_{t}(\hat{f}_{e})  \precsim & \; 2 (1 - \lambda_g) \hat{\mathfrak{R}}_{r}(\mathcal{F})  + 3 (1 - \lambda_g) \sqrt{\frac{\ln (4 / \delta)}{2 N_{r}}}\\
            &  + 2 \lambda_g \hat{\mathfrak{R}}_{e}(\mathcal{F})  + 3 \lambda_g \sqrt{\frac{\ln (4 / \delta)}{2 N_{e}}} \\
            & + \sqrt{\frac{\ln (4 / \delta)}{2}\left ( \frac{(1-\lambda_g)^2}{N_r} + \frac{\lambda_g^2}{N_e} \right)} \\
            < & \; 2 \hat{\mathfrak{R}}_{r}(\mathcal{F}) + 3 \sqrt{\frac{\ln (4 / \delta)}{2 N_{r}}} \\
            & + \sqrt{\frac{\ln (4 / \delta)}{2}\left ( \frac{(1-\lambda_g)^2 + \lambda_g^2 / k_e}{N_r} \right)}
            ,
        \end{aligned}
    \end{equation}
    for $(1-\lambda_g)^2 + \lambda_g^2 / k_e$ in the last term of Equation \ref{proof2_1}, where $0\le \lambda_g < 1$, have a minimum value when $\lambda_g=\frac{1}{1/k_e + 1}>1$. Therefore, when $\lambda_g=0$, this item obtains the maximum value $\sqrt{\frac{\ln (4 / \delta)}{2 N_r}}$. Thus:
    \begin{equation} \label{}
        \begin{aligned}
            \mathcal{R}_{t}(\hat{f}_{e}) < & \; 2 \hat{\mathfrak{R}}_{r}(\mathcal{F}) + 4 \sqrt{\frac{\ln (4 / \delta)}{2 N_{r}}},
        \end{aligned}
    \end{equation}
    therefore we can prove that:
    \begin{equation*}
        \sup \left( \mathcal{R}_{t}\left( \hat{f}_e \right ) \right ) < \sup \left( \mathcal{R}_{t}\left( \hat{f} \right ) \right ).
    \end{equation*}
\end{proof}

\section{More Experiments}\label{sec:more_experiment}

\begin{figure*}[!t]
    \centering
    \setlength{\abovecaptionskip}{0.cm}
    \includegraphics[width =\textwidth]{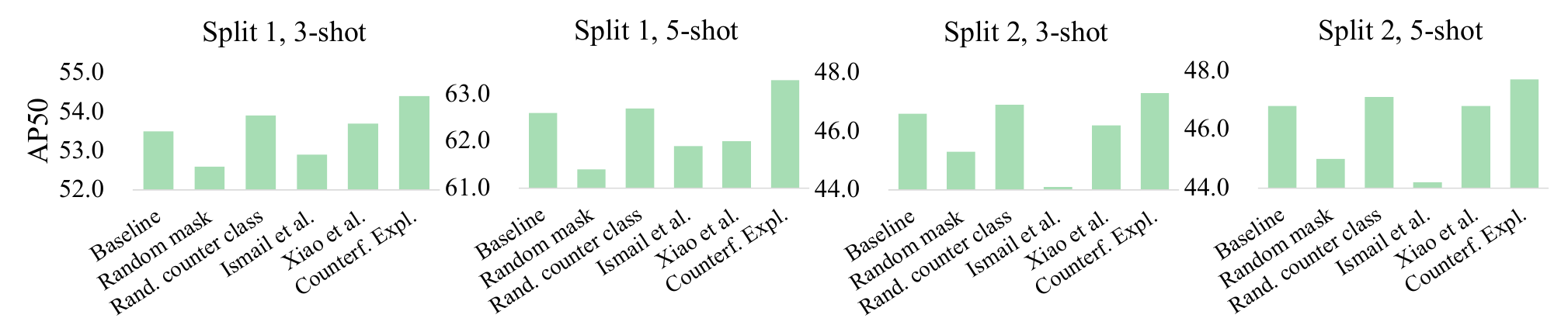}
    \caption{Comparison with different data augmentation strategies. Counterfactual data augmentation guided by a knowledge matrix achieves the best performance.}
    \label{counterfactual_experiment}
    \vspace{-16pt}
\end{figure*}

In this section, we provide a detailed report of additional experimental results, encompassing both ablation study and hyperparameter validation.

\subsection{Ablation for the Counterfactual Data Augmentation}

Compared with the common data augmentation tricks, our proposed counterfactual augmentation method is specifically tailored for FSOD tasks and intricately linked with CCL. It serves to prevent the overfitting of CCL during the process of distinguishing between base and novel categories. The counterfactual data augmentation module utilizes the knowledge matrix to identify and select similar counter classes and generate explainable counterfactual saliency maps to identify the current discriminative region of the model. Based on the counterfactual saliency maps, we use the masking mechanism to augment the few-shot data so that the mined sample features are closer to the decision boundary and jointly trained to improve the generalization of the model. The augmented samples are forwarded to CCL for contrastive learning alongside real features, aiming to broaden the feature distribution of novel categories.

We compare our method with random mask augmentation~\cite{zhong2020random}, which is a common augmentation trick for general object detection. We analyze the impact of integrating a knowledge matrix into counterfactual enhancement, focusing on its role in refining the discriminative boundaries between similar base and novel categories.
Fig.~\ref{counterfactual_experiment} shows the results, where we find that in the setting with multiple splits or shots, traditional random mask augmentation leads to performance degradation. This suggests that it is not well-suited for few-shot object detection tasks. When the knowledge matrix is not introduced, the performance improvement is marginal. Performance can be effectively improved by introducing a knowledge matrix, which enhances the discriminative space between specific base and novel classes, and by calibrating the decision boundary.

We also compared methods based on explanation-guided data augmentation (Ismail \textit{et al.}\cite{ismail2021improving} and Xiao \textit{et al.}\cite{xiao2023masked}). We find that the important region enhancement method (Ismail \textit{et al.}) degrades baseline performance across all settings, while the key region masking method (Xiao \textit{et al.}) provides only slight improvements in some cases and occasionally degrades the baseline model. Therefore, these approaches may not be well-suited for few-shot scenarios.

Thus, our proposed counterfactual data augmentation method effectively enhances the distinction between base and novel categories, which is related to the topic of this paper. Additionally, it can also be used to explain the model, making the training process transparent and more interpretable.

\subsection{Effect of Hyper-parameters}

To evaluate the effectiveness of our model, we validate some hyperparameter values and use $AP_{50}$ as the evaluation metric. We employ TFA++~\cite{sun2021fsce} as the baseline model and search to identify the optimal hyperparameters.

\textbf{Effect of temperature parameter $\tau$.} We investigate the effect of the temperature hyper-parameter in the contrastive learning loss on the model. Table \ref{tau} shows the results from 10-shot of PASCAL VOC split 1. We found that the results of the model are sensitive to the value of $\tau$, and the best result is achieved when the value of $\tau$ is 0.2. Thus, in this paper, we set $\tau$ to 0.2.

\begin{table}[h]
    \centering
    \caption{Ablation for contrastive hyper-parameter $\tau$, results of PASCAL VOC Split 1 10-shot. The parameter $\tau$ affects the contrast of the CCL model.}
    \vspace{-10pt}
    \resizebox{0.30 \textwidth}{!}{
        \begin{tabular}{ccccc}
            \toprule
            $\tau$  & 0.07 & 0.2 & 0.5 & 1 \\ \midrule
            $AP_{50}$ & 64.8  & $\mathbf{65.2}$     &  64.6   & 63.2   \\ \bottomrule
        \end{tabular}
    }
    \label{tau}
\end{table}

\textbf{Effect of balance parameter $\lambda$ in loss function.} To validate the impact of the weight $\lambda_1, \lambda_2$ and $\lambda_3$ of the loss functions on the model performance, we manually set different values for hyper-parameters.
Table \ref{lambda3} shows the effect of parameter $\lambda_3$ from 10-shot of PASCAL VOC split 1, where $\lambda_1$ and $ \lambda_2$ are all set to 1. We found that FSOD performance is insensitive to the value of the balance parameter $\lambda_3$, and in this paper, we set the balance parameter $\lambda$ to 1.
Table \ref{lambda1} shows the effect of parameters $\lambda_1$ and $\lambda_2$, where $\lambda_3$ is set to 1. We find that when setting both $\lambda_1$ and $\lambda_2$ to 1, the model achieves the highest performance. Thus, in this paper, we set $\lambda_1$ and $\lambda_2$ to 1.

\begin{table}[h]
    \centering
    \caption{Ablation for balance parameter $\lambda_3$, results of PASCAL VOC Split 1 10-shot, where $\lambda_1$ and $ \lambda_2$ are all set to 1. The parameter $\lambda_3$ controls the weights of the CCL loss function.}
    \vspace{-10pt}
    \begin{tabular}{ccccc}
        \toprule
        $\lambda_3$ & 0.5 & 1 & 1.5 & 2 \\ \midrule
        $AP_{50}$   & 64.6    & $\mathbf{65.2}$   & 64.5    & 64.7   \\ \bottomrule
    \end{tabular}
    \label{lambda3}
\end{table}
\begin{table}[h]
    \centering
    \caption{Ablation for balance parameters $\lambda_1$ and $\lambda_2$, results of PASCAL VOC Split 1 10-shot, where $\lambda_3$ is set to 1.}
    \vspace{-10pt}
    \begin{tabular}{ccccc}
        \toprule
        \multicolumn{2}{c}{\multirow{2}{*}{Hyperparameters}} & \multicolumn{3}{c}{$\lambda_1$} \\ \cmidrule(l){3-5}
        \multicolumn{2}{c}{}                                 & 0.5     & 1       & 2       \\ \midrule
        \multirow{3}{*}{$\lambda_2$}            & 0.5            & 63.6    & 64.4    & 63.7    \\
        & 1              & 63.5    & \textbf{65.2}    & 64.1    \\
        & 2              & 64.1    & 64.3    & 63.8    \\ \bottomrule
    \end{tabular}
    \label{lambda1}
\end{table}

\section{More Visualization}

\subsection{Visualization of Prototype}

We visualize the prototype features at the end of detector training. We use TSNE to reduce prototype features to 2 dimensions and normalize the feature to observe cosine similarity between different categories. In Fig. \ref{prototype}, we visualize the distribution of prototype features with and without knowledge matrix, respectively, on the PASCAL VOC dataset split 2 under 10 shots setup. We observe that some similar categories, such as horse and cow (both are novel categories), sofa (novel category), and dining table (base category), have very small angles of prototype cluster center features. When the detector is equipped with the knowledge matrix, we can expand the angle between features of similar prototype cluster centers to learn more discriminative features.

\begin{figure}[h]
    \centering
    \setlength{\abovecaptionskip}{0.cm}
    \includegraphics[width = 0.47 \textwidth]{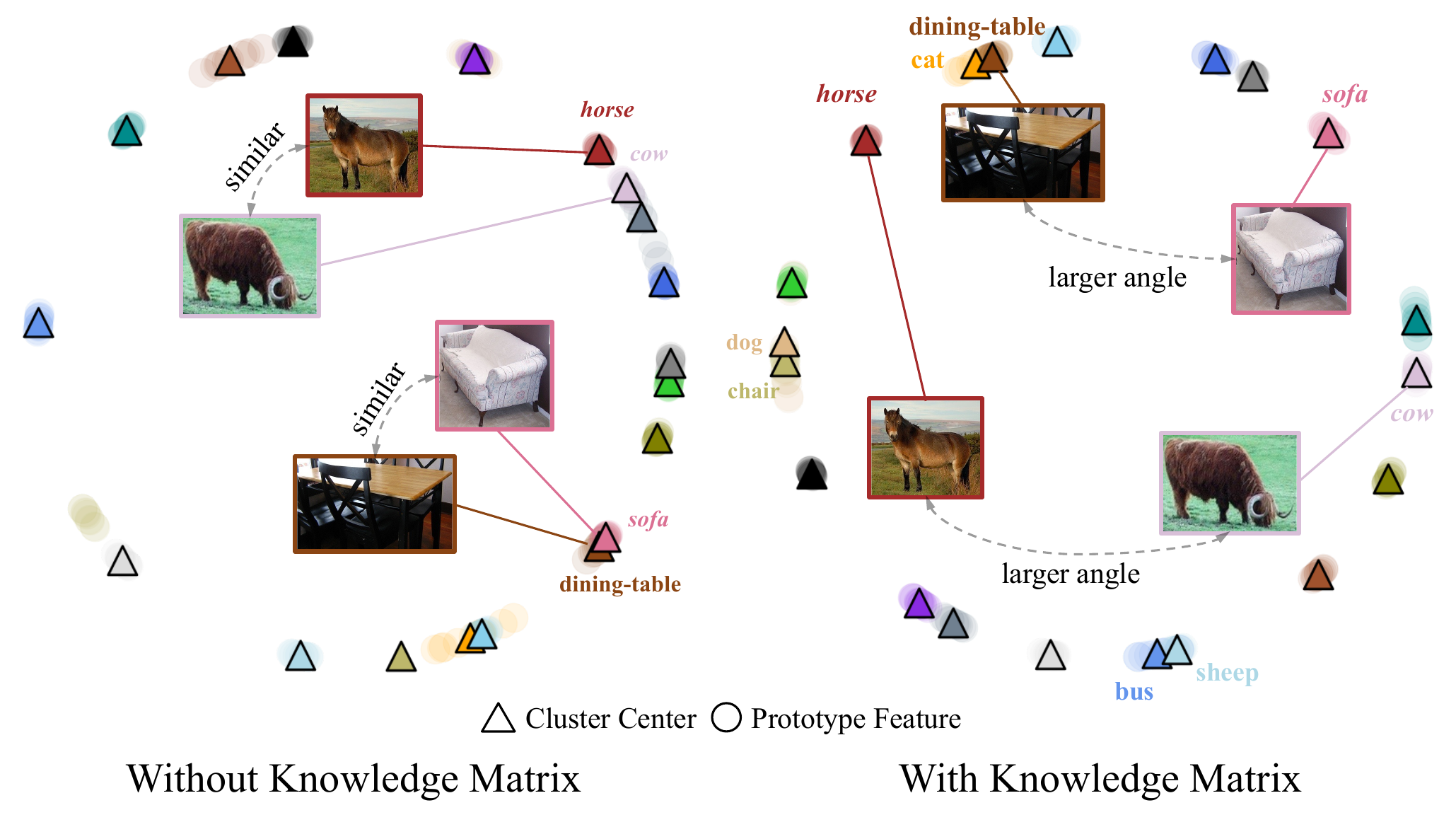}
    \caption{Visualization of prototype cluster centers on PASCAL VOC dataset split 2 under the 10-shot setup. The novel categories are presented in italic font.}
    \label{prototype}
    \vspace{-16pt}
\end{figure}

\section{Discussion and Limitation} \label{limit}

In Fig.~\ref{discuss}, we present several visualizations that demonstrate our method's failure cases, where certain objects are incorrectly recognized as belonging to different categories. Through our analysis, we identified that the smoke and color in the first image, the angle in the second image, and the pose in the third image were factors that caused the model's reasoning to be misled. These are special attributes that the model does not learn but do exist. We consider the following points for improvement: (i) Our method only focuses on overall visual attributes rather than individual visual attributes, but we believe that in future work, we should eliminate the biased attributes to obtain better performance. (ii) Besides, we only consider the relationship between objects and objects, not the relationship between objects and background, which may lead to some biases. Future work should also focus on intervening with context-induced recognition biases.

\begin{figure}[h]
    \centering
    \setlength{\abovecaptionskip}{0.cm}
    \includegraphics[width =  0.48 \textwidth]{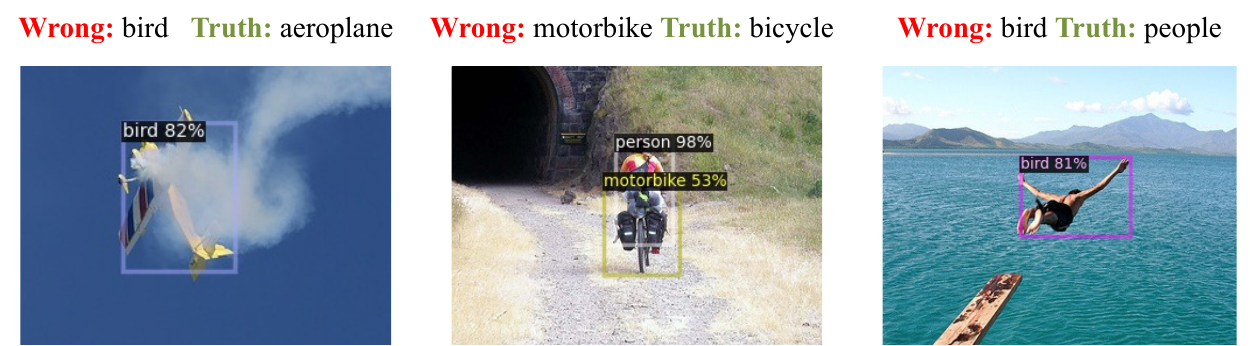}
    \caption{Visualization of some failed cases of our model.}
    \label{discuss}
\end{figure}

\end{appendices}

\end{document}